\newtheorem{theorem}{Theorem}
\newtheorem{lemma}{Lemma}
\newtheorem{corollary}{Corollary}
\newtheorem{definition}{Definition}
\newtheorem{assumption}{Assumption}
\def\BibTeX{{\rm B\kern-.05em{\sc i\kern-.025em b}\kern-.08em
    T\kern-.1667em\lower.7ex\hbox{E}\kern-.125emX}}
\def\authorrefmark#1{\ensuremath{^{\textbf{#1}}}}
\begin{document}
\bibliographystyle{IEEEtran}
\receiveddate{XX Month, XXXX}
\reviseddate{XX Month, XXXX}
\accepteddate{XX Month, XXXX}
\publisheddate{XX Month, XXXX}
\currentdate{XX Month, XXXX}
\doiinfo{TMLCN.2022.1234567}

\markboth{}{Author {et al.}}

\title{
Convergence-Privacy-Fairness Trade-Off in Personalized Federated Learning
 }

\author{Xiyu~Zhao\authorrefmark{1,2},
Qimei~Cui\authorrefmark{1,3} (Senior~Member,~IEEE), Weicai~Li\authorrefmark{1,4}, Wei Ni\authorrefmark{4} (Fellow,~IEEE), Ekram Hossain\authorrefmark{5} (Fellow,~IEEE),
Quan~Z. Sheng\authorrefmark{2},
Xiaofeng Tao\authorrefmark{1,3} (Senior Member,~IEEE), and
Ping Zhang\authorrefmark{1,3} (Fellow,~IEEE)}
\affil{School of Information and Communication Engineering, Beijing University of Posts and Telecommunications, Beijing 100876, China}
\affil{School of Computing, Macquarie University, Sydney, NSW 2109, Australia}
\affil{Department of Broadband Communication, Peng Cheng Laboratory, Shenzhen 518055, China}
\affil{School of Electrical and Data Engineering, University of Technology Sydney, Sydney, NSW 2007, Australia}
\affil{Department of Electrical and Computer Engineering, University of Manitoba, Winnipeg, MB R3T 2N2, Canada}
\corresp{Corresponding author: Qimei Cui (email: cuiqimei@bupt.edu.cn).}

\authornote{The work was supported by the Joint funds for Regional Innovation and Development of the National Natural Science Foundation of China(No.U21A20449) and the Fundamental Research Funds for the Central Universities (No.2242022k60006) .}

	\begin{abstract}
Personalized federated learning (PFL), e.g., the renowned Ditto, strikes a balance between personalization and generalization by conducting federated learning (FL) to guide personalized learning (PL). 
While FL is unaffected by personalized model training, in Ditto, PL depends on the outcome of the FL. However, the clients' concern about their privacy and consequent perturbation of their local models can affect the convergence and (performance) fairness of PL. 
This paper presents PFL, called DP-Ditto, which is a non-trivial extension of Ditto under the protection of differential privacy (DP), and analyzes the trade-off among its privacy guarantee, model convergence, and performance distribution fairness. 
We also analyze the convergence upper bound of the personalized models under DP-Ditto and derive the optimal number of global aggregations given a privacy budget. Further, we analyze the performance fairness of the personalized models, and reveal the feasibility of optimizing DP-Ditto jointly for convergence and fairness.
Experiments validate our analysis and demonstrate that DP-Ditto can surpass the DP-perturbed versions of the state-of-the-art PFL models, such as FedAMP, pFedMe, APPLE, and FedALA, by over $32.71\%$ in fairness and $9.66\%$ in accuracy.

	\end{abstract}
	\begin{IEEEkeywords}
	Personalized federated learning, differential privacy, convergence, fairness.
	\end{IEEEkeywords}

\maketitle
 
	\section{Introduction}

\IEEEPARstart{R}{Recently}, {personalized federated learning (PFL) has been proposed, such as federated learning (FL) with meta-learning. It tailors learning processes to provide personalized models for individual clients while benefiting from the global perception offered by FL, thereby capturing both generalization and personalization in the models.
PFL can strike a balance between personalized models and the global model, e.g., via a global-regularized multi-task framework~\cite{li2021ditto}.
It can provide customized services for applications with heterogeneous local data distributions or tasks, e.g., intelligent Internet of Things networks with geographically dispersed clients~\cite{zhang2021optimizing,zhang2024det,cui2022multi}.}
A popular PFL technique is called Ditto, which was developed to adapt to the heterogeneity in FL settings by simultaneously learning a global model and distinct personal models for multiple agents~\cite{li2021ditto}. 


While preserving personal data locally throughout its training process, similar to FL, PFL can still suffer from privacy leakage due to the incorporation of FL. Differential privacy (DP)~\cite{abadi2016deep} can be potentially applied to protect the privacy of Ditto.
In each round, every client trains two models separately, including its local model and personalized model, based on its local dataset and the global model broadcast by the server in the last round. With DP, the clients perturb their local models by adding noise before uploading them to the server, where the perturbed local models are aggregated to update the global model needed for the clients to train their local models and personalized models further. This privacy-preserving PFL model, namely DP-Ditto, is a new framework.
{It is important to carefully balance convergence, privacy, and fairness in DP-Ditto.  However, the
impact of privacy preservation, i.e., the incorporation of DP, on the convergence and fairness of the personalized models has not been investigated in the literature. Let alone a PFL design with a balanced consideration between convergence, privacy, and fairness.}

Although privacy and fairness have been studied separately in the contexts of both FL and PFL, e.g.,~\cite{zhang2022federated,wei2021low,elgabli2021harnessing,wei2020federated,zhao2020local,truex2020ldp,yuan2023amplitude,chen2022feddual,hu2020personalized,liu2022privacy,sun2021pain,liu2024differentially,okegbile2023differentially,park2023differential,yan2024peaches,wei2023personalized}, they have never been considered jointly in PFL, such as Ditto~\cite{li2021ditto}. Their interplay has been overlooked. 
The majority of the existing studies, e.g.,~\cite{t2020personalized,li2020federated,huang2021personalized,luo2022adapt,zhang2023fedala,fallah2020personalized,wei2023personalized,li2019fedmd,you2022semi,zhang2023federated}, have focused on the model accuracy of PFL.
Some other existing works, e.g.,~\cite{li2020fair,hu2022federated,li2021ditto}, have attempted to improve the performance distribution fairness of FL.
None of these studies has addressed the influence of DP on the fairness and accuracy of PFL.

This paper studies the trade-off between privacy guarantee, model convergence, and performance distribution fairness of privacy-preserving PFL, more specifically, DP-Ditto. 
We analyze the convergence upper bound of the personalized learning (PL) and accordingly optimize the aggregation number of FL given a privacy budget. We also analyze the fairness of PL in DP-Ditto on a class of linear problems, revealing the possibility of maximizing the fairness of privacy-preserving PFL given a privacy budget and aggregation number.
{
 To the best of our knowledge, this is the first work that investigates the trade-off among the privacy, convergence, and fairness of PFL, and optimizes PFL for convergence under the constraints of performance distribution fairness and privacy requirements.}
The major contributions of this paper are summarized as follows:
\begin{itemize}
\item 
We propose a new privacy-preserving PFL framework, i.e., DP-Ditto, by incorporating an $(\epsilon,\delta)$-DP perturbation mechanism into Ditto.
This extension is non-trivial and necessitates a delicate balance between convergence, privacy, and fairness.
\item 
A convergence upper bound of DP-Ditto is derived, capturing the impact of DP on the convergence of personalized models. 
The number of global aggregations is identified by minimizing the convergence upper bound.  
\item 
We analyze the fairness of DP-Ditto on a class of linear problems to reveal the conditional existence and uniqueness of the optimal setting balancing convergence and fairness, given a privacy requirement. 
\end{itemize}

Extensive experiments validate our convergence and fairness analysis of DP-Ditto and the obtained optimal FL aggregation number and weighting coefficients of FL versus PL. Three image classification tasks are performed using deep neural network (DNN), multi-class linear regression (MLR), and convolutional neural network (CNN) on the Federated MNIST, Federated FMNIST, and Federated CIFAR10 datasets. 
DP-Ditto can outperform its benchmarks, i.e., the DP-perturbed FedAMP~\cite{huang2021personalized}, pFedMe~\cite{t2020personalized}, APPLE~\cite{luo2022adapt}, and FedALA~\cite{zhang2023fedala}, by $99.98\%$, $32.71\%$, $97.04\%$, and $99.72\%$, respectively, in fairness and $59.06\%$, $9.66\%$, $28.67\%$, and $64.31\%$ in accuracy.

The rest of this paper is structured as follows. Section II presents a review of related works. Section III outlines the system and threat models of DP-Ditto and analyzes its privacy and DP noise variance. In Section~IV, the convergence upper bound of DP-Ditto is established, and the optimal number of FL global aggregations is obtained accordingly. In Section~V, we analyze the fairness of PL on a class of linear regression problems to demonstrate the feasibility of fairness maximization. The experimental results are discussed in Section~VI. The conclusions are given in Section~VII.

\section{Related Work}     

\subsection{Personalization}
PFL frameworks have been explored to combat statistical heterogeneity through transfer learning (TL) \cite{li2019fedmd}, meta-learning \cite{fallah2020personalized,wei2023personalized}, and other forms of multitask learning (MTL) \cite{t2020personalized,li2020federated,huang2021personalized,luo2022adapt,zhang2023fedala}. 
None of these has addressed the fairness among the participants of PFL.
TL 
conveys knowledge from an originating domain to a destination domain.
TL-based FL enhances personalization by diminishing domain discrepancy of the global and local models~\cite{tan2022towards}. 
FedMD \cite{li2019fedmd} 
is an FL structure grounded in TL and knowledge distillation (KD), enabling clients to formulate autonomous models utilizing their individual private data. Preceding the FL training and KD, TL is implemented by employing a model previously trained on a publicly available dataset.

Meta-learning finds utility in FL in enhancing the global model for rapid personalization.
In~\cite{fallah2020personalized}, a 
variation of FedAvg, named Per-FedAvg, was introduced, leveraging
the Model-Agnostic Meta-Learning (MAML). 
It acquired a proficient initial global model that is effective on a novel heterogeneous task and can be achieved through only a few gradient descent steps.
You \textit{et al.} \cite{you2022semi} further proposed a Semi-Synchronous Personalized FederatedAveraging (PerFedS) mechanism based on MAML, where the server sends a meta-model to a set of UEs participating in the global updating and the stragglers in each round.
In another meta-learning-based PFL framework~\cite{wei2023personalized}, a privacy budget allocation scheme based on R\'{e}nyi DP composition theory was designed to address information leakage arising from two-stage gradient descent.

MTL trains a model to simultaneously execute several related tasks. 
By considering an FL client as a task, there is the opportunity to comprehend the interdependence among the clients manifested by their diverse local data.
In~\cite{t2020personalized}, pFedMe 
employing Moreau envelopes as the regularized loss functions for clients was recommended to disentangle the optimization of personalized models from learning the global model.
The global model is obtained by aggregating the local models updated based on the personalized models of the clients. Each client's personalized model maintains a bounded distance from the global model.
In \cite{li2020federated}, 
FedProx was formulated by incorporating a proximal term into the local subproblem. Consequently, the contrast was delineated between the global and local models to ease the modulation of the influence of local updates.
In \cite{zhang2023federated}, a federated multi-task learning (FMTL) framework was developed, where the server broadcasts a set of global models aggregated based on the local models of different clusters of clients, and each client selects one of the global models for its local model updating.

Huang \textit{et al.} \cite{huang2021personalized} 
integrated PFL with supplementary terms and employed a federated attentive message passing (FedAMP) strategy to mitigate the impact of diverse data.
Consequently, the convergence of the FedAMP was guaranteed. 
A protocol named APPLE \cite{luo2022adapt} was proposed to improve the personalized model of each client based on the others' models. Clients obtain the personalized models locally by aggregating the core models of other clients downloaded from the server. The aggregation weights and the core models are locally learned from the personalized model by adding a proximal term to the local objectives. Instead of overwriting the old local model with the downloaded global model, FedALA \cite{zhang2023fedala} aggregates the downloaded global model and the old local model for local model initialization.

These existing PFL frameworks~\cite{li2019fedmd,fallah2020personalized,wei2023personalized,t2020personalized,li2020federated,huang2021personalized,luo2022adapt,zhang2023fedala} have focused primarily on model accuracy. None of these has taken the fairness of the personalized models into consideration.

\subsection{Privacy}
Existing studies \cite{wei2020federated,zhao2020local,truex2020ldp,yuan2023amplitude,chen2022feddual} 
have explored ways to integrate privacy techniques into FL to provide a demonstrable assurance of safeguarding privacy.
However, little to no consideration has been given to the personalization of learning models and their fairness.
In \cite{wei2020federated}, a DP-based framework was 
suggested to avert privacy leakage by introducing noise to obfuscate the local model parameters.
In \cite{zhao2020local}, three local DP (LDP) techniques 
were devised to uphold privacy, where LDP was incorporated into FL to forecast traffic status, mitigate privacy risks, and diminish communication overhead in crowd-sourcing scenarios.
The authors of \cite{truex2020ldp} 
suggested FL with LDP, wherein LDP-based perturbation was applied during model uploading, adhering to individual privacy budgets.
Liu \textit{et al.} \cite{liu2024differentially} proposed a transceiver protocol to maximize the convergence rate under privacy constraints in a MIMO-based DP FL system, where a server performs over-the-air model aggregation and parallel private information extraction from the uploaded
local gradients with a DP mechanism.

In \cite{yuan2023amplitude}, DP noises were adaptively added to local model parameters to preserve user privacy during FL. The amplitude of DP noises was adaptively adjusted to balance preserving privacy and facilitating convergence. 
FedDual \cite{chen2022feddual} was designed to preserve user privacy by adding DP noises locally and aggregating asynchronously via a gossip protocol. Noise-cutting was adopted
to alleviate the impact of the DP noise on the global model. 
Hu \textit{et al.} \cite{hu2020personalized} proposed privacy-preserving PFL using the Gaussian mechanism, which provides a privacy guarantee by adding Gaussian noise to the uploaded local updates. 
In \cite{liu2022privacy}, the Gaussian mechanism was considered in a mean-regularized MTL framework, and the accuracy was analyzed for single-round FL using a Bayesian framework. In~\cite{wei2023personalized}, the allocation of a privacy budget was considered for meta-learning-based PFL.
In \cite{okegbile2023differentially}, differentially private federated MTL (DPFML) was designed for human digital twin systems by integrating DPFML and a computational-efficient blockchain-enabled validation process.

These existing works~\cite{wei2020federated,zhao2020local,truex2020ldp,yuan2023amplitude,chen2022feddual,hu2020personalized,liu2022privacy,wei2023personalized} have given no consideration to fairness among the participants in FL, especially in the presence of statistical heterogeneity.

\subsection{Fairness}

Some existing studies, e.g., \cite{li2020fair,hu2022federated,li2021ditto}, have attempted to improve performance distribution fairness, i.e., by 
mitigating the variability in model accuracy among different clients.
Yet, none has taken user privacy into account.
In \cite{li2020fair}, $q$-FFL was proposed to achieve a more uniform accuracy distribution across clients. A parameter $q$ was used to re-weight the aggregation loss by assigning bigger weights to clients undergoing more significant losses.
In \cite{hu2022federated}, FedMGDA+ 
was suggested to enhance the robustness of the model while upholding fairness with positive intentions.
A multi-objective problem 
was structured to diminish the loss functions across all clients. It was tackled by employing Pareto-steady resolutions to pinpoint a collective descent direction suitable for all the chosen clients.
Li \textit{et al.} \cite{li2021ditto} designed a scalable federated MTL framework Ditto, which simultaneously learns personalized and global models in a global-regularized framework. 
Regularization was introduced to bring the personalized models in proximity to the optimal global model.
The optimal weighting coefficient of Ditto was designed in terms of fairness and robustness.
These studies~\cite{li2020fair,hu2022federated,li2021ditto} have overlooked privacy risks or failed to address the influence of DP on fairness.

\section{Framework of PFL}

\subsection{PFL}
PFL consists of a server and $N$ clients. $\mathbb{N}$ denotes the set of clients. $\mathcal{D}_n$ denotes the local dataset at client $n \in \mathbb{N}$. $\mathcal{D}$ is the collection of all data samples. $\left|\mathcal{D}\right|={\sum}_{n=1}^{N}\left|\mathcal{D}_{n}\right|$ is the total size of all data samples, $\left|\cdot\right|$ stands for cardinality. 
Like Ditto, PFL has both global and personal objectives for FL and PL, respectively. At the server, the global objective is to learn a global model with the minimum global training loss:
\begin{equation}
    \label{GlobalObj}
    \underset{\boldsymbol{\omega}}{\min}\, F(F_{1}(\boldsymbol{\omega}),\cdots,F_{N}(\boldsymbol{\omega}))  \,,
\end{equation}
{where $\boldsymbol{\omega} \in \mathbb{R}^{d}$ is the model parameter with $d$ elements}, $F_{n}(\cdot)$ is the local loss function of client $n \in \mathbb{N}$, and $F(\cdot, \cdots, \cdot)$ is the global loss function:
\begin{equation}
    \label{g_loss}
    F(F_{1}(\boldsymbol{\omega}),\cdots,F_{N}(\boldsymbol{\omega}))={\sum}_{n=1}^{N}p_nF_n(\boldsymbol{\omega}) \,,
\end{equation}
where $p_{n}\triangleq\left|\mathcal{D}_{n}\right|/\left|\mathcal{D}\right|$ with ${\sum}_{n=1}^{N}p_n=1$. We assume the size of each client's local dataset is the same, i.e., $p_n=\frac{1}{N}$. 

{To capture both generalization and personalization} as in Ditto, for client $n$, we encourage its personalized model to be close to the optimal global model, i.e.,
	\begin{subequations} 
        \label{objective_p}
	\begin{align}
     \underset{\boldsymbol{\varpi}_{n}}{\min} &\,f_{n}(\boldsymbol{\varpi}_{n};\boldsymbol{\omega}^{\ast})\!=\!\left(1\!-\!\frac{\lambda}{2}\right)F_{n}(\boldsymbol{\varpi}_{n})\!+\!\frac{\lambda}{2}\parallel\!\boldsymbol{\varpi}_{n}\!-\!\boldsymbol{\omega}^{\ast}\!\parallel^{2} \label{fn} \\
     \label{fn_1}
     \textrm{s.t.}  &\,\,\boldsymbol{\omega}^{\ast}=\underset{\boldsymbol{\omega}}{\arg\min}\, \frac{1}{N} \sum_{n=1}^{N} F_{n}\left(\boldsymbol{\omega}\right) \,, 
	\end{align}
	\end{subequations}
 where $f_n(\cdot)$ is the loss function of the personalized model; $\lambda \in [0,2]$ is a {weighting coefficient} that controls the trade-off between the global and local models. When $\lambda=0$, PFL trains a local model for each client based on its local datasets. When $\lambda=2$, the personal objective becomes obtaining an optimal global model with no personalization. Let $\boldsymbol{u}_{n}^{\ast}$ and $\boldsymbol{\varpi}_{n}^{\ast}$ be the optimal local model based on the local data and the optimal personalized model, i.e.,
	\begin{align}
    & \boldsymbol{u}_{n}^{\ast}=\underset{\boldsymbol{u}_{n}}{\arg\min}\,F_{n}(\boldsymbol{u}_{n})  \,;
    \boldsymbol{\varpi}_{n}^{\ast}=\underset{\boldsymbol{\varpi}_{n}}{\arg\min}\,f_{n}(\boldsymbol{\varpi}_{n};\boldsymbol{\omega}^{\ast}) \,. \label{Optimal_parameter} 
	\end{align}
According to (\ref{GlobalObj})--(\ref{objective_p}), a local model $\boldsymbol{u}_n$ is trained for global aggregation. A personalized model $\boldsymbol{\varpi}_{n}$ is adjusted according to local training and the global model at each client $n$. The training of the global model and that of the personalized models are assumed to be synchronized, i.e., at each round $t+1$. Client $n$ updates its personalized model $\boldsymbol{\varpi}_{n}^{t+1}$ based on the global model $\boldsymbol{\omega}^t$ updated at the $t$-th round. 

\begin{table}[t]
\caption{Notation and definitions}
\label{parameters1}
\centering
\begin{tabular}{*{2}{l}}
\hline
\textbf{Notation} & \textbf{Definition} \\ 
\hline
$\mathcal{D}$, $\mathcal{D}_n$ & The local datasets of all clients and client $n$ \\
$F(\cdot)$, $F_n(\cdot)$ & The global and the local loss function of client $n$\\
$f_n(\cdot)$ & The loss function of client $n$'s personalized model \\
$\nabla F(\cdot)$ & Gradient of a function $F(\cdot)$ \\
$g_{n}(\boldsymbol{\varpi}_{n}^{t};\boldsymbol{\omega}^{t})$ & The stochastic gradient of $f_{n}(\boldsymbol{\varpi}_{n}^{t};\boldsymbol{\omega}^{t})$ \\
$\boldsymbol{\omega}$, $\boldsymbol{\omega}^{\ast}$ & The global and optimal global models  \\
$\boldsymbol{u}_{n}$, $\boldsymbol{u}_{n}^{\ast}$ & The local and optimal local models of client $n$ \\
$\boldsymbol{\varpi}_{n}$, $\boldsymbol{\varpi}_{n}^{\ast} $ & The personalized and optimal personalized models \\
&of client $n$ \\ 
$\overset{\sim}{\boldsymbol{\omega}}^{t}$ & The global model with DP noise at $t$-th aggregation\\
$\overset{\sim}{\boldsymbol{u}}_{n}^{t}$, $\overset{\sim}{\boldsymbol{\varpi}}_{n}^{t}$ & The local and personalized models of client $n$ \\ &with DP noise at $t$-th aggregation\\
$\mathbf{z}_n^t$& The DP noise added by client $n$ at $t$-th aggregation \\
$\mathbf{z}^t$ & The equivalent noise term imposed on the global\\ & model after $t$-th aggregation  \\
$\sigma_u$, $\sigma_z$ & The standard deviation of the DP noise $\mathbf{z}_n^t$ and $\mathbf{z}^t$ \\
$C$ & The clipping threshold of local model \\
$T$ & The maximum number of global aggregations \\
$\epsilon$, $\delta$ & The DP requirement \\
$\lambda$ & The weighting coefficient of the personalized models \\
$\Delta s$ & The sensitivity of client $n$'s local training process \\
$\eta_{\mathrm{G}}$, $\eta_{\mathrm{L}}$ & The learning rates of the local model and \\ &the personalized model \\
$\varrho(\cdot)$ & The fairness of a group of models \\
$\mathbf{Y}_{n}$, $\mathbf{X}_{n}$ & The local observations of client $n$ \\
$b$ & The number of local samples \\
$\hat{\boldsymbol{u}}_{n}$ & The estimate of $\boldsymbol{u}_{n}^{\ast}$ \\
$\boldsymbol{\tau}_{n}$ & The errors between $\boldsymbol{u}_{n}^{\ast}$ and $\boldsymbol{\omega}^{\ast}$ \\
$\boldsymbol{\nu}_{n}$ & The errors between $\mathbf{Y}_{n}$ and $\mathbf{X}_{n}\boldsymbol{u}_{n}^{\ast}$ \\
$\zeta^{2}$, $\sigma^{2}$ & The diagonal elements of $\boldsymbol{\tau}_{n}$ and $\boldsymbol{\nu}_{n}$ \\ 
$\sigma_{w}^{2}$ & The diagonal element of the diagonal matrix $\boldsymbol{\vartheta}_{n}$ \\
$\rho$ & The diagonal element of $\mathbf{X}_{n}^{\intercal}\mathbf{X}_{n}$ \\
\hline
\end{tabular}
\end{table}

\subsection{Threat Model}
\label{threat_model}
The server may attempt to recover the training datasets or infer the private features based on the models uploaded by the clients. There may be external attackers who intend to breach the privacy of the clients. Although the clients train models locally, the local models that the clients share with the server can be analyzed to potentially compromise their privacy under inference attacks during learning \cite{nasr2019comprehensive} and model-inversion attacks during testing \cite{fredrikson2015model}. The private information can be recovered by the attackers.

{Typical privacy-preserving methods for FL include homomorphic encryption, secure multi-party computation, and DP~\cite{lyu2022privacy}. 
While preventing the server from deciphering local models, homomorphic encryption requires all devices to use the same private key and cannot stop them from eavesdropping on each other. 
Secure multi-party computation enables clients to collaboratively compute an arbitrary functionality,  
but requires multiple interactions in a learning process, e.g., public key sharing among clients for key agreement, at the expense of high computation and communication overhead~\cite{bonawitz2017practical}.
Typically, homomorphic encryption and secure multi-party computation are computationally expensive and need a trusted third party for key agreement~\cite{acar2018survey}. To this end, DP is employed to preserve the privacy of PFL in this paper.}

\subsection{FL With DP}
{
Consider the threat model described in Section~III--\ref{threat_model}. The risk of privacy breaches arises from uploading FL local models to the server for FL global model aggregation. 
}%
To preserve data privacy from the uploaded local models, a Gaussian DP mechanism can be used to guarantee $(\epsilon,\delta)$-DP by adding artificial Gaussian noises \cite{dwork2014algorithmic}. 
Let $\mathbf{z}_n^t\sim \mathcal{N}(0,\sigma_u^2)$ denote the Gaussian noise added by client $n$ at the $t$-th communication round. $\mathbf{z}^t=\sum_{n=1}^{N} \mathbf{z}_n^t$ is the noise imposed on the global model after the $t$-th aggregation. The additive noise of each client is independent and identically distributed (i.i.d.). Each element in $\mathbf{z}^t$ follows $\mathcal{N}(0,\sigma_z^2)$ with $\sigma_z^2=N\sigma_u^2$.

Note that the DP noise is only added when a client uploads its local model for global model aggregation. 
{Before uploading its local model in round $t$, each client $n$ clips its local model to prevent gradient explosion, as given by
\begin{align}
\boldsymbol{u}_{n}^{t+1}=\boldsymbol{u}_{n}^{t+1}/\max(1,\frac{\parallel \boldsymbol{u}_{n}^{t+1}\parallel}{C})  \,,
    \label{clipping}
	\end{align}
 where $C$ is the pre-determined clipping threshold to ensure $\parallel \boldsymbol{u}_{n}\parallel \leq C$~\cite{yuan2023amplitude}.
}%

To guarantee $(\epsilon,\delta)$-DP with respect to the data used in the training of the local model, the standard deviation of $\mathbf{z}_n^t$ from the Gaussian mechanism should satisfy 
    $\sigma_u=\frac{\Delta s\sqrt{2qT\ln\left(1/\delta\right)}}{\epsilon}  $~\cite{wei2020federated},
where $T$ is the maximum number of communication rounds and $q$ is the sampling ratio. {Assume that all clients train and upload their local models at each communication round, 
i.e., $q=1/N$.  } 
{$\Delta s$ is the sensitivity of client $n$'s local training process, which captures the magnitude that a sample can change the training model in the worst case, as given by
\begin{equation}
    \label{sensitivity}
    \Delta s=\underset{\mathcal{D}_{n},\mathcal{D}'_{n}}{\max}\left\Vert \boldsymbol{u}_{n}(\mathcal{D}_{n})-\boldsymbol{u}_{n}(\mathcal{D}'_{n})\right\Vert  \,,
\end{equation}
where $\boldsymbol{u}_{n}(\mathcal{D}_{n})$ and $\boldsymbol{u}_{n}(\mathcal{D}'_{n})$ are the local models obtained from the datasets $\mathcal{D}_{n}$ and $\mathcal{D}'_{n}$, respectively. Here, $\mathcal{D}_{n}=\mathcal{D}\cup s$ and $\mathcal{D}'_{n}=\mathcal{D}\cup s'$ are two adjacent datasets with the same size and differ by one sample, i.e., $s \in \mathcal{D}_{n}$, $s' \in \mathcal{D}'_{n}$, and $s\neq s'$. Considering the local model training from $\mathcal{D}_{n}$ and $\mathcal{D}'_{n}$, we have~\cite{wei2020federated}
\begin{align}
    \label{sensitivity_1}
    \Delta s=&\underset{\mathcal{D}_{n},\mathcal{D}'_{n}}{\max}\left\Vert \frac{1}{\mid\mathcal{D}_{n}\mid}\sum_{s''\in\mathcal{D}_{n}}\arg\min_{\boldsymbol{\omega}}F_n(\boldsymbol{\omega},\mathcal{D}_{n})\right.\nonumber\\
    &\left.-\frac{1}{\mid\mathcal{D}'_{n}\mid}\sum_{s''\in\mathcal{D}'_{n}}\arg\min_{\boldsymbol{\omega}}F_n(\boldsymbol{\omega},\mathcal{D}'_{n})\right\Vert \nonumber \\
    =&\frac{\underset{s,s'}{\max}\left\Vert \arg\min_{\boldsymbol{\omega}}F_n(\boldsymbol{\omega},s)-\arg\min_{\boldsymbol{\omega}}F_n(\boldsymbol{\omega},s')\right\Vert}{\mid\mathcal{D}_{n}\mid} \nonumber \\
    =& \frac{2C}{\mid\mathcal{D}_{n}\mid},
\end{align}
}%
{Clearly, $\Delta s$ only depends on the size of datasets $\mid\mathcal{D}_{n}\mid$ and the clipping threshold $C$. 
}%

According to~(\ref{sensitivity}), the standard deviations of the DP noise $\sigma_u$ per client and $\mathbf{z}^t$ are given by 
    \begin{align}
    \label{noiseSigma1}
    &\sigma_u=\frac{\Delta s\sqrt{2TN\ln\left(1/\delta\right)}}{\epsilon N}  \,; 
    \sigma_z=\frac{\Delta s\sqrt{2T\ln\left(1/\delta\right)}}{\epsilon}  \,.
    \end{align}
 The operations of PFL are summarized in Algorithm~\ref{algorithm} and illustrated in Fig.~\ref{fig:PFL_model}. At each round~$t$, client $n$ executes local training, and updates its local model $\boldsymbol{\omega}_n^{t+1}$ and personalized model $\boldsymbol{\varpi}_{n}^{t+1}$. The learning rates of the local and personalized models are $\eta_{\mathrm{G}}$ and $\eta_{\mathrm{L}}$, respectively. The noisy local model $\overset{\sim}{\boldsymbol{u}}_{n}^{t+1}$ after clipping and DP perturbation is uploaded by client $n$ to the server for updating the global model 
 $\overset{\sim}{\boldsymbol{\omega}}^{t+1}$.
 	\begin{algorithm}[t]
	\caption{Privacy-Preserving PFL}
        \label{algorithm}
	\begin{algorithmic}[1]
	\REQUIRE $T$, $\lambda$, $\boldsymbol{\omega}^0$, $\{\mathbf{\boldsymbol{\varpi}}_{n}^0\}_{n\in\mathbb{N}}$, $N$, $\eta_{\mathrm{G}}$, $\eta_{\mathrm{L}}$, $\epsilon$ and $\delta$.
	\ENSURE $\boldsymbol{\omega}^T$, $\{\mathbf{\boldsymbol{\varpi}}_{n}^T\}_{n\in\mathbb{N}}$.
	\FOR {$t=\{0,\cdots,T-1\}$}
        \STATE $//$ Local training process for global model;
        \FOR {$n \in \mathbb{N}$}
        \STATE Obtain $\boldsymbol{\omega}^{t}$ and let $\boldsymbol{u}_{n}^{t}=\boldsymbol{\omega}^{t}$ ;
	\STATE Update the local model: 
        $\boldsymbol{u}_{n}^{t+1}=\boldsymbol{u}_{n}^{t}-\eta_{\mathrm{G}}\nabla F_{n}(\boldsymbol{u}_{n}^{t})$.
        \STATE Clip the local model:
        $\boldsymbol{u}_{n}^{t+1}=\boldsymbol{u}_{n}^{t+1}/\max(1,\frac{\parallel \boldsymbol{u}_{n}^{t+1}\parallel}{C})$;
        \STATE Add noise and upload:
        $\overset{\sim}{\boldsymbol{u}}_{n}^{t+1}=\boldsymbol{u}_{n}^{t+1}+\mathbf{z}_{n}^{t+1}$

        \STATE $//$ Local training process for personalized model;
        \STATE Update personalized model $\boldsymbol{\varpi}_{n}^{t+1}$ :
        \STATE $\boldsymbol{\varpi}_{n}^{t+1}=\boldsymbol{\varpi}_{n}^{t}-\eta_{\mathrm{L}}(\left(1-\frac{\lambda}{2}\right)\nabla F_{n}(\boldsymbol{\varpi}_{n}^{t})+\lambda(\boldsymbol{\varpi}_{n}^{t}-\boldsymbol{\omega}^{t}))$;
        \ENDFOR
        \STATE $//$ Global model aggregating process;
        \STATE Update the global model:
        $\overset{\sim}{\boldsymbol{\omega}}^{t+1}=\frac{1}{N}{\sum}_{n=1}^{N}\overset{\sim}{\boldsymbol{u}}_{n}^{t+1}$;
        \ENDFOR
	\end{algorithmic}
	\end{algorithm}

	\begin{figure}[!t]
	\centering
        \includegraphics[width=0.45\textwidth]{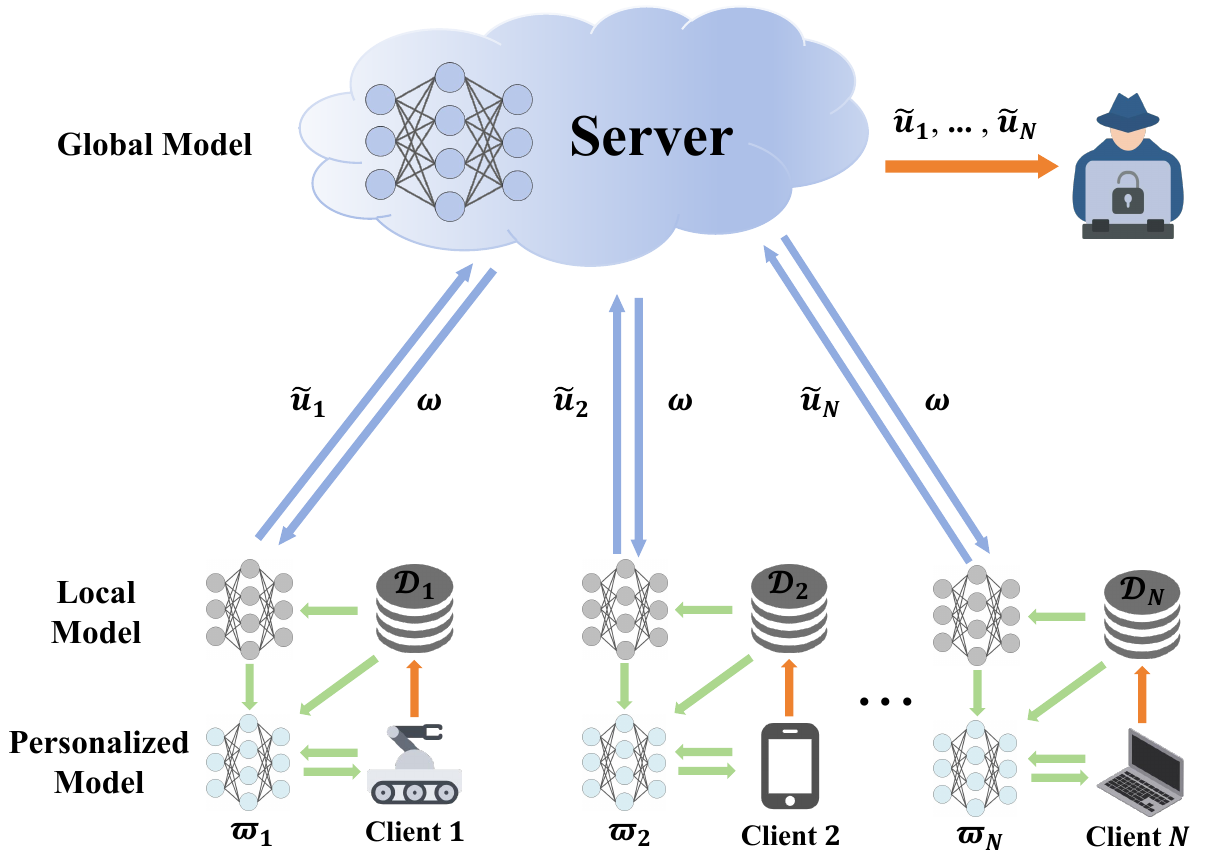}
	\caption{The diagram of PFL: In each round, every client trains its local model and its personalized model based on its local dataset and the global model broadcast by the server in the last round. Then, the clients perturb and upload their local models to the server, and the server aggregates the perturbed local models into the global model and broadcasts the global model.}
	\label{fig:PFL_model}
	\end{figure}

\section{Convergence of Privacy-Preserving PFL
}

This section establishes the convergence upper bound of PFL 
and optimizes the number $T$ of aggregation rounds to minimize the convergence upper bound of PL. 
The following assumptions facilitate the convergence analysis of {PFL}.
\begin{assumption}
\label{assumption1}
$\forall n \in \mathbb{N}$, 
\begin{itemize}
    \item 
$F_{n}\left(\cdot\right)$ is $\mu$-strongly convex \cite{karimi2016linear,10123399,10542235} and $L$-smooth \cite{o2006metric,10123399,10542235}, i.e., $F\left(\boldsymbol{\omega}\right)-F\left(\boldsymbol{\omega}^{\ast}\right)\leq\frac{1}{2\mu}\parallel\nabla F\left(\boldsymbol{\omega}\right)\parallel^{2}$ and $\parallel\nabla F\left(\boldsymbol{\omega}\right)-\nabla F\left(\boldsymbol{\omega}'\right)\parallel\leq L\parallel\boldsymbol{\omega}-\boldsymbol{\omega}'\parallel$. Here, $\mu$ and $L$ are constants;
\item 
The global learning rate $\eta_{\mathrm{G}}\leq\frac{2}{L}$, and $\mu>\frac{2-2\lambda}{2-\lambda}$;
\item 
$\mathbb{E}\left[\parallel\nabla F_{n}(\boldsymbol{\omega}^{t})\parallel^{2}\right]\leq G_{0}^{2}$ {with $G_{0}$ being a constant.};
\item 
$\parallel \boldsymbol{u}_{n}^{\ast}-\boldsymbol{\omega}^{\ast}\parallel\leq M$, where $M$ is a constant.

\end{itemize}
\end{assumption}

\subsection{Convergence Analysis}
\subsubsection{Convergence of FL}
The convergence upper bound of FL with DP has been established in the literature \cite[Eq.~(16)]{wei2020federated} 
    	\begin{equation} 
        \label{gloCon}
\mathbb{E}\left[F\left(\overset{\sim}{\boldsymbol{\omega}}^{T}\right)-F\left({\boldsymbol{\omega}}^{\ast}\right)\right]\leq \varepsilon_{\mathrm{G}}^T \Psi_{1}
+(1-\varepsilon_{\mathrm{G}}^T)\frac{\varphi T}{N\epsilon^2}
        \,,
	\end{equation}
where, for conciseness, $\varepsilon_{\mathrm{G}}=1-2\mu\eta_{\mathrm{G}}+\mu\eta_{\mathrm{G}}^{2}L$, $\Psi_{1}=F\left(\boldsymbol{\omega}^{0}\right)-F\left(\boldsymbol{\omega}^{\ast}\right)$, and $\varphi=\frac{L^2 \Delta s^{2} \ln\left(1/\delta\right)}{\mu}$. 
Clearly, 
the DP noise increases the convergence upper bound with $\varphi=\frac{L^2\sigma_z^2}{2T}$.

\subsubsection{Convergence of PL}
Under \textbf{Assumption \ref{assumption1}}, the convergence upper bound of PL is established in
 the following. 
\begin{lemma}
\label{Lemma1}
Given the PL rate $\eta_{\mathrm{L}}$ and the weighting coefficient $\lambda$, under \textbf{Assumption \ref{assumption1}}, the expected difference between the personalized model $\overset{\sim}{\boldsymbol{\varpi}}_n^{t+1}$ and the optimum $\boldsymbol{\varpi}_n^{\ast}$ 
at the end of the $t$-th communication round is upper-bounded by
       \begin{equation} 
	\begin{split}
        \label{PerOneStep2_l}
&\mathbb{E}\left[\parallel\overset{\sim}{\boldsymbol{\varpi}}_{n}^{t+1}-\boldsymbol{\varpi}_{n}^{\ast}\parallel^{2}\right]
\leq \varepsilon_{\mathrm{L}}\mathbb{E}\left[\parallel\overset{\sim}{\boldsymbol{\varpi}}_{n}^{t}-\boldsymbol{\varpi}_{n}^{\ast}\parallel^{2}\right]+\\
&\left(\eta_{\mathrm{L}}^{2}+\eta_{\mathrm{L}}^{2}\lambda^{2}\right)G+\frac{4\eta_{\mathrm{L}}^{2}\lambda^{2}\!\!+\!\!2\eta_{\mathrm{L}}\lambda^{2}}{\mu}\mathbb{E}\!\!\left[F\!\left(\overset{\sim}{\boldsymbol{\omega}}^{t}\right)\!\!-\!\!F\!\left(\boldsymbol{\omega}^{\ast}\right)\right]
        \,,
	\end{split}
	\end{equation}
where {$G \triangleq \left(\!\left(1\!-\!\frac{\lambda}{2}\right)G_{0}\!+\!\lambda\left(\frac{G_{0}}{\mu}\!+\!M\right)\!\right)^{2}$,} and $\varepsilon_{\mathrm{L}}\!=\!1\!-\!\eta_{\mathrm{L}}\!\left(\!\left(1\!-\!\frac{\lambda}{2}\right)\!\mu\!+\!\lambda\right)\!+\!\eta_{\mathrm{L}}$. Clearly, $\varepsilon_{\mathrm{L}}$ increases with $\lambda$ when $\mu > 2$. 
\end{lemma}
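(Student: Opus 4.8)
The plan is to read the personalized-model update in Algorithm~\ref{algorithm} — executed with the DP-perturbed global model $\overset{\sim}{\boldsymbol{\omega}}^{t}$ in place of $\boldsymbol{\omega}^{\ast}$ — as an \emph{inexact} gradient step on the surrogate loss $f_{n}(\cdot\,;\boldsymbol{\omega}^{\ast})$ of~(\ref{fn}), the inexactness being exactly $\lambda(\boldsymbol{\omega}^{\ast}-\overset{\sim}{\boldsymbol{\omega}}^{t})$, i.e.\ the mismatch between the true minimizer of the global objective and the noisy model the client is handed. Concretely, write the update as $\overset{\sim}{\boldsymbol{\varpi}}_{n}^{t+1}=\overset{\sim}{\boldsymbol{\varpi}}_{n}^{t}-\eta_{\mathrm{L}}\boldsymbol{d}_{n}^{t}$ with $\boldsymbol{d}_{n}^{t}=\nabla f_{n}(\overset{\sim}{\boldsymbol{\varpi}}_{n}^{t};\boldsymbol{\omega}^{\ast})+\lambda(\boldsymbol{\omega}^{\ast}-\overset{\sim}{\boldsymbol{\omega}}^{t})$, where $\nabla f_{n}(\boldsymbol{\varpi};\boldsymbol{\omega}^{\ast})=(1-\tfrac{\lambda}{2})\nabla F_{n}(\boldsymbol{\varpi})+\lambda(\boldsymbol{\varpi}-\boldsymbol{\omega}^{\ast})$. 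I would first record two structural facts: under Assumption~\ref{assumption1} with $\lambda\in[0,2]$, $f_{n}(\cdot\,;\boldsymbol{\omega}^{\ast})$ is $\sigma_{f}$-strongly convex with $\sigma_{f}=(1-\tfrac{\lambda}{2})\mu+\lambda$ and $L_{f}$-smooth with $L_{f}=(1-\tfrac{\lambda}{2})L+\lambda$; and $\nabla f_{n}(\boldsymbol{\varpi}_{n}^{\ast};\boldsymbol{\omega}^{\ast})=0$ by the definition of $\boldsymbol{\varpi}_{n}^{\ast}$ in~(\ref{Optimal_parameter}).

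Next I would expand $\|\overset{\sim}{\boldsymbol{\varpi}}_{n}^{t+1}-\boldsymbol{\varpi}_{n}^{\ast}\|^{2}=\|\overset{\sim}{\boldsymbol{\varpi}}_{n}^{t}-\boldsymbol{\varpi}_{n}^{\ast}\|^{2}-2\eta_{\mathrm{L}}\langle\overset{\sim}{\boldsymbol{\varpi}}_{n}^{t}-\boldsymbol{\varpi}_{n}^{\ast},\boldsymbol{d}_{n}^{t}\rangle+\eta_{\mathrm{L}}^{2}\|\boldsymbol{d}_{n}^{t}\|^{2}$ and process the pieces. The $\nabla f_{n}(\overset{\sim}{\boldsymbol{\varpi}}_{n}^{t};\boldsymbol{\omega}^{\ast})$ contribution (its inner product together with the corresponding part of $\eta_{\mathrm{L}}^{2}\|\boldsymbol{d}_{n}^{t}\|^{2}$) is handled by strong convexity and smoothness of $f_{n}$, which for admissibly small $\eta_{\mathrm{L}}$ yields the contractive coefficient $1-\eta_{\mathrm{L}}\big((1-\tfrac{\lambda}{2})\mu+\lambda\big)$ plus a residual $\|\nabla f_{n}(\overset{\sim}{\boldsymbol{\varpi}}_{n}^{t};\boldsymbol{\omega}^{\ast})\|^{2}$ term. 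The coupling term $-2\eta_{\mathrm{L}}\lambda\langle\overset{\sim}{\boldsymbol{\varpi}}_{n}^{t}-\boldsymbol{\varpi}_{n}^{\ast},\boldsymbol{\omega}^{\ast}-\overset{\sim}{\boldsymbol{\omega}}^{t}\rangle$ is controlled by Young's inequality $2\langle a,b\rangle\le\alpha\|a\|^{2}+\alpha^{-1}\|b\|^{2}$ with the tuned constant $\alpha=1/\lambda$, which produces exactly the extra additive $+\eta_{\mathrm{L}}$ in $\varepsilon_{\mathrm{L}}$ together with a $\|\boldsymbol{\omega}^{\ast}-\overset{\sim}{\boldsymbol{\omega}}^{t}\|^{2}$ term; the remaining squared-norm and cross pieces of $\boldsymbol{d}_{n}^{t}$ are expanded by $\|a+b\|^{2}\le c_{1}\|a\|^{2}+c_{2}\|b\|^{2}$ into an $\|\nabla f_{n}(\overset{\sim}{\boldsymbol{\varpi}}_{n}^{t};\boldsymbol{\omega}^{\ast})\|^{2}$ term and a further $\|\boldsymbol{\omega}^{\ast}-\overset{\sim}{\boldsymbol{\omega}}^{t}\|^{2}$ term.

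The two steps that close the argument are the ones that expose the DP noise and generate the constant $G$. First, every $\|\boldsymbol{\omega}^{\ast}-\overset{\sim}{\boldsymbol{\omega}}^{t}\|^{2}$ contribution is converted into the FL suboptimality gap via the quadratic-growth consequence of $\mu$-strong convexity of $F$, $\|\overset{\sim}{\boldsymbol{\omega}}^{t}-\boldsymbol{\omega}^{\ast}\|^{2}\le\tfrac{2}{\mu}\big(F(\overset{\sim}{\boldsymbol{\omega}}^{t})-F(\boldsymbol{\omega}^{\ast})\big)$; this is precisely why the last summand of~(\ref{PerOneStep2_l}) carries the $\tfrac{1}{\mu}$ factor, and it is the channel through which~(\ref{gloCon}) — hence the DP level — will later enter the PL bound. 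Second, $\|\nabla f_{n}(\overset{\sim}{\boldsymbol{\varpi}}_{n}^{t};\boldsymbol{\omega}^{\ast})\|\le(1-\tfrac{\lambda}{2})\|\nabla F_{n}(\overset{\sim}{\boldsymbol{\varpi}}_{n}^{t})\|+\lambda\|\overset{\sim}{\boldsymbol{\varpi}}_{n}^{t}-\boldsymbol{\omega}^{\ast}\|$; here I bound $\|\nabla F_{n}(\overset{\sim}{\boldsymbol{\varpi}}_{n}^{t})\|$ by $G_{0}$, and, using $\mu$-strong convexity with $\nabla F_{n}(\boldsymbol{u}_{n}^{\ast})=0$ followed by the triangle inequality with $\|\boldsymbol{u}_{n}^{\ast}-\boldsymbol{\omega}^{\ast}\|\le M$, obtain $\|\overset{\sim}{\boldsymbol{\varpi}}_{n}^{t}-\boldsymbol{\omega}^{\ast}\|\le\tfrac{1}{\mu}\|\nabla F_{n}(\overset{\sim}{\boldsymbol{\varpi}}_{n}^{t})\|+M\le\tfrac{G_{0}}{\mu}+M$, so $\|\nabla f_{n}(\overset{\sim}{\boldsymbol{\varpi}}_{n}^{t};\boldsymbol{\omega}^{\ast})\|^{2}\le\big((1-\tfrac{\lambda}{2})G_{0}+\lambda(\tfrac{G_{0}}{\mu}+M)\big)^{2}=G$. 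Taking expectation over the DP noise (and any stochasticity of the local gradients) and collecting all terms yields~(\ref{PerOneStep2_l}). The final monotonicity claim is then immediate: $\tfrac{\partial\varepsilon_{\mathrm{L}}}{\partial\lambda}=-\eta_{\mathrm{L}}\big(1-\tfrac{\mu}{2}\big)=\eta_{\mathrm{L}}\big(\tfrac{\mu}{2}-1\big)$, which is positive iff $\mu>2$.

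The main obstacle is the bookkeeping of the coupling through $\overset{\sim}{\boldsymbol{\omega}}^{t}$: the regularizer center must be moved from $\overset{\sim}{\boldsymbol{\omega}}^{t}$ to $\boldsymbol{\omega}^{\ast}$ in such a way that (i) the contraction factor lands exactly as $\varepsilon_{\mathrm{L}}$ with the ``$+\eta_{\mathrm{L}}$'' correction, and (ii) every residual $\overset{\sim}{\boldsymbol{\omega}}^{t}$-dependence collapses onto $F(\overset{\sim}{\boldsymbol{\omega}}^{t})-F(\boldsymbol{\omega}^{\ast})$ so that~(\ref{gloCon}) is directly substitutable. Landing the precise coefficients $\eta_{\mathrm{L}}^{2}+\eta_{\mathrm{L}}^{2}\lambda^{2}$ on $G$ and $\tfrac{4\eta_{\mathrm{L}}^{2}\lambda^{2}+2\eta_{\mathrm{L}}\lambda^{2}}{\mu}$ on the FL gap is what pins down the Young constant and the $\|a+b\|^{2}$ splittings, and the admissibility hypotheses $\eta_{\mathrm{G}}\le\tfrac{2}{L}$ and $\mu>\tfrac{2-2\lambda}{2-\lambda}$ in Assumption~\ref{assumption1} are exactly what keep $\varepsilon_{\mathrm{L}}\in(0,1)$, so the one-step bound is genuinely contractive and can be unrolled over the $T$ rounds together with~(\ref{gloCon}).
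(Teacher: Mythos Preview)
Your proposal is correct and follows essentially the same route as the paper's proof: expand the squared distance, split the update direction into $\nabla f_{n}(\cdot;\boldsymbol{\omega}^{\ast})$ plus the coupling error $\lambda(\boldsymbol{\omega}^{\ast}-\overset{\sim}{\boldsymbol{\omega}}^{t})$, extract the contraction $1-\eta_{\mathrm{L}}\big((1-\tfrac{\lambda}{2})\mu+\lambda\big)$ from strong convexity, use Young/Cauchy--Schwarz with the specific splitting that produces the $+\eta_{\mathrm{L}}$ correction, convert every $\|\overset{\sim}{\boldsymbol{\omega}}^{t}-\boldsymbol{\omega}^{\ast}\|^{2}$ into the FL gap via quadratic growth, and bound $\|\nabla f_{n}(\overset{\sim}{\boldsymbol{\varpi}}_{n}^{t};\boldsymbol{\omega}^{\ast})\|^{2}\le G$ through the chain $\overset{\sim}{\boldsymbol{\varpi}}_{n}^{t}\to\boldsymbol{u}_{n}^{\ast}\to\boldsymbol{\omega}^{\ast}$ using $G_{0}$, $\mu$, and $M$. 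The only cosmetic difference is that the paper first keeps the function-value term $f_{n}(\boldsymbol{\varpi}_{n}^{\ast};\overset{\sim}{\boldsymbol{\omega}}^{t})-f_{n}(\overset{\sim}{\boldsymbol{\varpi}}_{n}^{t};\overset{\sim}{\boldsymbol{\omega}}^{t})$ from strong convexity and then recenters it at $\boldsymbol{\omega}^{\ast}$, whereas you recenter the gradient at $\boldsymbol{\omega}^{\ast}$ from the outset; the two rewritings collapse to the same inequalities and the same constants.
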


\begin{proof}    See \textbf{Appendix \ref{pLemma1}}.
\end{proof}

\vspace{2 mm}
\textbf{Lemma \ref{Lemma1}} can be verified in two special cases. 
One is that PFL is only executed locally ($\lambda=0$). By \textbf{Lemma \ref{Lemma1}}, we have
         \begin{equation} 
        \label{peronestep_lambda0}
        \mathbb{E}\left[\parallel\overset{\sim}{\boldsymbol{\varpi}}_{n}^{t+1}-\boldsymbol{\varpi}_{n}^{\ast}\parallel^{2}\right]
        \leq \varepsilon_{\mathrm{L}}\mathbb{E}\left[\parallel\overset{\sim}{\boldsymbol{\varpi}}_{n}^{t}-\boldsymbol{\varpi}_{n}^{\ast}\parallel^{2}\right]+\eta_{\mathrm{L}}^{2}G
        \,,
	\end{equation}
which depends only on the FL local training. The obtained personalized models are obviously unaffected by the DP noise.

Another special case involves no PL, i.e., $\lambda=2$. Then, $\boldsymbol{\varpi}_{n}^{\ast}=\underset{\boldsymbol{\varpi}_{n}}{\arg\min}\,\parallel\boldsymbol{\varpi}_{n}-\boldsymbol{\omega}^{\ast}\parallel^{2}$, i.e., $\boldsymbol{\varpi}_{n}^{\ast}=\boldsymbol{\omega}^{\ast}$. Hence,
{\small
         \begin{subequations} 
         \begin{align}
        \label{peronestep_lambda2_1}
        \begin{split}
       \mathbb{E} &\bigg[\parallel\overset{\sim}{\boldsymbol{\varpi}}_{n}^{t+1}-\boldsymbol{\omega}^{\ast}\parallel^{2}\bigg]
        \leq  \varepsilon_{\mathrm{L}}\mathbb{E}\left[\parallel\overset{\sim}{\boldsymbol{\varpi}}_{n}^{t}-\boldsymbol{\omega}^{\ast}\parallel^{2}\right]+5\eta_{\mathrm{L}}^{2}G\\
        &+\frac{16\eta_{\mathrm{L}}^{2}+8\eta_{\mathrm{L}}}{\mu}\mathbb{E}\left[F\left(\overset{\sim}{\boldsymbol{\omega}}^{t}\right)-F\left(\boldsymbol{\omega}^{\ast}\right)\right] 
        \end{split} \\
        \label{peronestep_lambda2_3}
        \begin{split}
        &\leq \!
        (\varepsilon_{\mathrm{L}}\!\!+\!\!5\eta_{\mathrm{L}}^{2}) (\frac{G_{0}}{\mu}\!+\!M)^{2}\!\!+\!\!\frac{16\eta_{\mathrm{L}}^{2}\!\!+\!\!8\eta_{\mathrm{L}}}{\mu}\mathbb{E}\!\left[\!F\!\left(\overset{\sim}{\boldsymbol{\omega}}^{t}\right)\!\!-\!\!F\!\left(\boldsymbol{\omega}^{\ast}\right)\!\right],
        \end{split}       
        \end{align}
	\end{subequations} %
 }%
where (\ref{peronestep_lambda2_3}) is obtained by substituting (\ref{E2}) into (\ref{peronestep_lambda2_1}). 
As revealed in (\ref{peronestep_lambda2_3}), the convergence of PL depends only on the FL and hence the DP in PFL.

\begin{lemma}
\label{lemma2}
Given $F\left(\boldsymbol{\omega}\right)-F\left(\boldsymbol{\omega}^{\ast}\right)\leq\frac{1}{2\mu}\parallel\nabla F\left(\boldsymbol{\omega}\right)\parallel^{2}$ under \textbf{Assumption 1}, the expectation of the difference between the FL model in the $t$-th communication round, i.e., $\overset{\sim}{\boldsymbol{\omega}}^{t+1}$, and the optimal global model $\boldsymbol{\omega}^{\ast}$ is upper bounded by 
    {\small
    \begin{equation} 
        \label{gloOneStep}
\mathbb{E}\left[F(\overset{\sim}{\boldsymbol{\omega}}^{t+1})-F\left(\boldsymbol{\omega}^{\ast}\right)\right]\leq\!{\varepsilon_{\mathrm{G}}}\mathbb{E}\left[F(\overset{\sim}{\boldsymbol{\omega}}^{t})\!-\!F\left(\boldsymbol{\omega}^{\ast}\right)\right]\!+\!
\varphi_{\mathrm{L}} {T},
	\end{equation}
 }%
 {where $\varphi_{\mathrm{L}}=\sigma_z^2\frac{dL}{2TN^2}=\frac{\Delta s^{2}Ld\ln\left(1/\delta\right)}{N^2\epsilon^{2}}$ based on~\eqref{noiseSigma1}.}
\end{lemma}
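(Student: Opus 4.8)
The plan is to turn the per-round update of Algorithm~\ref{algorithm} into a contraction-plus-noise recursion. With the single local step $\boldsymbol{u}_{n}^{t+1}=\boldsymbol{u}_{n}^{t}-\eta_{\mathrm{G}}\nabla F_{n}(\boldsymbol{u}_{n}^{t})$ initialized at $\boldsymbol{u}_{n}^{t}=\overset{\sim}{\boldsymbol{\omega}}^{t}$, and using $F=\frac{1}{N}\sum_{n}F_{n}$ (recall $p_{n}=1/N$), the noiseless part of the server aggregate is a full-batch gradient step, $\bar{\boldsymbol{\omega}}^{t+1}\triangleq\frac{1}{N}\sum_{n=1}^{N}\boldsymbol{u}_{n}^{t+1}=\overset{\sim}{\boldsymbol{\omega}}^{t}-\eta_{\mathrm{G}}\nabla F(\overset{\sim}{\boldsymbol{\omega}}^{t})$, so that $\overset{\sim}{\boldsymbol{\omega}}^{t+1}=\bar{\boldsymbol{\omega}}^{t+1}+\frac{1}{N}\mathbf{z}^{t+1}$ with $\mathbf{z}^{t+1}=\sum_{n}\mathbf{z}_{n}^{t+1}$ and each coordinate of $\mathbf{z}^{t+1}$ distributed as $\mathcal{N}(0,\sigma_{z}^{2})$.

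Next I would use $L$-smoothness of $F$ to detach the noise: $F(\overset{\sim}{\boldsymbol{\omega}}^{t+1})\le F(\bar{\boldsymbol{\omega}}^{t+1})+\langle\nabla F(\bar{\boldsymbol{\omega}}^{t+1}),\frac{1}{N}\mathbf{z}^{t+1}\rangle+\frac{L}{2N^{2}}\parallel\!\mathbf{z}^{t+1}\!\parallel^{2}$. Conditioning on the randomness of rounds $1,\dots,t$ makes $\bar{\boldsymbol{\omega}}^{t+1}$ deterministic and leaves $\mathbf{z}^{t+1}$ zero-mean and independent, so the inner-product term vanishes under expectation; moreover $\mathbb{E}\parallel\!\mathbf{z}^{t+1}\!\parallel^{2}=d\sigma_{z}^{2}$ since $\mathbf{z}^{t+1}$ has $d$ i.i.d.\ $\mathcal{N}(0,\sigma_{z}^{2})$ components. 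Taking total expectation gives $\mathbb{E}[F(\overset{\sim}{\boldsymbol{\omega}}^{t+1})]\le\mathbb{E}[F(\bar{\boldsymbol{\omega}}^{t+1})]+\frac{dL\sigma_{z}^{2}}{2N^{2}}$.

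Then I would bound the noiseless step. The descent lemma for $L$-smooth $F$ gives $F(\bar{\boldsymbol{\omega}}^{t+1})\le F(\overset{\sim}{\boldsymbol{\omega}}^{t})-\eta_{\mathrm{G}}(1-\frac{L\eta_{\mathrm{G}}}{2})\parallel\!\nabla F(\overset{\sim}{\boldsymbol{\omega}}^{t})\!\parallel^{2}$, and since $\eta_{\mathrm{G}}\le\frac{2}{L}$ by Assumption~\ref{assumption1} the coefficient is nonnegative, so the inequality $F(\boldsymbol{\omega})-F(\boldsymbol{\omega}^{\ast})\le\frac{1}{2\mu}\parallel\!\nabla F(\boldsymbol{\omega})\!\parallel^{2}$ from Assumption~\ref{assumption1} yields, after subtracting $F(\boldsymbol{\omega}^{\ast})$, $F(\bar{\boldsymbol{\omega}}^{t+1})-F(\boldsymbol{\omega}^{\ast})\le(1-2\mu\eta_{\mathrm{G}}+\mu\eta_{\mathrm{G}}^{2}L)(F(\overset{\sim}{\boldsymbol{\omega}}^{t})-F(\boldsymbol{\omega}^{\ast}))=\varepsilon_{\mathrm{G}}(F(\overset{\sim}{\boldsymbol{\omega}}^{t})-F(\boldsymbol{\omega}^{\ast}))$. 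Combining this with the previous display and substituting $\sigma_{z}^{2}=\frac{2T\Delta s^{2}\ln(1/\delta)}{\epsilon^{2}}$ from~\eqref{noiseSigma1} rewrites the additive term as $\frac{dL\sigma_{z}^{2}}{2N^{2}}=\sigma_{z}^{2}\frac{dL}{2TN^{2}}\,T=\varphi_{\mathrm{L}}T$, which is exactly~\eqref{gloOneStep}.

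The main obstacle is the clipping step~\eqref{clipping}: it makes $\boldsymbol{u}_{n}^{t+1}$ a nonlinear function of the local gradient step and so breaks the identity $\bar{\boldsymbol{\omega}}^{t+1}=\overset{\sim}{\boldsymbol{\omega}}^{t}-\eta_{\mathrm{G}}\nabla F(\overset{\sim}{\boldsymbol{\omega}}^{t})$ used above. Following the convergence analysis of~\cite{wei2020federated} on which~\eqref{gloCon} is based, I would resolve this either by assuming the clipping is inactive along the trajectory (i.e.\ $\parallel\!\boldsymbol{u}_{n}^{t+1}\!\parallel\le C$, consistent with the bounded-gradient and bounded-model conditions in Assumption~\ref{assumption1}), or by absorbing the clipping discrepancy into the gradient-boundedness constant so that the descent step still holds with the stated constants. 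A minor point is the measurability and independence needed when conditioning on $\mathbf{z}^{1},\dots,\mathbf{z}^{t}$ before averaging out $\mathbf{z}^{t+1}$, which is immediate because the DP noises are drawn freshly and independently in each round.
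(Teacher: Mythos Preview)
Your proposal is correct and follows essentially the same approach as the paper: the paper quotes \cite[Eq.~(59)]{wei2020federated} for the descent-plus-noise inequality~\eqref{glotplus1} and then plugs in the PL inequality and $\mathbb{E}\parallel\!\mathbf{z}^{t}\!\parallel^{2}=d\sigma_{z}^{2}$, which is exactly what you derive directly via $L$-smoothness, the descent lemma, and the zero-mean independence of the fresh noise. Your discussion of the clipping issue is also apt; the paper (like~\cite{wei2020federated}) implicitly treats clipping as inactive in the analysis.
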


\begin{proof}
    See \textbf{Appendix \ref{plemma2}}.
\end{proof}

\begin{theorem}
\label{con_UpperBound}
Under \textbf{Assumption \ref{assumption1}},
the convergence upper bound of PL after $T$ aggregation rounds is given as follows:

When $\varepsilon_{\mathrm{L}} \neq \varepsilon_{\mathrm{G}}$,
{\small
         \begin{equation} 
	\begin{split}
        \label{theoConverT}
&\mathbb{E}\left[\parallel\overset{\sim}{\boldsymbol{\varpi}}_{n}^{T}-\boldsymbol{\varpi}_{n}^{\ast}\parallel^{2}\right]
\leq \varepsilon_{\mathrm{L}}^{T}\Psi_{2}+\left(1+\lambda^{2}\right)\eta_{\mathrm{L}}^{2}G\frac{\varepsilon_{\mathrm{L}}^{T}-1}{\varepsilon_{\mathrm{L}}-1}\\
&+\!\!\frac{(4\eta_{\mathrm{L}}^{2}\!\!+\!\!2\eta_{\mathrm{L}})\lambda^{2}}{\mu}\!\!\left[\!\frac{\varepsilon_{\mathrm{L}}^{T}\!\!-\!\!\varepsilon_{\mathrm{G}}^{T}}{\varepsilon_{\mathrm{L}}\!\!-\!\!\varepsilon_{\mathrm{G}}}\Psi_{1}\!\!+\!\!\left(\!\!\frac{\varepsilon_{\mathrm{L}}^{T\!-\!1}\!\!-\!\!\varepsilon_{\mathrm{G}}^{T\!-\!1}}{\frac{\varepsilon_{\mathrm{L}}}{\varepsilon_{\mathrm{G}}}-1}\!\!-\!\!\frac{\varepsilon_{\mathrm{L}}^{T\!-\!1}\!\!-\!\!1}{\varepsilon_{\mathrm{L}}\!\!-\!\!1}\!\!\right)\!\!\frac{\varphi_{\mathrm{L}}T}{\varepsilon_{\mathrm{G}}\!\!-\!\!1}\!\right];
	\end{split}
	\end{equation}
 }

When $\varepsilon_{\mathrm{L}} = \varepsilon_{\mathrm{G}}$,
{\small
         \begin{equation} 
	\begin{split}
        \label{theoConverT_LG}
&\mathbb{E}\left[\parallel\overset{\sim}{\boldsymbol{\varpi}}_{n}^{T}-\boldsymbol{\varpi}_{n}^{\ast}\parallel^{2}\right]
\leq \varepsilon_{\mathrm{L}}^{T}\Psi_{2}+\left(1+\lambda^{2}\right)\eta_{\mathrm{L}}^{2}G\frac{\varepsilon_{\mathrm{L}}^{T}-1}{\varepsilon_{\mathrm{L}}-1}\\
&+\!\frac{(4\eta_{\mathrm{L}}^{2}\!\!+\!\!2\eta_{\mathrm{L}})\lambda^{2}}{\mu}\!\!\left[\!T\varepsilon_{\mathrm{L}}^{T\!-\!1}\Psi_1 \!\!+\!\!\left(\!(T\!\!-\!\!1)\varepsilon_{\mathrm{L}}^{T\!-\!1}\!\!-\!\!\frac{\varepsilon_{\mathrm{L}}^{T\!-\!1}\!-\!1}{\varepsilon_{\mathrm{L}}\!\!-\!\!1}\right)\!\frac{{\varphi_{\mathrm{L}}}T}{\varepsilon_{\mathrm{L}}\!\!-\!\!1}\!\right],
	\end{split}
	\end{equation} %
 }%
where, for conciseness, $\Psi_{2}=\parallel\boldsymbol{\varpi}_{n}^{0}-\boldsymbol{\varpi}_{n}^{\ast}\parallel^{2}$.

\end{theorem}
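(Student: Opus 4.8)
The plan is to derive \eqref{theoConverT} and \eqref{theoConverT_LG} as the closed-form solution of a pair of coupled scalar linear recursions: the FL-loss recursion supplied by \textbf{Lemma~\ref{lemma2}} and the personalized-model-error recursion supplied by \textbf{Lemma~\ref{Lemma1}}. Because the FL recursion does not involve the personalized models, it decouples, so I would first solve it in closed form, substitute the result into \textbf{Lemma~\ref{Lemma1}}, and then unroll the resulting one-dimensional recursion over the $T$ rounds. This is the familiar two-level telescoping argument for global-regularized PFL; all iterations preserve the one-step inequalities because the contraction factors $\varepsilon_{\mathrm{G}}$ and $\varepsilon_{\mathrm{L}}$ are nonnegative under the step-size conditions of \textbf{Assumption~\ref{assumption1}}.

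Writing $F^{t}\triangleq\mathbb{E}[F(\overset{\sim}{\boldsymbol{\omega}}^{t})-F(\boldsymbol{\omega}^{\ast})]$, \textbf{Lemma~\ref{lemma2}} reads $F^{t+1}\leq\varepsilon_{\mathrm{G}}F^{t}+\varphi_{\mathrm{L}}T$ with $F^{0}=\Psi_{1}$, so unrolling it together with the geometric sum $\sum_{k=0}^{t-1}\varepsilon_{\mathrm{G}}^{k}=(\varepsilon_{\mathrm{G}}^{t}-1)/(\varepsilon_{\mathrm{G}}-1)$ gives the closed form $F^{t}\leq\varepsilon_{\mathrm{G}}^{t}\Psi_{1}+\varphi_{\mathrm{L}}T(\varepsilon_{\mathrm{G}}^{t}-1)/(\varepsilon_{\mathrm{G}}-1)$, a bound of the same form as \eqref{gloCon}. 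Then, with $b^{t}\triangleq\mathbb{E}[\parallel\overset{\sim}{\boldsymbol{\varpi}}_{n}^{t}-\boldsymbol{\varpi}_{n}^{\ast}\parallel^{2}]$ and the coefficient $(4\eta_{\mathrm{L}}^{2}+2\eta_{\mathrm{L}})\lambda^{2}/\mu$ of $F^{t}$ in \eqref{PerOneStep2_l} being nonnegative, this closed form can be substituted into \eqref{PerOneStep2_l} to obtain the scalar recursion $b^{t+1}\leq\varepsilon_{\mathrm{L}}b^{t}+c+d\varepsilon_{\mathrm{G}}^{t}\Psi_{1}+d\varphi_{\mathrm{L}}T(\varepsilon_{\mathrm{G}}^{t}-1)/(\varepsilon_{\mathrm{G}}-1)$, where $c\triangleq(1+\lambda^{2})\eta_{\mathrm{L}}^{2}G$, $d\triangleq(4\eta_{\mathrm{L}}^{2}+2\eta_{\mathrm{L}})\lambda^{2}/\mu$, and $b^{0}=\Psi_{2}$.

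Iterating this recursion $T$ times yields $b^{T}\leq\varepsilon_{\mathrm{L}}^{T}\Psi_{2}+\sum_{j=0}^{T-1}\varepsilon_{\mathrm{L}}^{T-1-j}\left(c+d\varepsilon_{\mathrm{G}}^{j}\Psi_{1}+d\varphi_{\mathrm{L}}T(\varepsilon_{\mathrm{G}}^{j}-1)/(\varepsilon_{\mathrm{G}}-1)\right)$, which I would then evaluate term by term. The $c$-term is summed by $\sum_{j=0}^{T-1}\varepsilon_{\mathrm{L}}^{T-1-j}=(\varepsilon_{\mathrm{L}}^{T}-1)/(\varepsilon_{\mathrm{L}}-1)$ and gives the $(1+\lambda^{2})\eta_{\mathrm{L}}^{2}G$ term of \eqref{theoConverT}. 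When $\varepsilon_{\mathrm{L}}\neq\varepsilon_{\mathrm{G}}$, the mixed geometric sum $\sum_{j=0}^{T-1}\varepsilon_{\mathrm{L}}^{T-1-j}\varepsilon_{\mathrm{G}}^{j}=(\varepsilon_{\mathrm{L}}^{T}-\varepsilon_{\mathrm{G}}^{T})/(\varepsilon_{\mathrm{L}}-\varepsilon_{\mathrm{G}})$ produces the $\Psi_{1}$ term, and pairing it with $(\varepsilon_{\mathrm{L}}^{T}-1)/(\varepsilon_{\mathrm{L}}-1)$---after peeling off the top term $\varepsilon_{\mathrm{L}}^{T-1}$ of each length-$T$ geometric sum---recovers the $\varphi_{\mathrm{L}}$-bracket of \eqref{theoConverT}. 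When $\varepsilon_{\mathrm{L}}=\varepsilon_{\mathrm{G}}$, the same mixed sum degenerates to $T\varepsilon_{\mathrm{L}}^{T-1}$, which gives the $T\varepsilon_{\mathrm{L}}^{T-1}\Psi_{1}$ and $(T-1)\varepsilon_{\mathrm{L}}^{T-1}$ factors of \eqref{theoConverT_LG}; equivalently, \eqref{theoConverT_LG} is the $\varepsilon_{\mathrm{G}}\to\varepsilon_{\mathrm{L}}$ limit of \eqref{theoConverT}, which serves as a consistency check.

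The main difficulty is not analytical---no inequality beyond \textbf{Assumption~\ref{assumption1}}, \textbf{Lemma~\ref{Lemma1}}, and \textbf{Lemma~\ref{lemma2}} is needed---but purely the algebraic bookkeeping: carrying the nested geometric sums without sign errors, re-expressing the $\varphi_{\mathrm{L}}$-dependent double sum in exactly the grouping displayed in \eqref{theoConverT} (which requires the elementary identity splitting a length-$T$ geometric sum into its length-$(T-1)$ part plus the top term), and keeping the two regimes $\varepsilon_{\mathrm{L}}\neq\varepsilon_{\mathrm{G}}$ and $\varepsilon_{\mathrm{L}}=\varepsilon_{\mathrm{G}}$ mutually consistent, the latter being the removable singularity of the former.
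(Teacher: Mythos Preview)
Your proposal is correct and follows essentially the same route as the paper: both proofs unroll the FL recursion of \textbf{Lemma~\ref{lemma2}} to bound $\mathbb{E}[F(\overset{\sim}{\boldsymbol{\omega}}^{t})-F(\boldsymbol{\omega}^{\ast})]$, feed that bound into the one-step PL recursion of \textbf{Lemma~\ref{Lemma1}}, telescope over $T$ rounds, and evaluate the resulting (mixed) geometric sums, with the $\varepsilon_{\mathrm{L}}=\varepsilon_{\mathrm{G}}$ case handled as the degenerate limit. The only cosmetic difference is that the paper carries the noise contribution as the double sum $\sum_{x=0}^{t-1}\sum_{y=0}^{t-1-x}\varepsilon_{\mathrm{L}}^{x}\varepsilon_{\mathrm{G}}^{y}$ before closing it, whereas you close the inner sum first; your ``peeling off the top term'' is exactly the observation that the $j=0$ summand in the $\varphi_{\mathrm{L}}$-part vanishes, which is what produces the length-$(T-1)$ sums appearing in \eqref{theoConverT}.
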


\begin{proof}
    See \textbf{Appendix \ref{pcon_UpperBound}}.
\end{proof}

\vspace{2 mm}
A trade-off between convergence and privacy is revealed in {(\ref{theoConverT})  and (\ref{theoConverT_LG})}: 
As the DP noise variance $\sigma_z^2$ increases, the convergence upper bound of PL increases since the term resulting from the DP noise is positive {since $\varphi_{\mathrm{L}}>0$ and $\sum_{x=0}^{t-1}\sum_{y=0}^{t-1-x}\varepsilon_{\mathrm{L}}^{x}\varepsilon_{\mathrm{G}}^{y}>0$ in the bound.} 
Moreover, the convergence upper bound of PL depends on $\lambda$. A larger $\lambda$ leads to a more significant impact of the DP noise on the convergence of PL when $\mu\geq 2$.
The reason is that, when $\mu > 2$, $\varepsilon_{\mathrm{L}}=1+\eta_{\mathrm{L}}-\eta_{\mathrm{L}}\left[\mu+(1-\frac{\mu}{2})\lambda\right]$ increases with $\lambda$ (see \textbf{Lemma~{\ref{Lemma1}}}); when $\mu=2$, $\varepsilon_{\mathrm{L}}$ is independent of $\lambda$, but the impact of DP still grows with $\lambda$ due to the coefficient $\lambda^2$ in (\ref{PerConvergence}). 
{ Note that \textbf{Theorem 1} holds under DP noises with other distributions, since the DP noise has no impact on \textbf{Lemma~\ref{Lemma1}} while the RHS of \eqref{gloOneStep} in \textbf{Lemma~\ref{lemma2}} depends only on the mean and variance $\sigma_z^2$ of the DP noise.}

On the other hand, the impact of DP on the convergence of PL may not always intensify with $\lambda$ when $\mu <2$, because the sign of the first derivative of $\lambda^2 \left(\frac{\varepsilon_{\mathrm{L}}^{t}-\varepsilon_{\mathrm{G}}^{t}}{\frac{\varepsilon_{\mathrm{L}}}{\varepsilon_{\mathrm{G}}}-1}-\frac{\varepsilon_{\mathrm{L}}^{t}-1}{\varepsilon_{\mathrm{L}}-1}\right)$  { in \eqref{PerConvergence2} and $\lambda^2 \left( t \varepsilon_{\mathrm{L}}^t-\frac{\varepsilon_{\mathrm{L}}^{t}-1}{\varepsilon_{\mathrm{L}}-1}\right)$} in \eqref{PerConvergence2_LG} with respect to $\lambda$ 
depends on $\eta_{\mathrm{L}}$, $\varepsilon_{\mathrm{G}}$, and $t$ when $\mu <2$.

\subsection{Optimization for Convergence}

 Let $h(T,\lambda)$ denote the convergence upper bound of PL after the $T$ aggregation rounds of FL, given the privacy budget. 
 \subsubsection{When $\varepsilon_{\mathrm{L}} \neq \varepsilon_{\mathrm{G}}$} $h(T,\lambda)$ is given by the right-hand side (RHS) of (\ref{theoConverT}).
After reorganization, we have
{\small
\begin{subequations} 
          \label{2Devi}
	\begin{align}
        \label{2Devi1}
        \begin{split}
h(T,&\lambda)\!=\!\Big(\Psi_{2}\!\!+\!\!\frac{\beta\Psi_{1}}{\varepsilon_{\mathrm{L}}\!\!-\!\!\varepsilon_{\mathrm{G}}}\!\!-\!\!\eta_{\mathrm{L}}^{2}\!\left(1\!\!+\!\!\lambda^{2}\right)\!\frac{G}{1\!\!-\!\!\varepsilon_{\mathrm{L}}}\!\!-\!\!\frac{\beta\varphi_{\mathrm{L}}}{(1\!\!-\!\!\varepsilon_{\mathrm{L}})\!(\varepsilon_{\mathrm{L}}\!\!-\!\!\varepsilon_{\mathrm{G}})}T\!\Big)\varepsilon_{\mathrm{L}}^{T}\!\!+\!\!\\&\Big(-\frac{\beta\Psi_{1}}{\varepsilon_{\mathrm{L}}-\varepsilon_{\mathrm{G}}}+\frac{\beta\varphi_{\mathrm{L}}}{(1-\varepsilon_{\mathrm{G}})(\varepsilon_{\mathrm{L}}-\varepsilon_{\mathrm{G}})}T\Big)\varepsilon_{\mathrm{G}}^{T}+\\&\frac{\beta\varphi_{\mathrm{L}}}{(1-\varepsilon_{\mathrm{G}})(1-\varepsilon_{\mathrm{L}})}T+\eta_{\mathrm{L}}^{2}\left(1+\lambda^{2}\right)\frac{G}{1-\varepsilon_{\mathrm{L}}}
\end{split}
\\
=&(H_{1}\!+\!H_{2}T)\varepsilon_{\mathrm{L}}^{T}\!+\!(H_{3}\!+\!H_{4}T)\varepsilon_{\mathrm{G}}^{T}\!+\!H_{5}T\!+\!H_{6} \label{2Devi2} 
        \,,
	\end{align}
	\end{subequations}
where  $\beta=\frac{4\eta_{\mathrm{L}}^{2}\lambda^{2} +2\eta_{\mathrm{L}}\lambda^{2}}{\mu}$, $H_{1}=\Psi_{2}+\frac{\beta\Psi_{1}}{\varepsilon_{\mathrm{L}}-\varepsilon_{\mathrm{G}}}-\eta_{\mathrm{L}}^{2}\!\left(1\!\!+\!\!\lambda^{2}\right)\frac{G}{1-\varepsilon_{\mathrm{L}}}$, $H_{2}=-\frac{\beta\varphi_{\mathrm{L}}}{(1-\varepsilon_{\mathrm{L}})(\varepsilon_{\mathrm{L}}-\varepsilon_{\mathrm{G}})}$, $H_{3}=-\frac{\beta\Psi_{1}}{\varepsilon_{\mathrm{L}}-\varepsilon_{\mathrm{G}}}$, $H_{4}=\frac{\beta\varphi_{\mathrm{L}}}{(1-\varepsilon_{\mathrm{G}})(\varepsilon_{\mathrm{L}}-\varepsilon_{\mathrm{G}})}$, $H_{5}=\frac{\beta\varphi_{\mathrm{L}}}{(1-\varepsilon_{\mathrm{G}})(1-\varepsilon_{\mathrm{L}})}>0$, and $H_{6}=\eta_{\mathrm{L}}^{2}\!\left(1\!\!+\!\!\lambda^{2}\right)\frac{G}{1-\varepsilon_{\mathrm{L}}}>0$. 
}

Let $H_{0}$ denote the lower bound of $(H_{1}+H_{2}T)\varepsilon_{\mathrm{L}}^{T}+(H_{3}+H_{4}T)\varepsilon_{\mathrm{G}}^{T}+H_{6}$ in (\ref{2Devi2}).
Then, $h(T,\lambda)$ has a linear lower bound, denoted by $h_{\rm Low}(T)$; that is
\begin{equation}
\begin{aligned}
    h(T,\lambda)&\geq h_{\rm Low}(T)=H_{0}+H_5T, 
    \\
    \text{with} \,(H_{1}+H_{2}T)&\varepsilon_{\mathrm{L}}^{T}+(H_{3}+H_{4}T)\varepsilon_{\mathrm{G}}^{T}+H_{6}\geq H_0,
    \end{aligned}
\end{equation}

\begin{enumerate}
\item When $\varepsilon_{\mathrm{L}}>\varepsilon_{\mathrm{G}}$, $H_2<0$. 
Then, $H_2T\varepsilon_{\mathrm{L}}^T$ first decreases and then increases in $T$. There exists $\underset{T}{\min}(H_2T\varepsilon_{\mathrm{Low}}^T)$.  
Also, $H_3<0$. $H_3\varepsilon_{\mathrm{G}}^T \geq H_3$ since $H_3\varepsilon_{\mathrm{G}}^T$ is an increasing function of~$T$.  
$H_4>0$; thus, $H_4T\varepsilon_{\mathrm{G}}^T\geq 0$. 
Considering two possible cases concerning $H_1$, $H_{0}$ is given by 
\begin{enumerate}
    \item If  {$H_1\geq 0$}, then $H_{0}=\underset{T}{\min}(H_2T\varepsilon_{\mathrm{Low}}^T)+H_{3}+H_6$;
    \item If $H_1<0$, then $H_{0}=H_1+\underset{T}{\min}(H_2T\varepsilon_{\mathrm{Low}}^T)+H_{3}+H_6$.
\end{enumerate}
\item When $\varepsilon_{\mathrm{L}}<\varepsilon_{\mathrm{G}}$, $H_2>0$. Hence, $H_2T\varepsilon_{\mathrm{L}}^T \geq 0$. Also, $H_3>0$ and hence $H_3\varepsilon_{\mathrm{G}}^T>0$. Moreover, $H_4<0$ and thus $H_4T\varepsilon_{\mathrm{G}}^T$ first decreases and then increases with respect to $T$, with the existence of $\underset{T}{\min}(H_4T\varepsilon_{\mathrm{Low}}^T)$ confirmed. 
Considering two possible cases concerning $H_1$, $H_{0}$ can be given by
\begin{enumerate}
    \item If  {$H_1\geq 0$}, then $H_{0}=\underset{T}{\min}(H_4T\varepsilon_{\mathrm{Low}}^T)+H_6$;
    \item If $H_1<0$, then $H_{0}=H_1+\underset{T}{\min}(H_4T\varepsilon_{\mathrm{Low}}^T)+H_6$.
\end{enumerate}
\end{enumerate}
As illustrated in Fig.~\ref{fig:lower_bound_1}, given $\lambda$, the optimal $T$, denoted by $T^*$, which minimizes $h(T,\lambda)$, is within $(0,T')$, where $T'$ satisfies $h_{\rm Low}(T')=h(0,\lambda)$, i.e., $T'=\frac{H_{1}+H_{3}+H_6-H_{0}}{H_{5}}$. This is because $h(T,\lambda)>h_{\rm Low}(T')=h(0,\lambda)\geq h(T^*,\lambda),\,\forall T>T'$. As a result, $T^*$ can be obtained effectively using a one-dimensional search with a step size of 1 within~$(0,T')$.  

 \subsubsection{When $\varepsilon_{\mathrm{L}} = \varepsilon_{\mathrm{G}}$} $h(T,\lambda)$ is given by the RHS of (\ref{theoConverT_LG}):
\begin{subequations} 
          \label{2Devi_LG}
	\begin{align}
        \label{2Devi1_LG}
        \begin{split}
h(T,&\lambda)=\Big(-\frac{\left(1+\lambda^{2}\right)\eta_{\mathrm{L}}^{2}G}{1-\varepsilon_{\mathrm{L}}}+\Psi_{2}+\frac{(4\eta_{\mathrm{L}}^{2}+2\eta_{\mathrm{L}})\lambda^{2}}{\varepsilon_{\mathrm{L}}\mu}\cdot\\
&\frac{\Psi_{1}(1-\varepsilon_{\mathrm{L}})^{2}-\varepsilon_{\mathrm{L}}\varphi_{\mathrm{L}}}{(1-\varepsilon_{\mathrm{L}})^{2}}T-\frac{(4\eta_{\mathrm{L}}^{2}+2\eta_{\mathrm{L}})\lambda^{2}\varphi_{\mathrm{L}}}{\varepsilon_{\mathrm{L}}\mu(1-\varepsilon_{\mathrm{L}})}T^{2}\Big)\varepsilon_{\mathrm{L}}^{T}\\
&-\frac{\left(1+\lambda^{2}\right)\eta_{\mathrm{L}}^{2}G}{1-\varepsilon_{\mathrm{L}}}-\frac{(4\eta_{\mathrm{L}}^{2}+2\eta_{\mathrm{L}})\lambda^{2}}{\mu(1-\varepsilon_{\mathrm{L}})}
\end{split}
\\
&=\left(\mathcal{H}_1+\mathcal{H}_2T+\mathcal{H}_3T^{2}\right)\varepsilon_{\mathrm{L}}^{T}+\mathcal{H}_4
\label{2Devi2_LG} 
        \,,
	\end{align}
	\end{subequations}
 where, for the brevity of notation, $\mathcal{H}_{1}=-\frac{\left(1+\lambda^{2}\right)\eta_{\mathrm{L}}^{2}G}{1-\varepsilon_{\mathrm{L}}}+\Psi_{2}$, $\mathcal{H}_2=\frac{(4\eta_{\mathrm{L}}^{2}+2\eta_{\mathrm{L}})\lambda^{2}}{\varepsilon_{\mathrm{L}}\mu}(\Psi_{1}-\frac{\varepsilon_{\mathrm{L}}\varphi_{\mathrm{L}}}{(1-\varepsilon_{\mathrm{L}})^{2}})$, $\mathcal{H}_{3}=-\frac{(4\eta_{\mathrm{L}}^{2}+2\eta_{\mathrm{L}})\lambda^{2}\varphi_{\mathrm{L}}}{\varepsilon_{\mathrm{L}}\mu(1-\varepsilon_{\mathrm{L}})}<0$, and $\mathcal{H}_{4}=-\frac{\left(1+\lambda^{2}\right)\eta_{\mathrm{L}}^{2}G}{1-\varepsilon_{\mathrm{L}}}-\frac{(4\eta_{\mathrm{L}}^{2}+2\eta_{\mathrm{L}})\lambda^{2}}{\mu(1-\varepsilon_{\mathrm{L}})}<0$.

Let 
$\mathcal{H}_0$ 
denote the lower bound of $\left(\mathcal{H}_1+\mathcal{H}_2T\right)\varepsilon_{\mathrm{L}}^{T}+\mathcal{H}_4$ in (\ref{2Devi2_LG}). Then, the lower bound of $h(T,\lambda)$ is given by
\begin{equation}
    h(T,\lambda)\geq h_{\rm Low}(T)=\mathcal{H}_0+\mathcal{H}_3T^{2}\varepsilon_{\mathrm{L}}^{T},
\end{equation}
Since $\mathcal{H}_3<0$, $\mathcal{H}_3T^{2}\varepsilon_{\mathrm{L}}^{T}$ first decreases and then increases.

\begin{enumerate}
\item When $\mathcal{H}_2<0$, $\mathcal{H}_2T\varepsilon_{\mathrm{L}}^T$ first decreases and then increases with respect to $T$. Then, $\underset{T}{\min}(\mathcal{H}_2T\varepsilon_{\mathrm{Low}}^T)$ exists. Considering two possible cases concerning $\mathcal{H}_1$, $\mathcal{H}_0$ is given by
\begin{enumerate}
\item If $\mathcal{H}_1 \geq 0$, then $\mathcal{H}_0=\underset{T}{\min}(\mathcal{H}_2T\varepsilon_{\mathrm{Low}}^T)+\mathcal{H}_4$;
\item If $\mathcal{H}_1 < 0$, then $\mathcal{H}_0=\mathcal{H}_1+\underset{T}{\min}(\mathcal{H}_2T\varepsilon_{\mathrm{Low}}^T)+\mathcal{H}_4$.
\end{enumerate}
\item When $\mathcal{H}_2 \geq 0$, $\mathcal{H}_2T\varepsilon_{\mathrm{L}}^T \geq 0$. Considering the two possible cases concerning $\mathcal{H}_1$, $\mathcal{H}_0$ is given by
\begin{enumerate}
\item If $\mathcal{H}_1 \geq 0$, then $\mathcal{H}_0=\mathcal{H}_4$;
\item If $\mathcal{H}_1 < 0$, then $\mathcal{H}_0=\mathcal{H}_1+\mathcal{H}_4$.
\end{enumerate}
\end{enumerate}
Similarly, the optimal $T^*$ under a given $\lambda$ can be obtained using a one-dimensional search with a step size of 1 within $(0,T'')$, where $T''$ satisfies $h_{\rm Low}(T'')=h(0,\lambda)$.

In both cases, clearly, the convergence of PL depends heavily on that of FL and, in turn, the DP in PFL.

\begin{figure}[!t]
\centering
        \includegraphics[width=0.4\textwidth]{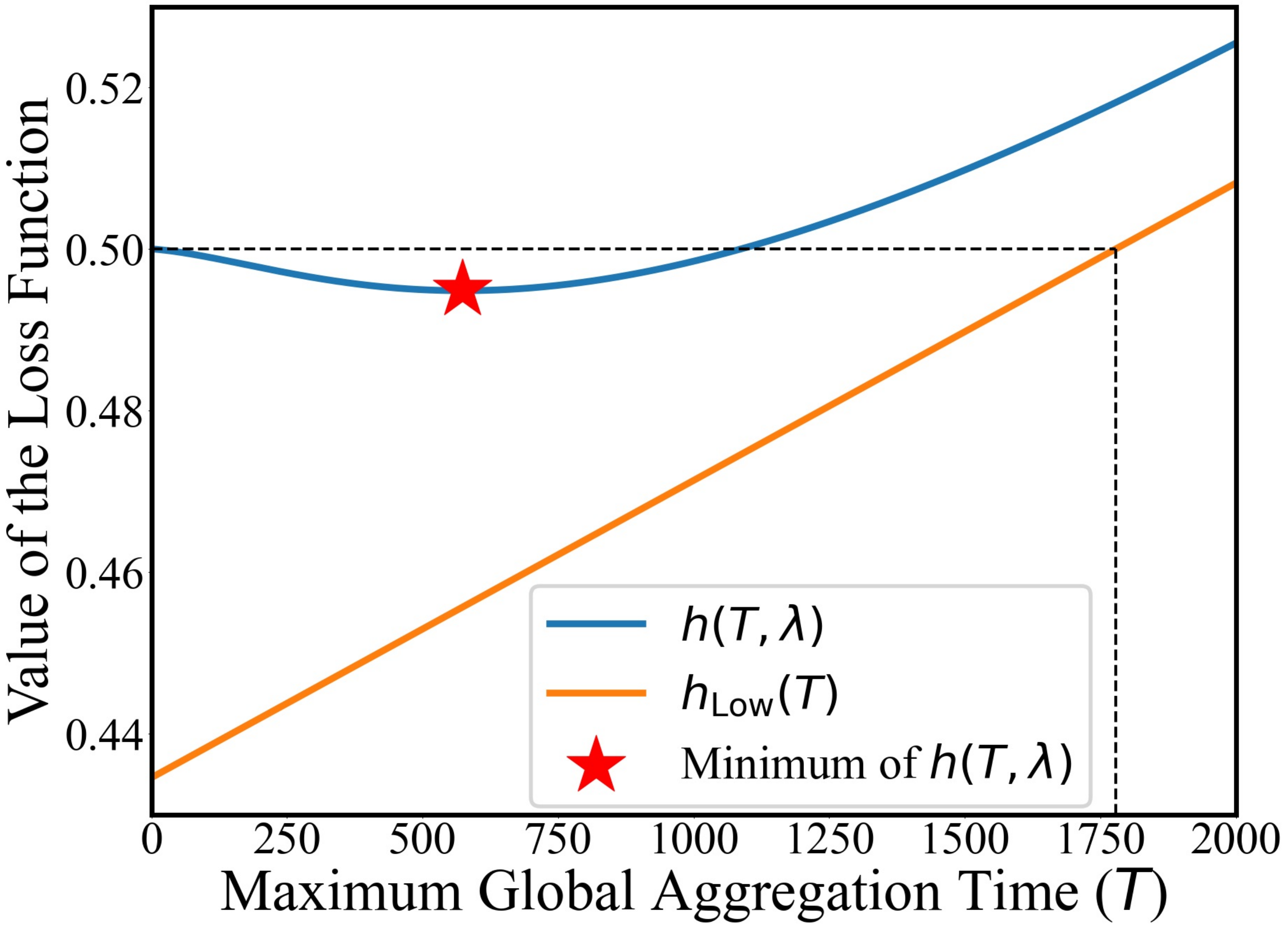}
 	\caption{Comparison between $h(T,\lambda)$ and its lower bound {$h_{\rm Low}(T)$}, where the simulation parameters are as specified for the MLR model trained on the MNIST dataset, i.e., $\lambda=0.1$, $\epsilon=100$, and $\delta=0.01$; see Section VI-B.}
 	\label{fig:lower_bound_1}
 	\end{figure}

\section{Fairness Analysis of Privacy-Preserving PFL}

In this section, we analyze the fairness of PL under privacy constraints and uncover an opportunity to maximize the fairness by optimizing $\lambda$. 
{
We focus on the fairness of performance distribution to measure the degree of uniformity in performance across the clients. 
Several fairness metrics have been adopted in FL, including distance (e.g., cosine distance~\cite{heidarian2016hybrid} and Euclidean distance~\cite{aggarwal2001surprising}), variance~\cite{li2021ditto}, risk difference (e.g., demographic parity~\cite{dwork2012fairness} and equal opportunity~\cite{hardt2016equality}), and Jain's fairness index (JFI)~\cite{cho2020bandit}. Among these metrics, variance and JFI are suitable for measuring the fairness of performance distribution for FL/PFL. This is because distance metrics are suitable for the contribution fairness of participants, while risk difference metrics are adequate for group fairness to eliminate prejudice toward some specific groups. While variance and JFI are typically negatively correlated, variance is more sensitive to outliers. For this reason, we adopt variance as the metric of fairness to encourage uniformly distributed performance across the PL models.
The definition of fairness measured by variance is provided as follows.

\vspace{4 mm}
\begin{definition}[Fairness \cite{li2021ditto}]
\label{DefFairness}
For a group of personalized models $\{\mathbf{\boldsymbol{\varpi}}_{n}\}_{n\in\mathbb{N}}$, the performance distribution fairness is measured by 
\begin{equation}
    \label{DefEquaFair}    \varrho(\boldsymbol{\varpi}_{n})=\mathrm{var_{\mathit{N}}}\left[F_{n}\left(\boldsymbol{\varpi}_{n}\left(\lambda\right)\right)\right] \,,
\end{equation} 
where 
$\mathrm{var_{\mathit{N}}}\left[F_{n}\left(\boldsymbol{\varpi}_{n}\left(\lambda\right)\right)\right]$ is the variance across the local training losses of the personalized models of all clients. A set of models $\{\mathbf{\boldsymbol{\varpi}}'_{n}\}_{n\in\mathbb{N}}$ is fairer than another set $\{\mathbf{\boldsymbol{\varpi}}_{n}\}_{n\in\mathbb{N}}$ if {$\varrho'=\mathrm{var_{\mathit{N}}}\{F_n({\boldsymbol{\varpi}}'_{n})\}_{n\in\mathbb{N}}<\varrho$. }
The optimal $\lambda$, denoted by $\lambda^{\ast}$, is defined as
\begin{equation}
    \label{DefOptimalLambda}
    \lambda^{\ast}=\underset{\lambda}{\arg\min}\,\mathbb{E}\left\{ \varrho(\boldsymbol{\varpi}_{n}^{\ast}(\lambda))\right\} ,
\end{equation}
where, given~$\lambda$, $\boldsymbol{\varpi}_{n}^{\ast}\left(\lambda\right)$ is client $n$'s optimal PL model.
\end{definition}}

\subsection{Personalized Bayesian Linear Regression With DP}
It is generally different to analyze the performance fairness of PFL, since the performance, e.g., loss, of PFL is typically analytically unavailable and can only be empirically obtained a-posteriori after PFL training.
With reference to Ditto~\cite{li2021ditto},
we consider that each client trains a personalized Bayesian linear regression model to shed useful light on the fairness of general ML models.
Bayesian linear regression models treat regression coefficients and the disturbance variance as random variables 
\cite{seber2003linear}. We set the optimal FL global model $\boldsymbol{\omega}^{\ast}$ as the non-information prior on $\mathbb{R}^{d}$, i.e., uniformly distributed on $\mathbb{R}^{d}$. 

Suppose the optimal FL local model $\boldsymbol{u_n}^{\ast}$ of client $n$ is distributed around the optimal FL global model $\boldsymbol{\omega}^{\ast}$:
\begin{equation}
    \label{u_n}
    \boldsymbol{u}_{n}^{\ast}=\boldsymbol{\omega}^{\ast}+\boldsymbol{\tau}_{n}  \,,
\end{equation}
where $\boldsymbol{\tau}_{n}\sim \mathcal{N}(0,\zeta^{2}\mathbf{I}_{d})$, $\forall n$ are i.i.d. random variables, and $\mathbf{I}_{d}$ is the $d\times d$ identity matrix. 
The local data of client $n$ satisfies 
\begin{equation}
    \label{localmodel}
    \mathbf{Y}_{n}=\mathbf{X}_{n}\boldsymbol{u}_{n}^{\ast}+\boldsymbol{\nu}_{n}  \,.
\end{equation}
Here, $\mathbf{X}_{n}\in\mathbb{R}^{b\times d}$ and $\mathbf{Y}_{n} \in \mathbb{R}^{b}$ denote $b$ samples of client $n$, and $\boldsymbol{\nu}_{n}\in\mathbb{R}^{b}$ under the assumption of {$\boldsymbol{\nu}_{n}\sim \mathcal{N}(0,\sigma^{2}\mathbf{I}_{b})$. }

Given the samples $(\mathbf{X}_{n}$, $\mathbf{Y}_{n})$, the loss function of the Bayesian linear regression problem at client $n$ is written as
\begin{equation}
    \label{loss}
    F_{n}\left(\boldsymbol{u}_{n}\right)=\frac{1}{b}\parallel \mathbf{X}_{n}\boldsymbol{u}_{n}-\mathbf{Y}_{n}\parallel^{2}.
\end{equation}
By minimizing (\ref{loss}), the estimate of $\boldsymbol{u}_{n}^{\ast}$ is~\cite[Eq.~(12)]{li2021ditto}
\begin{equation}
    \label{estimatedmodel}    \hat{\boldsymbol{u}}_{n}=\left(\mathbf{X}_{n}^{\intercal}\mathbf{X}_{n}\right)^{-1}\mathbf{X}_{n}^{\intercal}\mathbf{Y}_{n}  .
\end{equation}
By plugging (\ref{loss}) into (\ref{fn_1}), the optimal global model is
\begin{subequations}
    \label{g_loss}
\begin{align}
\boldsymbol{\omega}^{\ast}&=\underset{\boldsymbol{\omega}}{\arg\min}\, \frac{1}{Nb} \sum_{n=1}^{N} \parallel \mathbf{X}_{n}\boldsymbol{\omega}-\mathbf{Y}_{n}\parallel^{2} \\
&=\sum_{n=1}^{N}\left(\mathbf{X}^{\intercal}\mathbf{X}\right)^{-1}\mathbf{X}_{n}^{\intercal}\mathbf{X}_{n}\hat{\boldsymbol{u}}_n ,    
\end{align}
\end{subequations}
where $\mathbf{X}\in\mathbb{R}^{Nb\times d}$ collects $Nb$ samples of all clients, i.e., $\mathbf{X}=(\mathbf{X}_{1}^{\top}\: \mathbf{X}_{2}^{\top}\: \cdots \: \mathbf{X}_{N}^{\top} )^{\top}$.

With DP, the server receives the perturbed version of 
the estimated local model, denoted by $\overset{\sim}{\boldsymbol{u}}_{n}$, as given by 
\begin{equation}
    \label{noisyestimation1}
    \overset{\sim}{\boldsymbol{u}}_{n}=\hat{\boldsymbol{u}}_n+\mathbf{z}_{n}  = \left(\mathbf{X}_{n}^{\intercal}\mathbf{X}_{n}\right)^{-1}\mathbf{X}_{n}^{\intercal}\mathbf{Y}_{n}+\mathbf{z}_{n} \,.
\end{equation} 
By replacing $\hat{\boldsymbol{u}}_n$ with $\overset{\sim}{\boldsymbol{u}}_n$, the DP perturbed version of the optimal global model, denoted by $\overset{\sim}{\boldsymbol{\omega}}^{\ast}$, is given by
	\begin{equation} 
     \overset{\sim}{\boldsymbol{\omega}}^{\ast}=\sum_{n=1}^{N} \left(\mathbf{X}^{\intercal}\mathbf{X}\right)^{-1}\mathbf{X}_{n}^{\intercal}\mathbf{X}_{n}\overset{\sim}{\boldsymbol{u}}_n,
     \label{omega*2}
	\end{equation}

\begin{lemma}
\label{noisy_pL}
With $\mathbf{X}_{n}^{\intercal}\mathbf{X}_{n}=\rho\mathbf{I}_{d}$, $\forall n \in \mathbb{N}$ under independently sampled or generated data samples \cite{li2021ditto}, 
the optimal PL model with DP, $\overset{\sim}{\boldsymbol{\varpi}_{n}^{\ast}}\left(\lambda\right)$, and the optimal FL local model, $\boldsymbol{u}_n^{\ast}$ in (\ref{u_n}), can be written as
	\begin{equation} 
	\begin{split}
        \label{Optimal_w_n}
     \overset{\sim}{\boldsymbol{\varpi}}_{n}^{\ast}\left(\lambda\right)=&\frac{b}{\left(2-\lambda\right)\rho+b\lambda}\left(\left(\frac{\left({2-\lambda}\right)\rho}{b}+\frac{\lambda}{N}\right)\hat{\boldsymbol{u}}_{n} \right.\\
     &\left.+\frac{\lambda}{N}\underset{m\in \mathbb{N},m\neq n}{\sum}\hat{\boldsymbol{u}}_{m}+\frac{\lambda}{N} \sum_{n=1}^{N} {\mathbf{z}_{n}}\right) ;
	\end{split}
	\end{equation} 
	\begin{equation} 
        \label{Optimal_u_n}
     \boldsymbol{u}_{n}^{\ast}=\frac{\sigma_{w}^{2}\rho}{\sigma^{2}}\hat{\boldsymbol{u}}_{n}+ 
     \frac{\sigma_{w}^{2}\rho}{\sigma^{2}+N\zeta^{2}\rho}\underset{m\in\mathbb{N},m\neq n}{\sum}\hat{\boldsymbol{u}}_{m}+\boldsymbol{\vartheta}_{n} , 
	\end{equation}
where $\boldsymbol{\vartheta}_{n}\sim \mathcal{N}(0,\sigma_{w}^{2} \mathbf{I}_{d})$ and $\sigma_{w}^{2}=\left(\frac{N-1}{\frac{\sigma^{2}}{\rho}+N\zeta^{2}}+\frac{\rho}{\sigma^{2}}\right)^{-1}$.
\end{lemma}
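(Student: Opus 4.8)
The plan is to establish the two formulas separately, since they rest on different mechanisms: \eqref{Optimal_w_n} follows from a deterministic first-order optimality condition, whereas \eqref{Optimal_u_n} follows from a conjugate-Gaussian (Bayesian posterior) computation in the spirit of Ditto~\cite{li2021ditto}.

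For \eqref{Optimal_w_n}, I would form the DP-perturbed personalized objective by replacing $\boldsymbol{\omega}^{\ast}$ with $\overset{\sim}{\boldsymbol{\omega}}^{\ast}$ in \eqref{fn} and substituting the quadratic loss \eqref{loss}. Using $\mathbf{X}_n^{\intercal}\mathbf{X}_n=\rho\mathbf{I}_d$, its Hessian is $\big(\tfrac{(2-\lambda)\rho}{b}+\lambda\big)\mathbf{I}_d$, which is positive definite for every $\lambda\in[0,2]$, so the minimizer is unique and characterized by a vanishing gradient. Setting the gradient to zero and using $\mathbf{X}_n^{\intercal}\mathbf{Y}_n=\mathbf{X}_n^{\intercal}\mathbf{X}_n\hat{\boldsymbol{u}}_n=\rho\hat{\boldsymbol{u}}_n$ from \eqref{estimatedmodel}, one obtains the linear equation $\big(\tfrac{(2-\lambda)\rho}{b}+\lambda\big)\overset{\sim}{\boldsymbol{\varpi}}_n^{\ast}=\tfrac{(2-\lambda)\rho}{b}\hat{\boldsymbol{u}}_n+\lambda\overset{\sim}{\boldsymbol{\omega}}^{\ast}$, i.e., $\overset{\sim}{\boldsymbol{\varpi}}_n^{\ast}=\tfrac{b}{(2-\lambda)\rho+b\lambda}\big(\tfrac{(2-\lambda)\rho}{b}\hat{\boldsymbol{u}}_n+\lambda\overset{\sim}{\boldsymbol{\omega}}^{\ast}\big)$. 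It then remains to substitute the closed form of $\overset{\sim}{\boldsymbol{\omega}}^{\ast}$: since $\mathbf{X}_n^{\intercal}\mathbf{X}_n=\rho\mathbf{I}_d$ gives $\mathbf{X}^{\intercal}\mathbf{X}=N\rho\mathbf{I}_d$, \eqref{omega*2} reduces to $\overset{\sim}{\boldsymbol{\omega}}^{\ast}=\tfrac{1}{N}\sum_{m=1}^{N}\overset{\sim}{\boldsymbol{u}}_m=\tfrac{1}{N}\sum_{m=1}^{N}(\hat{\boldsymbol{u}}_m+\mathbf{z}_m)$ by \eqref{noisyestimation1}; separating the $m=n$ term and regrouping yields \eqref{Optimal_w_n}.

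For \eqref{Optimal_u_n}, I would exploit the hierarchical Gaussian structure. Since $\hat{\boldsymbol{u}}_n$ in \eqref{estimatedmodel} is a sufficient statistic for $\boldsymbol{u}_n^{\ast}$ and, by \eqref{localmodel} with $\mathbf{X}_n^{\intercal}\mathbf{X}_n=\rho\mathbf{I}_d$, satisfies $\hat{\boldsymbol{u}}_n\mid\boldsymbol{u}_n^{\ast}\sim\mathcal{N}(\boldsymbol{u}_n^{\ast},\tfrac{\sigma^2}{\rho}\mathbf{I}_d)$, the model reduces to: a non-informative uniform prior on $\boldsymbol{\omega}^{\ast}$, $\boldsymbol{u}_m^{\ast}\mid\boldsymbol{\omega}^{\ast}\sim\mathcal{N}(\boldsymbol{\omega}^{\ast},\zeta^2\mathbf{I}_d)$ i.i.d. by \eqref{u_n}, and $\hat{\boldsymbol{u}}_m\mid\boldsymbol{u}_m^{\ast}\sim\mathcal{N}(\boldsymbol{u}_m^{\ast},\tfrac{\sigma^2}{\rho}\mathbf{I}_d)$. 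All covariances are isotropic, so I can argue coordinate by coordinate. I would (i) marginalize $\{\boldsymbol{u}_m^{\ast}\}_{m\neq n}$ so that $\hat{\boldsymbol{u}}_m\mid\boldsymbol{\omega}^{\ast}\sim\mathcal{N}(\boldsymbol{\omega}^{\ast},\gamma^2\mathbf{I}_d)$ with $\gamma^2=\zeta^2+\tfrac{\sigma^2}{\rho}$; (ii) write the joint posterior of $(\boldsymbol{\omega}^{\ast},\boldsymbol{u}_n^{\ast})$ given $\{\hat{\boldsymbol{u}}_m\}_{m=1}^{N}$ as the product of the corresponding Gaussian factors; (iii) complete the square in $\boldsymbol{\omega}^{\ast}$ and integrate it out; and (iv) read off, from the remaining quadratic in $\boldsymbol{u}_n^{\ast}$, its posterior precision and mean. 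Using the identity $(N-1)\zeta^2+\gamma^2=N\zeta^2+\tfrac{\sigma^2}{\rho}$, the posterior precision collapses to $\tfrac{N-1}{\sigma^2/\rho+N\zeta^2}+\tfrac{\rho}{\sigma^2}$, i.e., the posterior variance is exactly $\sigma_w^2$, and the posterior mean becomes $\tfrac{\sigma_w^2\rho}{\sigma^2}\hat{\boldsymbol{u}}_n+\tfrac{\sigma_w^2\rho}{\sigma^2+N\zeta^2\rho}\sum_{m\neq n}\hat{\boldsymbol{u}}_m$. Writing $\boldsymbol{u}_n^{\ast}$ as this posterior mean plus the zero-mean fluctuation $\boldsymbol{\vartheta}_n\sim\mathcal{N}(0,\sigma_w^2\mathbf{I}_d)$ gives \eqref{Optimal_u_n}.

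The main obstacle is the algebraic bookkeeping in the second part: marginalizing the nested Gaussians produces a quadratic form coupling all the $\hat{\boldsymbol{u}}_m$, and one must carefully collapse the chained precisions so that the three separate quantities --- the posterior variance $\sigma_w^2$, the self-coefficient $\tfrac{\sigma_w^2\rho}{\sigma^2}$, and the cross-coefficient $\tfrac{\sigma_w^2\rho}{\sigma^2+N\zeta^2\rho}$ --- all emerge in precisely the stated closed forms; the decisive simplification is the cancellation $(N-1)\zeta^2+\gamma^2=N\zeta^2+\sigma^2/\rho$. One should also verify that the improper uniform prior on $\boldsymbol{\omega}^{\ast}$ still yields a proper posterior, which it does because the $N$ local estimates pin $\boldsymbol{\omega}^{\ast}$ down. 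By comparison, the first part is routine once the closed form of $\overset{\sim}{\boldsymbol{\omega}}^{\ast}$ under $\mathbf{X}_n^{\intercal}\mathbf{X}_n=\rho\mathbf{I}_d$ is in hand.
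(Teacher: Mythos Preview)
Your treatment of \eqref{Optimal_w_n} is essentially identical to the paper's: the paper also sets the first-order condition for $f_n(\boldsymbol{\varpi}_n;\boldsymbol{\omega}^{\ast})$ to zero, replaces $\boldsymbol{\omega}^{\ast}$ by $\overset{\sim}{\boldsymbol{\omega}}^{\ast}$, and then plugs in \eqref{omega*2} and \eqref{noisyestimation1} before specializing via $\mathbf{X}_n^{\intercal}\mathbf{X}_n=\rho\mathbf{I}_d$.

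For \eqref{Optimal_u_n} you take a genuinely different route. The paper does \emph{not} redo the Bayesian posterior computation; it simply quotes the general posterior formula for $\boldsymbol{u}_n^{\ast}$ from \cite[Lemma~6]{li2021ditto} (their equations for $\Phi_n$, $\Phi_{\setminus n}(\boldsymbol{\omega}^{\ast})$, $\Phi_{\setminus n}(\boldsymbol{u}_n^{\ast})$, $\Phi(\boldsymbol{u}_n^{\ast})$, $\boldsymbol{u}_{\setminus n}(\boldsymbol{\omega}^{\ast})$) and then substitutes $\mathbf{X}_n^{\intercal}\mathbf{X}_n=\rho\mathbf{I}_d$ sequentially into those matrix expressions to collapse them to the scalar constants $\sigma_w^2$, $\tfrac{\sigma_w^2\rho}{\sigma^2}$, and $\tfrac{\sigma_w^2\rho}{\sigma^2+N\zeta^2\rho}$. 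Your approach instead rederives the posterior from scratch by marginalizing the hierarchical Gaussian model coordinatewise and completing the square. Both are correct; your version is more self-contained and makes transparent \emph{why} the three constants take the stated form (via the cancellation $(N-1)\zeta^2+\gamma^2=N\zeta^2+\sigma^2/\rho$ you identify), whereas the paper's version is shorter but leans entirely on the cited lemma and leaves the algebraic specialization implicit.
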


\begin{proof}
    See \textbf{Appendix \ref{BLR_models}}.
\end{proof}

We note that in Ditto, the optimal personalized models ${\boldsymbol{\varpi}_{n}^{\ast}}\left(\lambda\right)$, $\forall n \in \mathbb{N}$ are deterministic, provided the estimated local models $\hat{\boldsymbol{u}}_{n}$ ($\forall n \in \mathbb{N}$) and $\lambda$.
{
By contrast, in DP-Ditto, the PL models $\overset{\sim}{\boldsymbol{\varpi}}_{n}^{\ast}\left(\lambda\right)$, $\forall n \in \mathbb{N}$ are not deterministic. This is because the DP noise, $\mathbf{z}_{n}$, is added on the estimated local models $\hat{\boldsymbol{u}}_n$ for the FL global model updating and, consequently, it is coupled with $\lambda$ in $\overset{\sim}{\boldsymbol{\varpi}}_{n}^{\ast}\left(\lambda\right)$; see (\ref{Optimal_w_n}). Considering the difference between the optimal FL local model $\boldsymbol{u}_{n}^{\ast}$ in (\ref{Optimal_u_n}) and its estimate $\hat{\boldsymbol{u}}_{n}$ is random and captured by $\boldsymbol{\vartheta}_{n}$, we have to analyze the joint distribution of the two random variables $\mathbf{z}_{n}$ in (\ref{Optimal_w_n}) and $\boldsymbol{\vartheta}_{n}$ {in (\ref{Optimal_u_n})}, $\forall n \in \mathbb{N}$, and obtain the fairness expression with respect to $\lambda$.
}

Given the optimal FL local model without DP, $\boldsymbol{u}_n^{\ast}$, and the optimal PL model with DP, $\overset{\sim}{\boldsymbol{\varpi}}_n^{\ast}(\lambda)$, the fairness $\varrho(\overset{\sim}{\boldsymbol{\varpi}_{n}^{\ast}}(\lambda))$ of the PL models among all clients is established, as follows.

\begin{theorem}
\label{FAIRNESS}
Given $\lambda$ and the variance $\sigma_{z}^{2}$ of the DP noise, the fairness of the personalized Bayesian linear regression model, $R(\lambda)$, can be measured by
\begin{equation} 
	\begin{split}
        \label{fairness_R}
        \!\!\!\!R(\lambda)&\!=   \! 2d\!\left[\sigma_{w}^{2}\!+\!\left[\alpha_{0}\left(\lambda\right)\right]^{2}\frac{\sigma_{z}^{2}}{N^{2}}\right]\!\!+\!4\!\left[\sigma_{w}^{2}\!+\!\left[\alpha_{0}\left(\lambda\right)\right]^{2}\!\frac{\sigma_{z}^{2}}{N^{2}}\right] \!\times\! \\
        &\left(S_{1}\!-\!S_{2}\alpha_{0}\left(\lambda\right)\right)^{2}G_{1}\!+\!\left[S_{1}\!-\!S_{2}\alpha_{0}\left(\lambda\right)\right]^{4}\left(G_{2}\!-\!G_{1}^{2}\right),
	\end{split}
	\end{equation} 
where $\alpha_{0}\left(\lambda\right)=\frac{b\lambda}{\left(2-\lambda\right)\rho+b\lambda}$, $S_{1}=\frac{\sigma^{2}}{N\left(\sigma^{2}+\rho\zeta^{2}\right)}$, $S_{2}=\frac{1}{N}$, $G_{1}= \sum_{l=1}^{d} \frac{1}{N}\sum_{n=1}^{N}\left[\alpha_{nl}^{2}\right]$, and $G_{2}=\frac{1}{N}\sum_{n=1}^{N}\left[\left( \sum_{l=1}^{d} \alpha_{nl}^{2}\right)^{2}\right]$. 
\end{theorem}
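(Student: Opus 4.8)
The plan is to obtain $R(\lambda)=\mathbb{E}\{\varrho(\overset{\sim}{\boldsymbol{\varpi}}_{n}^{\ast}(\lambda))\}=\mathbb{E}\{\mathrm{var}_{N}[F_{n}(\overset{\sim}{\boldsymbol{\varpi}}_{n}^{\ast}(\lambda))]\}$ in closed form by substituting the expressions of \textbf{Lemma~\ref{noisy_pL}} into \textbf{Definition~\ref{DefFairness}} and then averaging first over the clients and afterwards over the Gaussian noises (conditional on the local estimates). First, I would evaluate $F_{n}(\cdot)$ at $\overset{\sim}{\boldsymbol{\varpi}}_{n}^{\ast}(\lambda)$; using the generative model $\mathbf{Y}_{n}=\mathbf{X}_{n}\boldsymbol{u}_{n}^{\ast}+\boldsymbol{\nu}_{n}$ together with $\mathbf{X}_{n}^{\intercal}\mathbf{X}_{n}=\rho\mathbf{I}_{d}$, the loss at the optimal personalized model splits into a part that does not depend on $\lambda$ (the component orthogonal to the column space of $\mathbf{X}_{n}$, i.e., the irreducible noise floor), which is annihilated by $\mathrm{var}_{N}[\cdot]$, plus a positive multiple of $\parallel\overset{\sim}{\boldsymbol{\varpi}}_{n}^{\ast}(\lambda)-\boldsymbol{u}_{n}^{\ast}\parallel^{2}$. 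It is this quadratic form that I would carry forward.

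Second, I would simplify $\overset{\sim}{\boldsymbol{\varpi}}_{n}^{\ast}(\lambda)-\boldsymbol{u}_{n}^{\ast}$ using \textbf{Lemma~\ref{noisy_pL}}. Both $\overset{\sim}{\boldsymbol{\varpi}}_{n}^{\ast}(\lambda)$ in (\ref{Optimal_w_n}) and $\boldsymbol{u}_{n}^{\ast}$ in (\ref{Optimal_u_n}) are affine combinations of $\{\hat{\boldsymbol{u}}_{m}\}_{m\in\mathbb{N}}$ plus an independent Gaussian term, and in each the coefficients sum to one; the weight on $\hat{\boldsymbol{u}}_{m}$ ($m\neq n$) is $S_{2}\alpha_{0}(\lambda)=\alpha_{0}(\lambda)/N$ in $\overset{\sim}{\boldsymbol{\varpi}}_{n}^{\ast}(\lambda)$ and $S_{1}=\sigma^{2}/[N(\sigma^{2}+\rho\zeta^{2})]$ in $\boldsymbol{u}_{n}^{\ast}$, which is precisely why $S_{1}$ and $S_{2}$ surface. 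Since the coefficients of the difference sum to zero, the weight on $\hat{\boldsymbol{u}}_{n}$ is $-(N-1)$ times the weight on any other $\hat{\boldsymbol{u}}_{m}$, so the difference collapses to $\overset{\sim}{\boldsymbol{\varpi}}_{n}^{\ast}(\lambda)-\boldsymbol{u}_{n}^{\ast}=\bigl(S_{1}-S_{2}\alpha_{0}(\lambda)\bigr)\boldsymbol{\xi}_{n}+\mathbf{g}_{n}$, where $\boldsymbol{\xi}_{n}\triangleq N(\hat{\boldsymbol{u}}_{n}-\bar{\boldsymbol{u}})$ with $\bar{\boldsymbol{u}}\triangleq\frac{1}{N}\sum_{m}\hat{\boldsymbol{u}}_{m}$, and $\mathbf{g}_{n}\triangleq\frac{\alpha_{0}(\lambda)}{N}\sum_{m}\mathbf{z}_{m}-\boldsymbol{\vartheta}_{n}$. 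I would record the facts $\sum_{n}\boldsymbol{\xi}_{n}=\mathbf{0}$; $\mathbf{g}_{n}\sim\mathcal{N}(\mathbf{0},v(\lambda)\mathbf{I}_{d})$ with $v(\lambda)=\sigma_{w}^{2}+\alpha_{0}(\lambda)^{2}\sigma_{z}^{2}/N^{2}$; $\mathbf{g}_{n}$ independent of $\{\hat{\boldsymbol{u}}_{m}\}$ (the posterior innovation $\boldsymbol{\vartheta}_{n}$ is orthogonal to the observations by \textbf{Lemma~\ref{noisy_pL}}, and the DP noise is exogenous); and $\mathrm{Cov}(\mathbf{g}_{n},\mathbf{g}_{m})=\alpha_{0}(\lambda)^{2}\sigma_{z}^{2}N^{-2}\mathbf{I}_{d}$ for $n\neq m$, since all clients share the same aggregated DP term.

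Third, writing $c\triangleq S_{1}-S_{2}\alpha_{0}(\lambda)$ and letting $\alpha_{nl}$ denote the entries of $\boldsymbol{\xi}_{n}$, so that $\parallel\boldsymbol{\xi}_{n}\parallel^{2}=\sum_{l}\alpha_{nl}^{2}$, $G_{1}=\frac{1}{N}\sum_{n}\parallel\boldsymbol{\xi}_{n}\parallel^{2}$, and $G_{2}=\frac{1}{N}\sum_{n}\parallel\boldsymbol{\xi}_{n}\parallel^{4}$, the performance measure reads (up to a $\lambda$-free constant) $F_{n}=c^{2}\parallel\boldsymbol{\xi}_{n}\parallel^{2}+2c\,\boldsymbol{\xi}_{n}^{\intercal}\mathbf{g}_{n}+\parallel\mathbf{g}_{n}\parallel^{2}$. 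I would then form $\mathrm{var}_{N}[F_{n}]$, take the expectation over the joint Gaussian law of $\{\mathbf{z}_{m}\}$ and $\{\boldsymbol{\vartheta}_{n}\}$, and evaluate the resulting second- and fourth-order moments by Isserlis' theorem: odd moments in $\mathbf{g}_{n}$ vanish, $\mathbb{E}[\parallel\mathbf{g}_{n}\parallel^{2}]=d\,v(\lambda)$, $\mathbb{E}[(\boldsymbol{\xi}_{n}^{\intercal}\mathbf{g}_{n})^{2}]=v(\lambda)\parallel\boldsymbol{\xi}_{n}\parallel^{2}$, and the fourth moments of $\parallel\mathbf{g}_{n}\parallel^{2}$ factor through the per-coordinate variance $v(\lambda)$. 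Collecting the contributions, the noise-only part yields the group $2d[\sigma_{w}^{2}+\alpha_{0}(\lambda)^{2}\sigma_{z}^{2}/N^{2}]$, the cross term yields $4[\sigma_{w}^{2}+\alpha_{0}(\lambda)^{2}\sigma_{z}^{2}/N^{2}](S_{1}-S_{2}\alpha_{0}(\lambda))^{2}G_{1}$, and the data-quadratic part contributes $c^{4}\,\mathrm{var}_{N}[\parallel\boldsymbol{\xi}_{n}\parallel^{2}]=(S_{1}-S_{2}\alpha_{0}(\lambda))^{4}(G_{2}-G_{1}^{2})$, which together give exactly (\ref{fairness_R}).

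I expect the third step to be the main obstacle: one must keep careful track of which $n$-indexed quantities are mutually independent and which are coupled, both through the shared aggregated DP noise $\frac{\alpha_{0}(\lambda)}{N}\sum_{m}\mathbf{z}_{m}$ and through the linear constraint $\sum_{n}\boldsymbol{\xi}_{n}=\mathbf{0}$, and apply the fourth-moment (Isserlis) identities to $\parallel\mathbf{g}_{n}\parallel^{2}$ and $\boldsymbol{\xi}_{n}^{\intercal}\mathbf{g}_{n}$ without dropping or double-counting cross covariances. A secondary point needing care is the first step — verifying that the $\lambda$-independent residual in $F_{n}(\overset{\sim}{\boldsymbol{\varpi}}_{n}^{\ast}(\lambda))$ does not enter $\mathrm{var}_{N}[\cdot]$, and that the posterior innovation $\boldsymbol{\vartheta}_{n}$ supplied by \textbf{Lemma~\ref{noisy_pL}} is indeed independent of $\{\hat{\boldsymbol{u}}_{m}\}$ so that $\mathbf{g}_{n}$ is independent of $\boldsymbol{\xi}_{n}$, which is exactly the property that makes the joint distribution of the two random variables $\mathbf{z}_{n}$ and $\boldsymbol{\vartheta}_{n}$ tractable.
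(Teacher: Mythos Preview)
Your steps 2 and 3 are exactly the paper's argument: the paper writes $\overset{\sim}{\boldsymbol{\varpi}}_{n}^{\ast}(\lambda)-\boldsymbol{u}_{n}^{\ast}=\mathbf{A}_{n}+\mathbf{B}_{n}$ with $A_{nl}=\alpha_{1}(\lambda)\alpha_{nl}$ and $\mathbf{B}_{n}=\frac{\alpha_{0}(\lambda)}{N}\sum_{m}\mathbf{z}_{m}-\boldsymbol{\vartheta}_{n}$, which is precisely your $c\,\boldsymbol{\xi}_{n}+\mathbf{g}_{n}$ (indeed $N(\hat{\boldsymbol{u}}_{n}-\bar{\boldsymbol{u}})=(N-1)\hat{\boldsymbol{u}}_{n}-\sum_{m\neq n}\hat{\boldsymbol{u}}_{m}$, so your $\boldsymbol{\xi}_{n}$ has entries $\alpha_{nl}$), and your $v(\lambda)$ is the paper's $\sigma_{B}^{2}$. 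The paper then expands $\mathrm{var}_{N}\big[\sum_{l}(A_{nl}+B_{nl})^{2}\big]$, takes expectation over the Gaussian $B_{nl}$ (citing the fourth-moment identity from the Ditto paper rather than Isserlis by name), and substitutes $\alpha_{1}(\lambda)=S_{1}-S_{2}\alpha_{0}(\lambda)$ to arrive at (\ref{fairness_R}).

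One caution on your step~1. The split you describe is not correct as stated: with $\mathbf{X}_{n}^{\intercal}\mathbf{X}_{n}=\rho\mathbf{I}_{d}$, the loss decomposes around the least-squares estimate $\hat{\boldsymbol{u}}_{n}$, not around $\boldsymbol{u}_{n}^{\ast}$, namely $F_{n}(\boldsymbol{\varpi})=\tfrac{\rho}{b}\|\boldsymbol{\varpi}-\hat{\boldsymbol{u}}_{n}\|^{2}+\tfrac{1}{b}\|(I-P_{\mathbf{X}_{n}})\boldsymbol{\nu}_{n}\|^{2}$; the orthogonal residual is $n$-dependent (random in $\boldsymbol{\nu}_{n}$) and is \emph{not} killed by $\mathrm{var}_{N}[\cdot]$. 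The paper does not attempt this reduction at all: following the Ditto convention, it \emph{defines} $R(\lambda)$ directly as $\mathbb{E}\{\mathrm{var}_{N}[\|\boldsymbol{u}_{n}^{\ast}-\overset{\sim}{\boldsymbol{\varpi}}_{n}^{\ast}(\lambda)\|^{2}]\}$ in (\ref{opLambda1}) and proceeds from there. You should adopt the same starting point rather than trying to derive it from $F_{n}$; once you do, your steps 2--3 already complete the proof.
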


\begin{proof}
    See \textbf{Appendix \ref{pFAIRNESS}}.
\end{proof}


According to \textbf{Theorem \ref{FAIRNESS}}, DP degrades the fairness of PL, as $R(\lambda)$ increases with $\sigma_z^2$. On the other hand, the dependence of fairness on $\lambda$ is much more complex{, which is different from Ditto}. As will be revealed later, given $\sigma_z^2$, fairness depends on not only $\lambda$ but also the model clipping threshold $C$. The uniqueness of the optimal $\lambda$, i.e., $\lambda^*$, can be ascertained when~$C$ is sufficiently small.
{ Note that \textbf{Theorem~\ref{FAIRNESS}} holds under DP noises with other distributions, 
since the fairness $R(\lambda)$ depends only on the mean and variance 
of the DP noises, according to \textbf{Definition \ref{DefFairness}}.}

\subsection{Convergence-Privacy-Fairness Trade-off}

We analyze the existence of the optimal $\lambda^{\ast}$ and $T^*$ to balance the trade-off between the convergence, privacy, and fairness of DP-Ditto.
 For conciseness, we rewrite $\alpha_{0}\left(\lambda\right)$ and $R(\lambda)$ as $\alpha_{0}$ and $R$, respectively. 
\begin{theorem}
\label{Op_lambda}
    Given the DP noise variance $\sigma_{z}^{2}$, the optimal $\lambda^{\ast}$, which maximizes $R(\lambda)$, exists and is unique when the model clipping threshold $C<\frac{\sqrt{d}}{2NS_1}$. $\lambda^{\ast} \in [0,2]$ satisfies
	\begin{align}        
        &4d\frac{\sigma_{z}^{2}}{N^{2}}\alpha_{0}^{\ast}\!\!+\!\!8G_{1}\!\frac{\sigma_{z}^{2}}{N^{2}}\!\!\left(S_{1}\!\!-\!\!S_{2}\alpha_{0}^{\ast}\right)^{2}\!\alpha_{0}^{\ast} \!\!-\!\!8S_{2}G_{1}\!\!\left(\!\sigma_{w}^{2}\!\!+\!\!\left[\alpha_{0}^{\ast}\right]^{2}\!\!\frac{\sigma_{z}^{2}}{N^{2}}\!\right)\!\cdot  \nonumber \\ 
        &\left(S_{1}-S_{2}\alpha_{0}^{\ast}\right)-4S_{2}\left(G_{2}-G_{1}^{2}\right)\left(S_{1}-S_{2}\alpha_{0}^{\ast}\right)^{3}=0,\label{opAlpha0}
	\end{align}
where $\alpha_{0}^{\ast}=\frac{b\lambda^{\ast}}{\left(2-\lambda^{\ast}\right)\rho+b\lambda^{\ast}}$.

\end{theorem}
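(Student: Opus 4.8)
The plan is to treat the fairness $R$ as a function of the single scalar $\alpha_0(\lambda)=\frac{b\lambda}{(2-\lambda)\rho+b\lambda}$ rather than of $\lambda$ directly. First I would observe that $\alpha_0(\cdot)$ is a smooth, strictly increasing bijection of $[0,2]$ onto $[0,1]$, since $\frac{d\alpha_0}{d\lambda}=\frac{2b\rho}{[2\rho+\lambda(b-\rho)]^2}>0$ with $\alpha_0(0)=0$ and $\alpha_0(2)=1$; consequently a unique extremizer $\alpha_0^\ast\in[0,1]$ of $R$ corresponds to a unique $\lambda^\ast\in[0,2]$, and the stationarity conditions in the two variables coincide up to the positive factor $\frac{d\alpha_0}{d\lambda}$. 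Abbreviating $P=\sigma_w^2+\alpha_0^2\frac{\sigma_z^2}{N^2}$ and $Q=S_1-S_2\alpha_0$, equation~\eqref{fairness_R} reads $R=2dP+4PQ^2G_1+Q^4(G_2-G_1^2)$. Using $\frac{dP}{d\alpha_0}=2\alpha_0\frac{\sigma_z^2}{N^2}$ and $\frac{dQ}{d\alpha_0}=-S_2$, I would differentiate to get $\frac{dR}{d\alpha_0}=4d\frac{\sigma_z^2}{N^2}\alpha_0+8G_1\frac{\sigma_z^2}{N^2}\alpha_0Q^2-8S_2G_1PQ-4S_2(G_2-G_1^2)Q^3$, and then verify that setting this to zero and substituting back $P$ and $Q$ reproduces~\eqref{opAlpha0} exactly. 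This reduces the theorem to showing the cubic $\frac{dR}{d\alpha_0}$ has precisely one root in $(0,1)$.

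Second, I would settle existence and interiority of that root by inspecting the sign of $\frac{dR}{d\alpha_0}$ at the two ends. At $\alpha_0=0$ the noise terms vanish and $\frac{dR}{d\alpha_0}\big|_{0}=-8S_2G_1\sigma_w^2S_1-4S_2(G_2-G_1^2)S_1^3<0$, because $S_1,S_2,G_1,\sigma_w^2>0$ and $G_2-G_1^2=\mathrm{var}_N(\sum_{l=1}^d\alpha_{nl}^2)\ge 0$ by Jensen's inequality. At $\alpha_0=1$ one has $Q=S_1-S_2=-\frac{\rho\zeta^2}{N(\sigma^2+\rho\zeta^2)}<0$, so each of the four summands of $\frac{dR}{d\alpha_0}\big|_{1}$ is nonnegative and the first is strictly positive, giving $\frac{dR}{d\alpha_0}\big|_{1}>0$. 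By continuity and the intermediate value theorem there is at least one $\alpha_0^\ast\in(0,1)$ with $\frac{dR}{d\alpha_0}=0$, hence $\lambda^\ast\in(0,2)\subset[0,2]$; the sign pattern (negative then positive) further characterizes this critical point as the interior extremum of $R$ singled out by~\eqref{DefOptimalLambda}.

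Third, for uniqueness I would prove that $R$ is strictly convex in $\alpha_0$ on $[0,1]$, so that $\frac{dR}{d\alpha_0}$ is strictly increasing and can vanish at most once. A second differentiation yields $\frac{d^2R}{d\alpha_0^2}=4d\frac{\sigma_z^2}{N^2}+8S_2^2G_1P+12S_2^2(G_2-G_1^2)Q^2+8G_1\frac{\sigma_z^2}{N^2}Q(S_1-5S_2\alpha_0)$, in which only the last cross term can be negative; the remaining terms are bounded below by the noise-free, $C$-independent quantity $8S_2^2G_1\sigma_w^2+12S_2^2(G_2-G_1^2)Q^2>0$. Since $\sigma_z^2\propto C^2$ through $\Delta s=2C/|\mathcal{D}_n|$, shrinking $C$ shrinks the indefinite cross term quadratically while leaving this positive lower bound intact; I would then show that the stated threshold $C<\frac{\sqrt d}{2NS_1}$, equivalently $4N^2S_1^2C^2<d$, is exactly the bound under which the $4d\frac{\sigma_z^2}{N^2}$ term together with the noise-free curvature dominates the cross term uniformly over $\alpha_0\in[0,1]$ (equivalently, the discriminant of the quadratic $\frac{d^2R}{d\alpha_0^2}$ in $\alpha_0$ remains negative), forcing $\frac{d^2R}{d\alpha_0^2}>0$ throughout. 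Combined with the endpoint sign change this produces a unique interior $\alpha_0^\ast$, hence a unique $\lambda^\ast$ obeying~\eqref{opAlpha0}. I expect this curvature estimate to be the crux: bounding the indefinite cross term sharply enough to recover precisely $C<\frac{\sqrt d}{2NS_1}$, rather than a looser sufficient condition, is where the clipping constraint does its decisive work.
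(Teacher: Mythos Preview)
Your overall architecture coincides with the paper's: reparametrize by $\alpha_0$, compute $\partial R/\partial\alpha_0$, verify the sign change at $\alpha_0=0$ and $\alpha_0=1$, and invoke strict convexity of $R$ in $\alpha_0$ for uniqueness. The first derivative, the endpoint analysis, and the bijection $\lambda\leftrightarrow\alpha_0$ are all correct and match the paper verbatim.

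The gap is in your convexity argument. You attribute the role of $C$ to the relation $\sigma_z^2\propto C^2$ and claim that shrinking $C$ damps the ``indefinite cross term'' $8G_1\tfrac{\sigma_z^2}{N^2}Q(S_1-5S_2\alpha_0)$ while ``leaving the positive lower bound intact.'' This reasoning does not work. First, that cross term and the dominant positive term $4d\tfrac{\sigma_z^2}{N^2}$ both carry the factor $\sigma_z^2$, so scaling $\sigma_z^2$ via $C$ leaves their ratio unchanged; the comparison you actually need is $4d$ versus $8G_1|Q(S_1-5S_2\alpha_0)|$, with no $\sigma_z^2$ in sight. Second, your ``noise-free lower bound'' $8S_2^2G_1\sigma_w^2$ itself contains $G_1$, which is built from the clipped estimates $\hat{\boldsymbol u}_n$ and is therefore \emph{not} $C$-independent; it is not ``intact'' as $C$ shrinks. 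Your discriminant suggestion is correspondingly too vague to deliver the precise threshold.

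The paper's mechanism is different: it collects \emph{all} $\sigma_z^2$ contributions in $\partial^2R/\partial\alpha_0^2$ into a single coefficient $4D\tfrac{\sigma_z^2}{N^2}$ with $D=d+2G_1(6S_2^2\alpha_0^2-6S_1S_2\alpha_0+S_1^2)$, completes the square in $\alpha_0$ to get $D=d-G_1S_1^2+\tfrac{12G_1}{N^2}\bigl(\alpha_0-\tfrac{N}{2}S_1\bigr)^2$, and then uses the \emph{model}-clipping bound $\|\hat{\boldsymbol u}_n\|\le C$ (not the noise variance) to control $G_1=\tfrac1N\sum_n\|\boldsymbol\alpha_n\|^2$ via $\|\boldsymbol\alpha_n\|=\bigl\|N\hat{\boldsymbol u}_n-\sum_m\hat{\boldsymbol u}_m\bigr\|\le 2NC$, giving $G_1S_1^2\le 4N^2S_1^2C^2$. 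Hence $D>d-4N^2S_1^2C^2$, and the threshold $C<\tfrac{\sqrt d}{2NS_1}$ drops out exactly. So the decisive entry point of $C$ is the bound on $G_1$ through the clipped local models, not the scaling of $\sigma_z^2$; without that observation you will not recover the stated condition.
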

\begin{proof}
    See \textbf{Appendix \ref{pOp_lambda}}
\end{proof}

With the privacy consideration, we jointly optimize $\lambda$ and $T$ to improve the trade-off between the convergence, privacy, and fairness of DP-Ditto. 
From (\ref{2Devi}) and (\ref{opAlpha0}), $\lambda^\ast$ and $T^\ast$ satisfy
	\begin{equation} 
        \label{joint_op}
     \underset{\lambda,T}{\min}\, h(T,\lambda) ,\quad
     s.t.~\text{(\ref{opAlpha0})} ,
	\end{equation}
which can be solved through an iterative search.
For $T$, a one-dimensional search can be carried out. 
Given the aggregation round number $T$, \eqref{opAlpha0} can be solved analytically, e.g., using the Cardano method~\cite{shmakov2011universal}. The optimal $\lambda^*$ depends on~$\sigma_{z}^{2}$.  

\begin{corollary}
\label{o_fairness}
The optimal $\lambda^{\ast}$, which minimizes the fairness measure $R(\lambda)$, decreases as the DP noise variance $\sigma_z^2$ increases (i.e., the privacy budget $\epsilon$ decreases).
\end{corollary}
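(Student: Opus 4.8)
The plan is to treat the first-order condition (\ref{opAlpha0}) of \textbf{Theorem~\ref{Op_lambda}} as an implicit equation for the fairness-optimal coefficient of \textbf{Definition~\ref{DefFairness}} and run a comparative-statics (implicit function theorem) argument in the parameter $\sigma_z^2$. Since $\alpha_0(\lambda)=\frac{b\lambda}{(2-\lambda)\rho+b\lambda}$ is continuous and strictly increasing on $[0,2]$ with image $[0,1]$, it suffices to show $\alpha_0^\ast$ is strictly decreasing in $\sigma_z^2$; this transfers to $\lambda^\ast=\alpha_0^{-1}(\alpha_0^\ast)$, and since $\sigma_z^2=\frac{\Delta s^2\,2T\ln(1/\delta)}{\epsilon^2}\propto 1/\epsilon^2$ by (\ref{noiseSigma1}), "$\sigma_z^2$ increases" means "$\epsilon$ decreases". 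Writing $\Phi(\alpha_0,\sigma_z^2)$ for the left-hand side of (\ref{opAlpha0}), differentiating (\ref{fairness_R}) identifies $\Phi=\partial R/\partial\alpha_0$, so $\Phi(\alpha_0^\ast,\sigma_z^2)=0$ and
$$\frac{d\alpha_0^\ast}{d\sigma_z^2}=-\left.\frac{\partial_{\sigma_z^2}\Phi}{\partial_{\alpha_0}\Phi}\right|_{\alpha_0^\ast}.$$
By \textbf{Theorem~\ref{Op_lambda}} and its proof, the hypothesis $C<\frac{\sqrt d}{2NS_1}$ makes $\alpha_0^\ast$ the unique stationary point of $R$ on $[0,1]$ with $R$ strictly convex in $\alpha_0$ there, so $\partial_{\alpha_0}\Phi|_{\alpha_0^\ast}=\partial^2R/\partial\alpha_0^2|_{\alpha_0^\ast}>0$, which fixes the sign of the denominator; it remains only to sign the numerator.

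Next I would localize $\alpha_0^\ast$ precisely. Evaluating $\Phi$ at the two relevant points gives $\Phi(0,\sigma_z^2)=-\tfrac{8}{N}G_1\sigma_w^2 S_1-\tfrac{4}{N}(G_2-G_1^2)S_1^3<0$ and $\Phi(NS_1,\sigma_z^2)=\tfrac{4dS_1\sigma_z^2}{N}>0$, using $S_2=1/N$, $G_1>0$, $G_2-G_1^2\ge0$ (Jensen), $\sigma_w^2,S_1,d,\sigma_z^2>0$, and $NS_1=\frac{\sigma^2}{\sigma^2+\rho\zeta^2}\in(0,1)$. Because $R$ is strictly convex, $\Phi=R'$ is strictly increasing, so its unique zero $\alpha_0^\ast$ lies strictly between $0$ and $NS_1$ for every $\sigma_z^2>0$; hence $\alpha_0^\ast>0$ and $S_1-S_2\alpha_0^\ast>0$, and the minimizer is interior, so the implicit function theorem legitimately applies.

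Finally I would exploit the $\sigma_z^2$-homogeneity of (\ref{opAlpha0}) to sign $\partial_{\sigma_z^2}\Phi=\frac{4\alpha_0}{N^2}\big[d+2G_1(S_1-S_2\alpha_0)(S_1-2S_2\alpha_0)\big]$. Collecting the $\sigma_z^2$-bearing terms of (\ref{opAlpha0}) on one side recasts the stationarity condition as
$$\frac{\sigma_z^2\alpha_0^\ast}{N^2}\big[d+2G_1(S_1-S_2\alpha_0^\ast)(S_1-2S_2\alpha_0^\ast)\big]=S_2(S_1-S_2\alpha_0^\ast)\big[2G_1\sigma_w^2+(G_2-G_1^2)(S_1-S_2\alpha_0^\ast)^2\big],$$
so that $\partial_{\sigma_z^2}\Phi|_{\alpha_0^\ast}=\frac{4S_2(S_1-S_2\alpha_0^\ast)}{\sigma_z^2}\big[2G_1\sigma_w^2+(G_2-G_1^2)(S_1-S_2\alpha_0^\ast)^2\big]>0$, since $S_1-S_2\alpha_0^\ast>0$ and the bracket is at least $2G_1\sigma_w^2>0$. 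With a positive denominator, $\frac{d\alpha_0^\ast}{d\sigma_z^2}<0$, hence $\lambda^\ast$ is strictly decreasing in $\sigma_z^2$, i.e.\ it decreases as the privacy budget $\epsilon$ shrinks.

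The main obstacle is exactly the sign of $\partial_{\sigma_z^2}\Phi$: the factor $(S_1-S_2\alpha_0)(S_1-2S_2\alpha_0)$ can be negative, and a crude estimate such as $(S_1-S_2\alpha_0)(S_1-2S_2\alpha_0)\ge-S_1^2/8$ only yields positivity under an extra smallness assumption on $G_1$ (which one might instead try to derive from the clipping bound via $\|\hat{\boldsymbol u}_m\|\le C$). The clean route, as above, is to first pin $\alpha_0^\ast<NS_1$ from the boundary evaluations of $\Phi$ together with the convexity furnished by \textbf{Theorem~\ref{Op_lambda}}, and then read the numerator's sign directly off the rewritten first-order condition; carrying out the term-collection in (\ref{opAlpha0}) correctly and justifying the localization of $\alpha_0^\ast$ are where the care is needed.
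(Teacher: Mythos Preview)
Your proof is correct and takes a genuinely different route from the paper's. The paper argues geometrically: it observes that $\frac{\partial R}{\partial\alpha_0}\big|_{\alpha_0=0}$ is independent of $\sigma_z^2$ (so all curves $\alpha_0\mapsto\frac{\partial R}{\partial\alpha_0}$ share the same starting point) and that $\frac{\partial^2 R}{\partial\alpha_0^2}=\frac{4D\sigma_z^2}{N^2}+\cdots$ with $D>0$ under the clipping hypothesis, so the pointwise slope of each curve is larger for larger $\sigma_z^2$; integrating from $\alpha_0=0$ then orders the curves and forces the zero crossing $\alpha_0^\ast$ to move left. You instead run an implicit-function-theorem argument: you sign the denominator via the convexity already established in \textbf{Theorem~\ref{Op_lambda}}, localize $\alpha_0^\ast\in(0,NS_1)$ by evaluating $\Phi$ at the endpoints, and then---this is the key novelty---recast the first-order condition so that $\partial_{\sigma_z^2}\Phi|_{\alpha_0^\ast}$ equals a manifestly positive quantity, neatly sidestepping the issue that $(S_1-S_2\alpha_0)(S_1-2S_2\alpha_0)$ can change sign. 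Your approach yields the sharper localization $\alpha_0^\ast<NS_1<1$ and an explicit formula for $\frac{d\alpha_0^\ast}{d\sigma_z^2}$, at the cost of a little more algebra; the paper's comparison argument is shorter but relies on spotting the anchor point $\alpha_0=0$ where the $\sigma_z^2$-dependence vanishes.
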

\begin{proof} 
    See \textbf{Appendix \ref{po_fairness}}.
\end{proof}


For a given $T$, up to three feasible solutions to $\lambda$ can be obtained by solving \eqref{opAlpha0}, as \eqref{opAlpha0} is a three-order polynomial equation. 
As revealed in \textbf{Theorem~\ref{Op_lambda}}, one of the three solutions is within $[0,2]$.
By comparing $h(T,\lambda)$ among all the obtained $(T,\lambda)$ pairs, the optimal $(T^*,\lambda^*)$ can be achieved and the existence of $(T^*,\lambda^*)$ is guaranteed. 
{The complexity of this iterative search is determined by the one-dimensional search for $T$ and the Cardano method for solving \eqref{opAlpha0} under each given $T$. The worst-case complexity of the one-dimensional search with a step size of 1 is $\mathcal{O}(T_{\max})$, where $T_{\max}$ is the maximum number of communication rounds permitted. Being an analytical method, the Cardano method provides closed-form solutions and incurs a complexity of $\mathcal{O}(1)$~\cite{artin2011algebra}.
As a result, the overall complexity of the iterative search is $\mathcal{O}(T_{\max})$.}

In a more general case, the ML model is not linear, and $\lambda$ cannot be analytically solved since there is no explicit analytical expression of $\lambda$. Different $\lambda$ values can be tested. Per $\lambda$, the corresponding optimal $T$ can be obtained via a one-dimensional search. Given the optimal $T$, the corresponding optimal $\lambda$ can be obtained by testing different $\lambda$ values (e.g., one-dimensional search for $\lambda$). We can restart the search for the optimal $T$ corresponding to the optimal $\lambda$, so on and so forth, until convergence (i.e., the optimal $T^*$ and $\lambda^*$ stop changing), as done experimentally in Section VI.

\section{Experiments and Results}
In this section, we assess the trade-off between the convergence, accuracy, and fairness of DP-Ditto experimentally. The impact of privacy considerations on those aspects of DP-Ditto is discussed.
We set $N=20$ clients by default. The clipping threshold is $C=20$ and the privacy budget is $\delta=0.01$~\cite{wei2020federated}. We consider three network models, i.e., MLR, DNN, and CNN.
\begin{itemize}
\item \textbf{MLR:} This classification method generalizes logistic regression to multiclass problems. It constructs a linear predictor function to predict the probability of an outcome based on an input observation.
\item \textbf{DNN:} This model consists of an input layer, a fully connected hidden layer (with 100 neurons), and an output layer. The rectified linear unit (ReLU) activation function is applied to the hidden layer.
\item \textbf{CNN:} This model contains two convolutional layers with 32 and 64 convolutional filters per layer, and a pooling layer between the two convolutional layers to prevent over-fitting. Following the convolutional layers are two fully connected layers. We use the ReLU in the convolutional and fully connected layers. 
\end{itemize} 
The learning rates of FL and PL are $\eta_{\mathrm{G}}=0.005$ and $\eta_{\mathrm{L}}=0.005$, respectively.

We consider {four} widely used public datasets, i.e., MNIST, Fashion-MNIST (FMNIST){, and CIFAR10}.
Cross-entropy loss is considered for the datasets.  
Apart from Ditto~\cite{li2021ditto}, the following benchmarks are considered:
\begin{itemize} 
    \item 
    \textbf{pFedMe~\cite{t2020personalized}:} The global FL model is updated in the same way as the typical FL. Learning from the global model, each personalized model is updated based on a regularized loss function using the Moreau envelope. 
    \item
    \textbf{APPLE~\cite{luo2022adapt}:} Each client uploads to the server a core model learned from its personalized model and downloads the other clients' core models in each round. The personalized model is obtained by locally aggregating the core models with learnable weights.
    \item 
    \textbf{FedAMP~\cite{huang2021personalized}:} The server has a personalized cloud model. Each client has a local personalized model. In each round, the server updates the personalized cloud models using an attention-inducing function of the uploaded local models and combination weights. Upon receiving the cloud model, each client locally updates its personalized model based on a regularized loss function.
    \item 
    \textbf{FedALA~\cite{zhang2023fedala}:} In every round of FedALA, each client adaptively initializes its local model by aggregating the downloaded global model and the old local model with learned aggregation weights before local training.

\end{itemize}

\subsubsection{Comparison With the State of the Art}

\begin{figure}[t]
\centering
\subfigure[{Accuracy vs. $T$ (DNN, MNIST)}]{
\label{benchmarks_accuracy_mnist_dnn-minmax_test_1}
\includegraphics[width=0.23\textwidth]{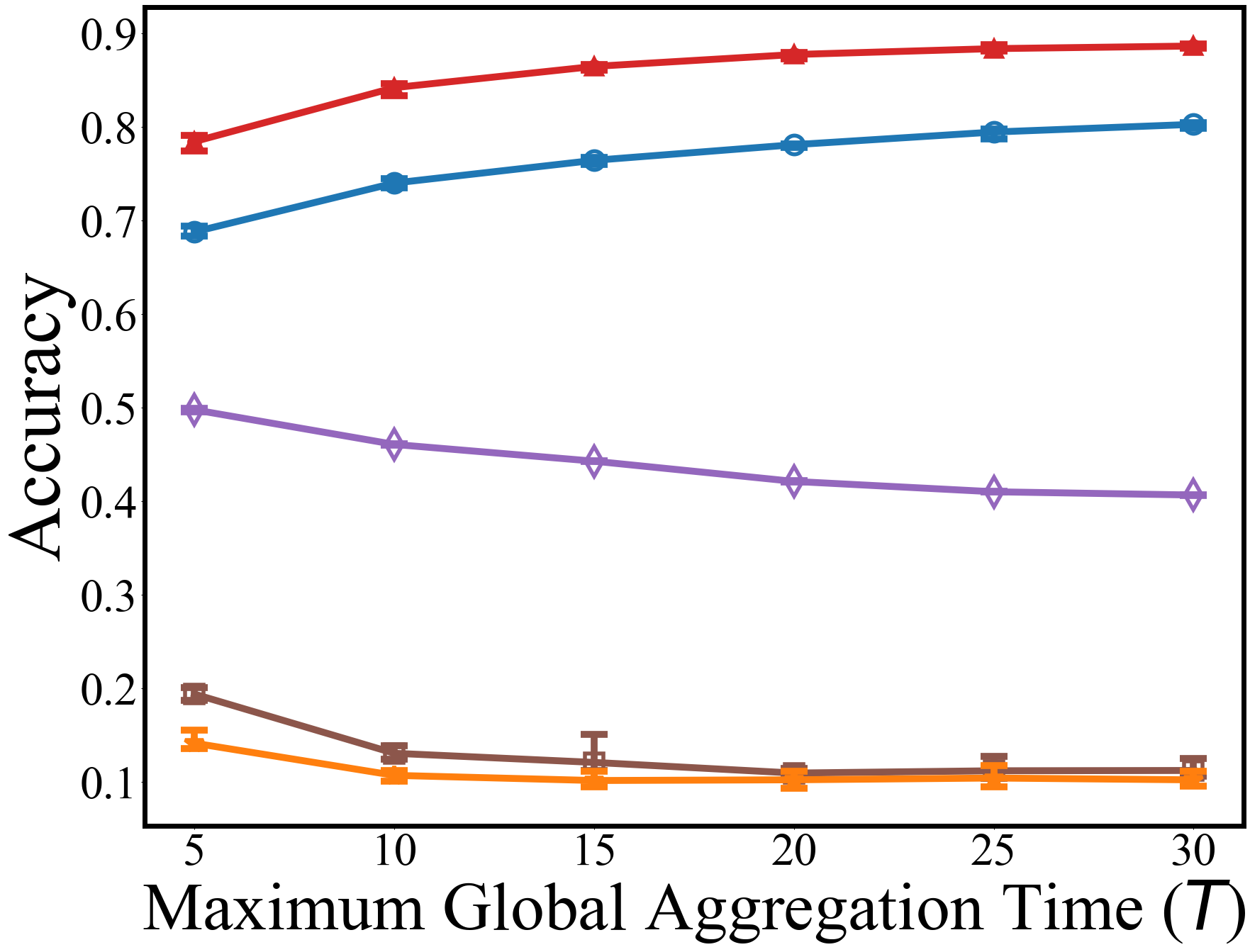}}
\hspace{-1mm}
\subfigure[{Fairness vs. $T$ (DNN, MNIST)}]{
\label{benchmarks_fairness_mnist_dnn-minmax_test_1}
\includegraphics[width=0.23\textwidth]{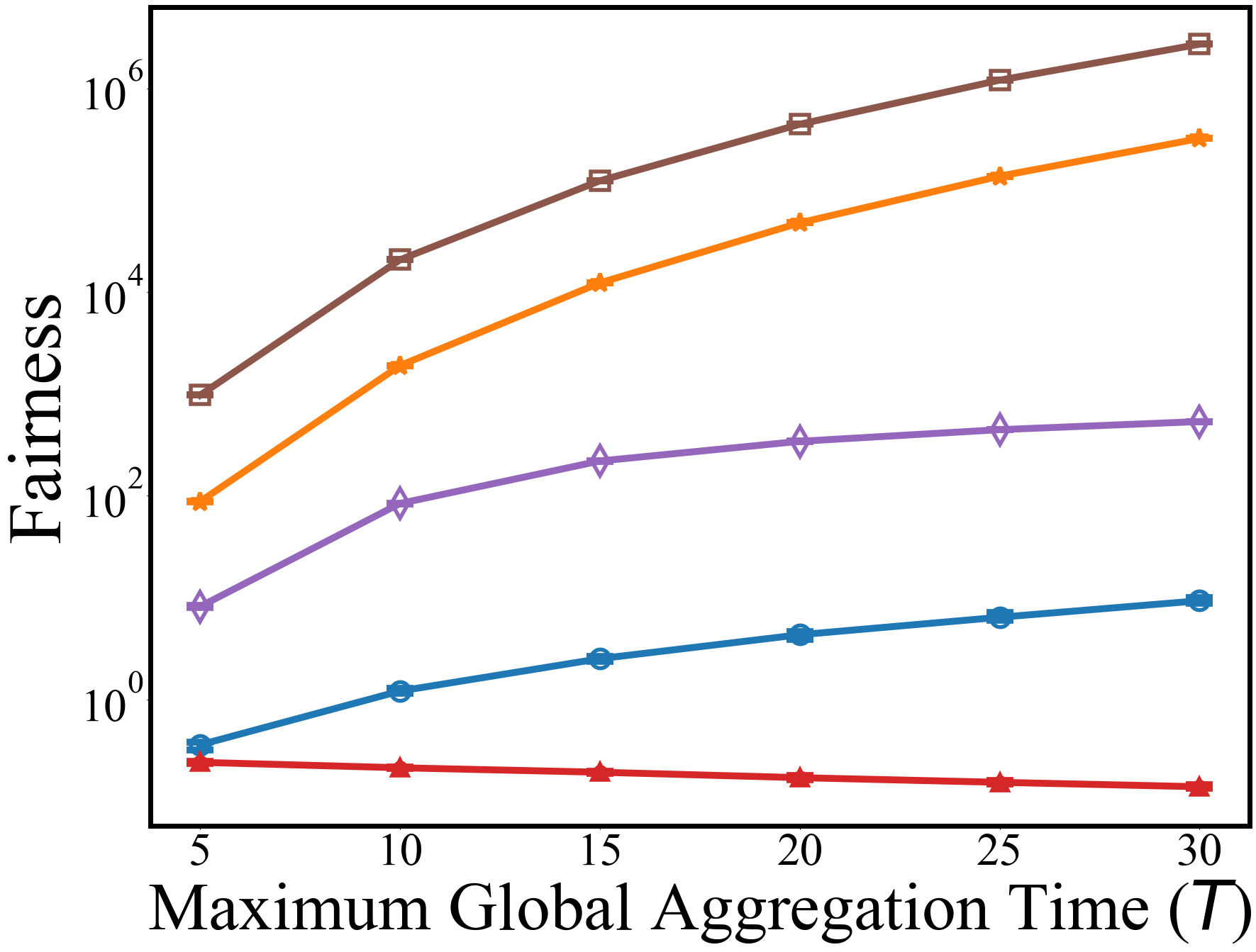}}
\subfigure[{Accuracy vs. $T$ (CNN, FMNIST)}]{
\label{benchmarks_accuracy_fmnist_cnn-minmax_1}
\includegraphics[width=0.23\textwidth]{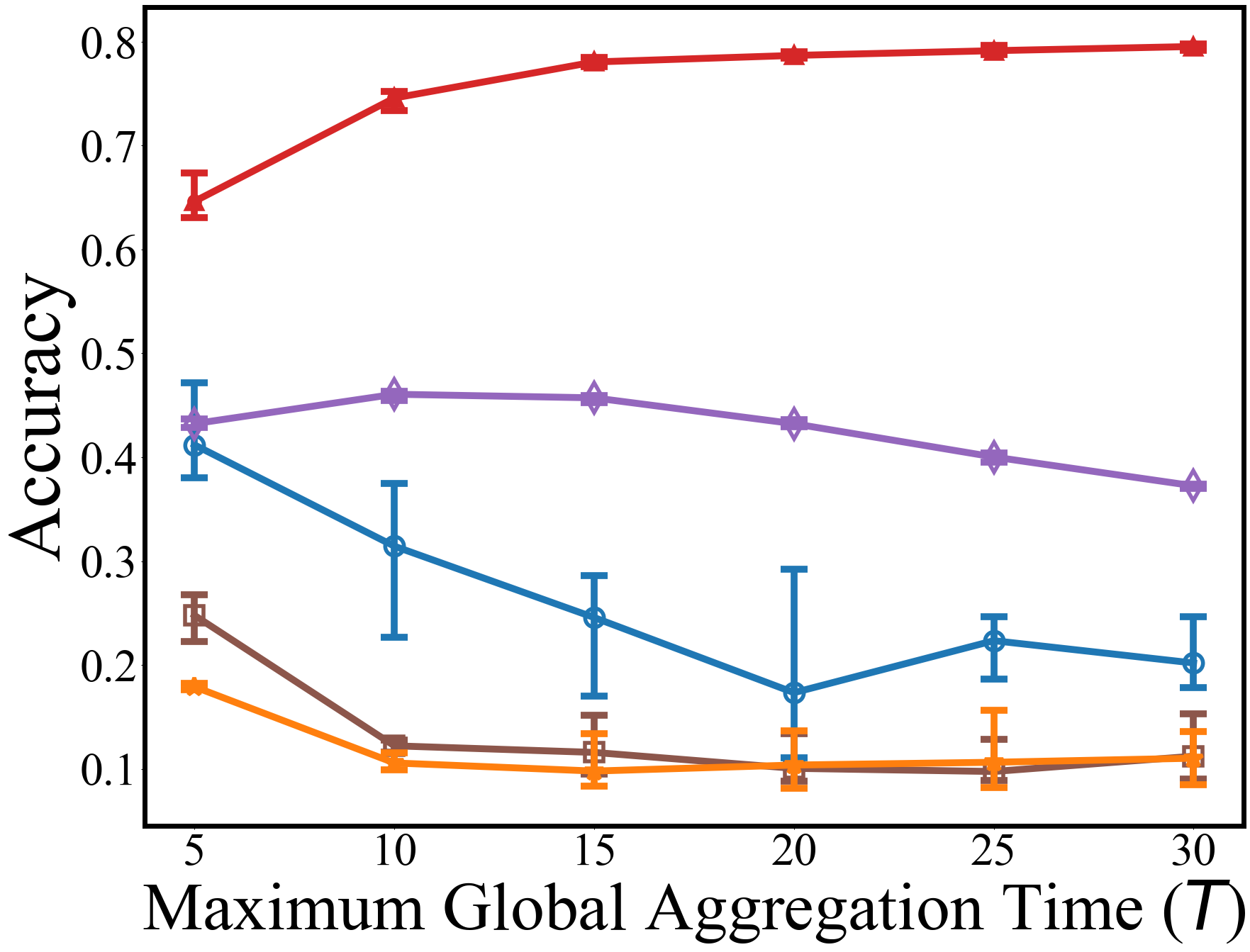}}
\hspace{-1mm}
\subfigure[{Fairness vs. $T$ (CNN, FMNIST)}]{
\label{benchmarks_fairness_fmnist_cnn-minmax_1}
\includegraphics[width=0.23\textwidth]{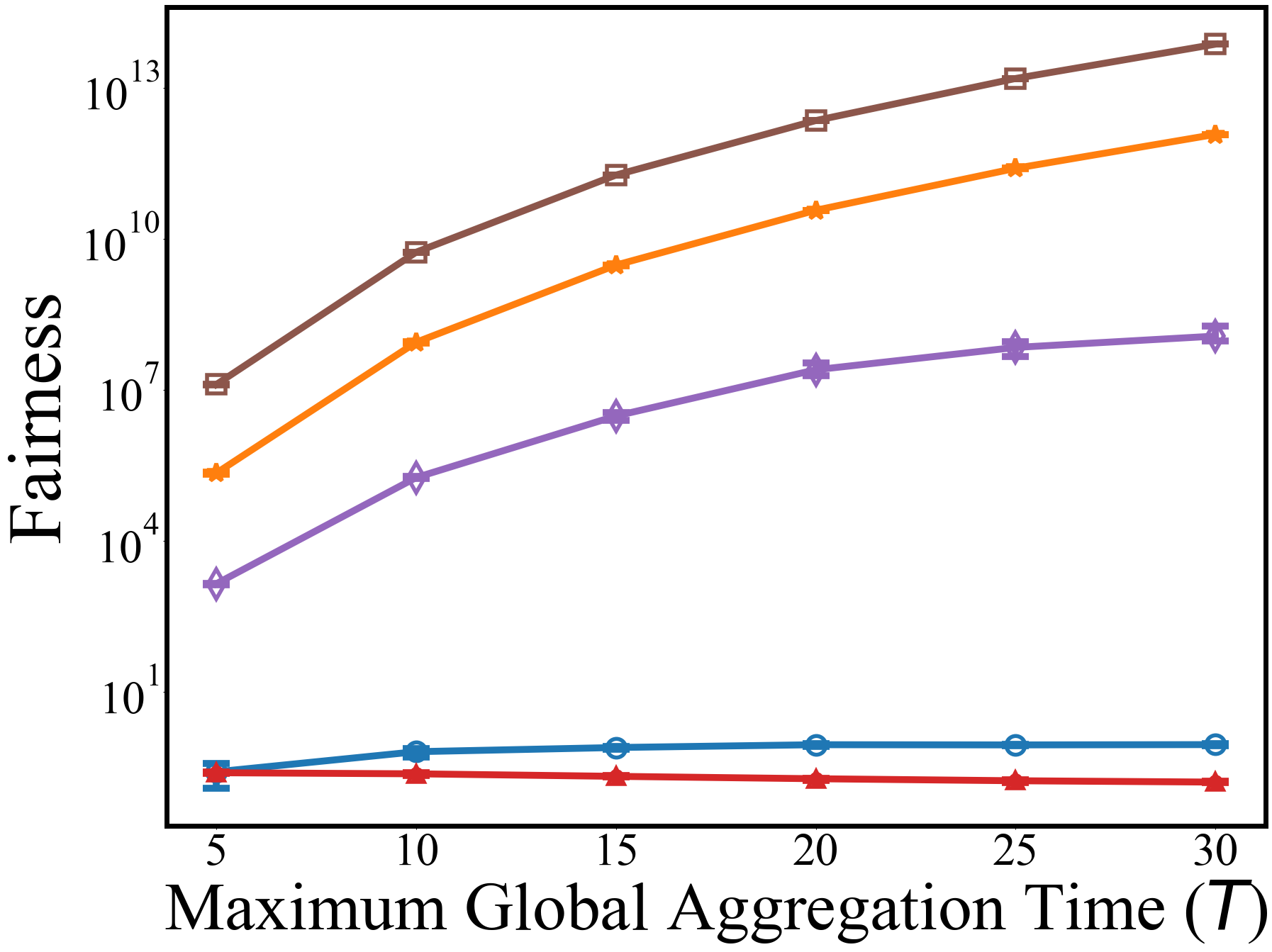}}
\\
\subfigure[{Accuracy vs. $T$ (CNN, CIFAR10)}]{
\label{benchmarks_accuracy_cifar10_cnn-minmax_1}
\includegraphics[width=0.23\textwidth]{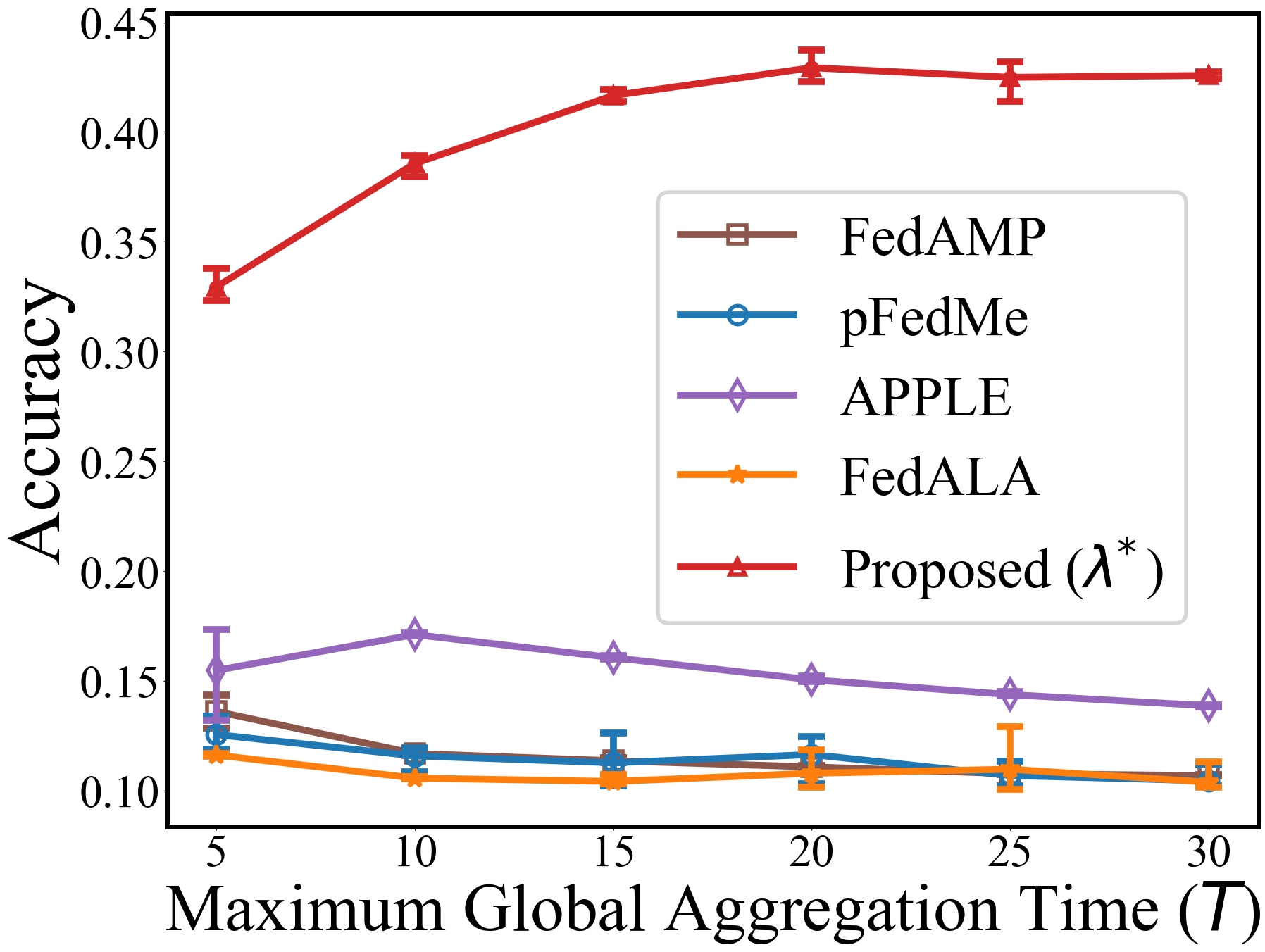}}
\hspace{-1mm}
\subfigure[{Fairness vs. $T$ (CNN, CIFAR10)}]{
\label{benchmarks_fairness_cifar10_cnn-minmax_1}
\includegraphics[width=0.23\textwidth]{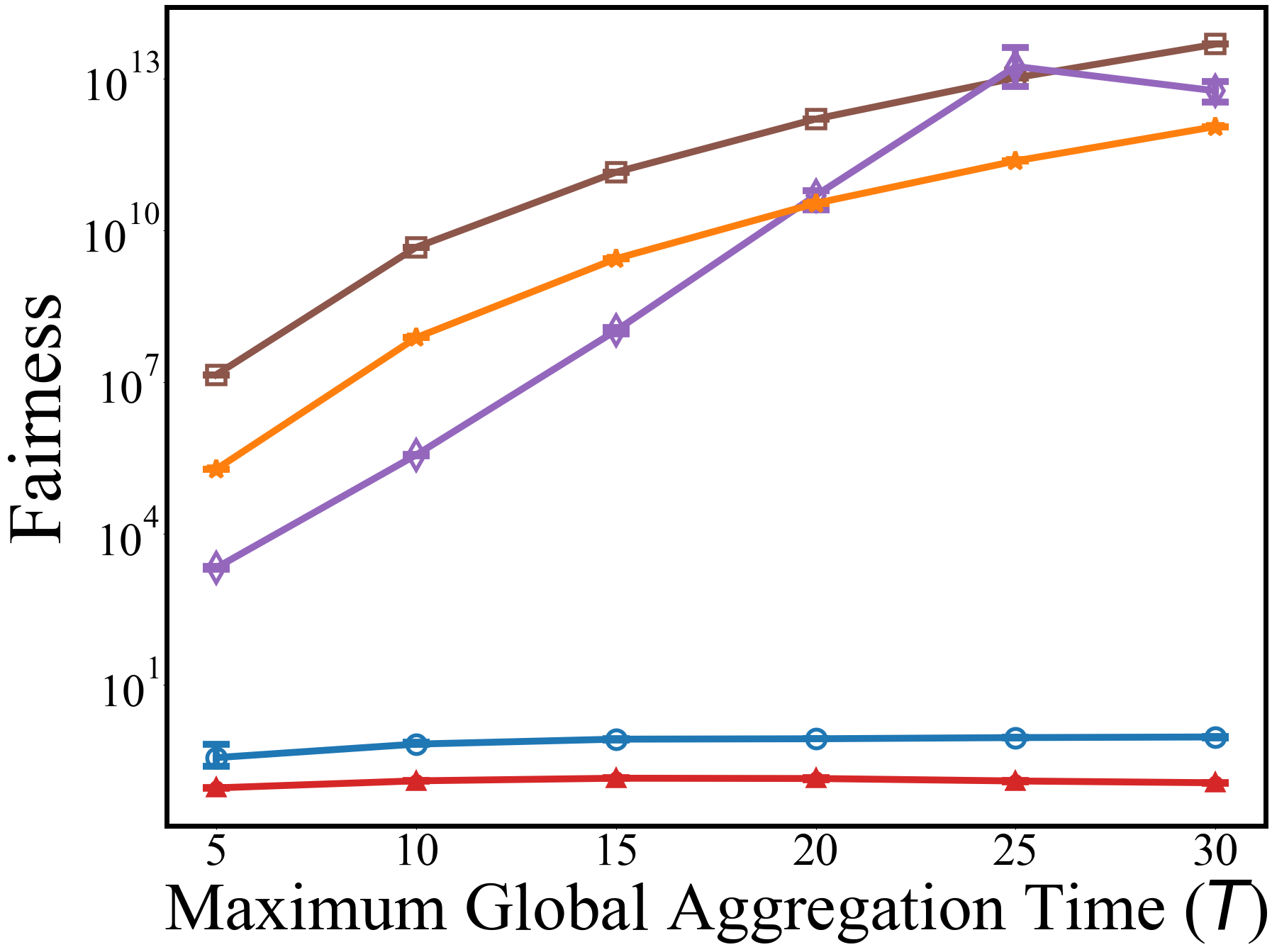}}
\caption{{Comparison of testing accuracy and fairness between the benchmarks with DP and DP-Ditto under the optimal $\lambda^{\ast}=0.005$, $\epsilon=10$, and $\delta=0.01$.}}
\label{benchmarks-dpe10}
\end{figure}

\begin{figure}[t]
\centering
\subfigure[{Accuracy vs. $T$ (DNN, MNIST)}]{
\label{benchmarks_accuracy_mnist_dnn-minmax_test_1}
\includegraphics[width=0.23\textwidth]{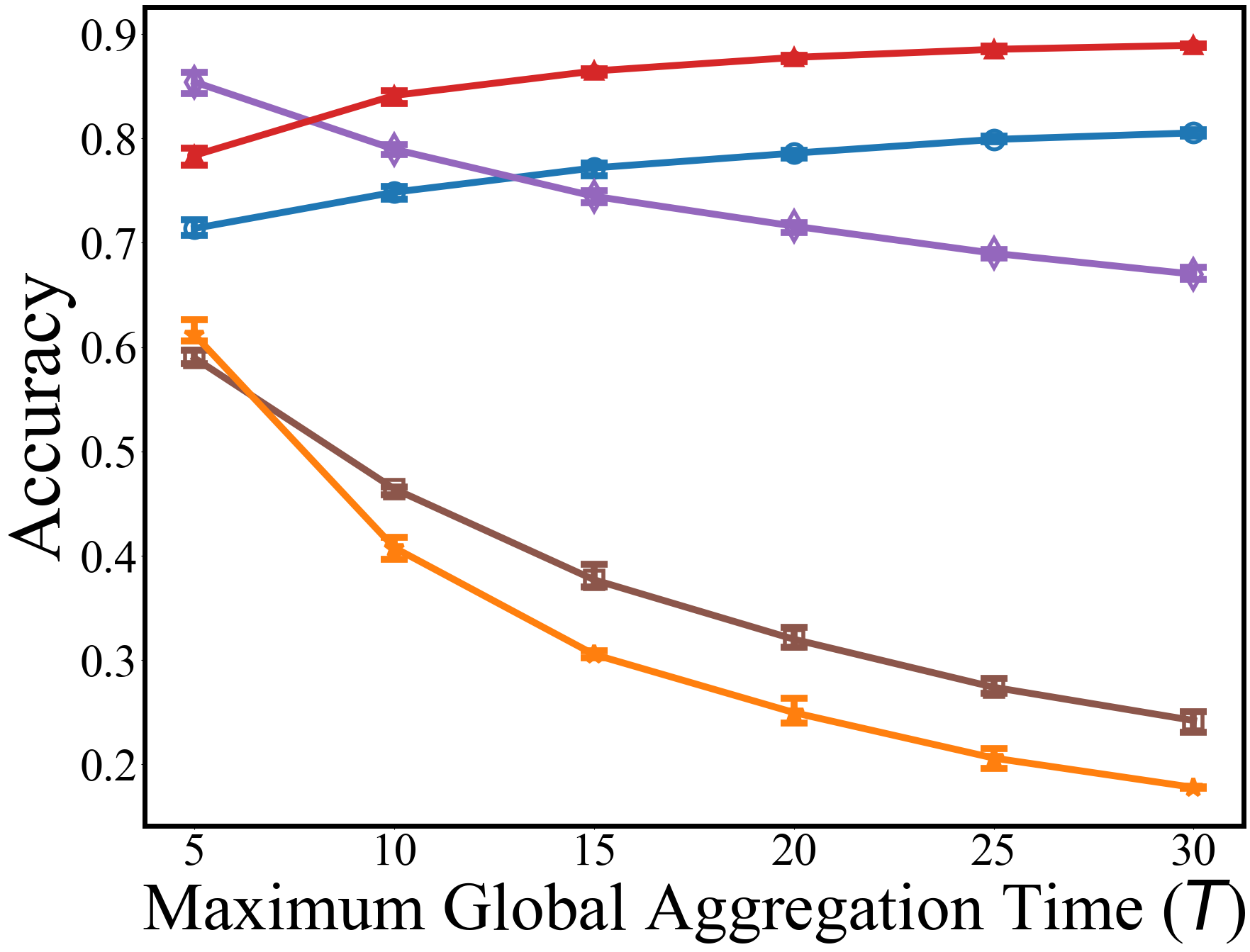}}
\hspace{-1mm}
\subfigure[{Fairness vs. $T$ (DNN, MNIST)}]{
\label{benchmarks_fairness_mnist_dnn-minmax_test_1}
\includegraphics[width=0.23\textwidth]{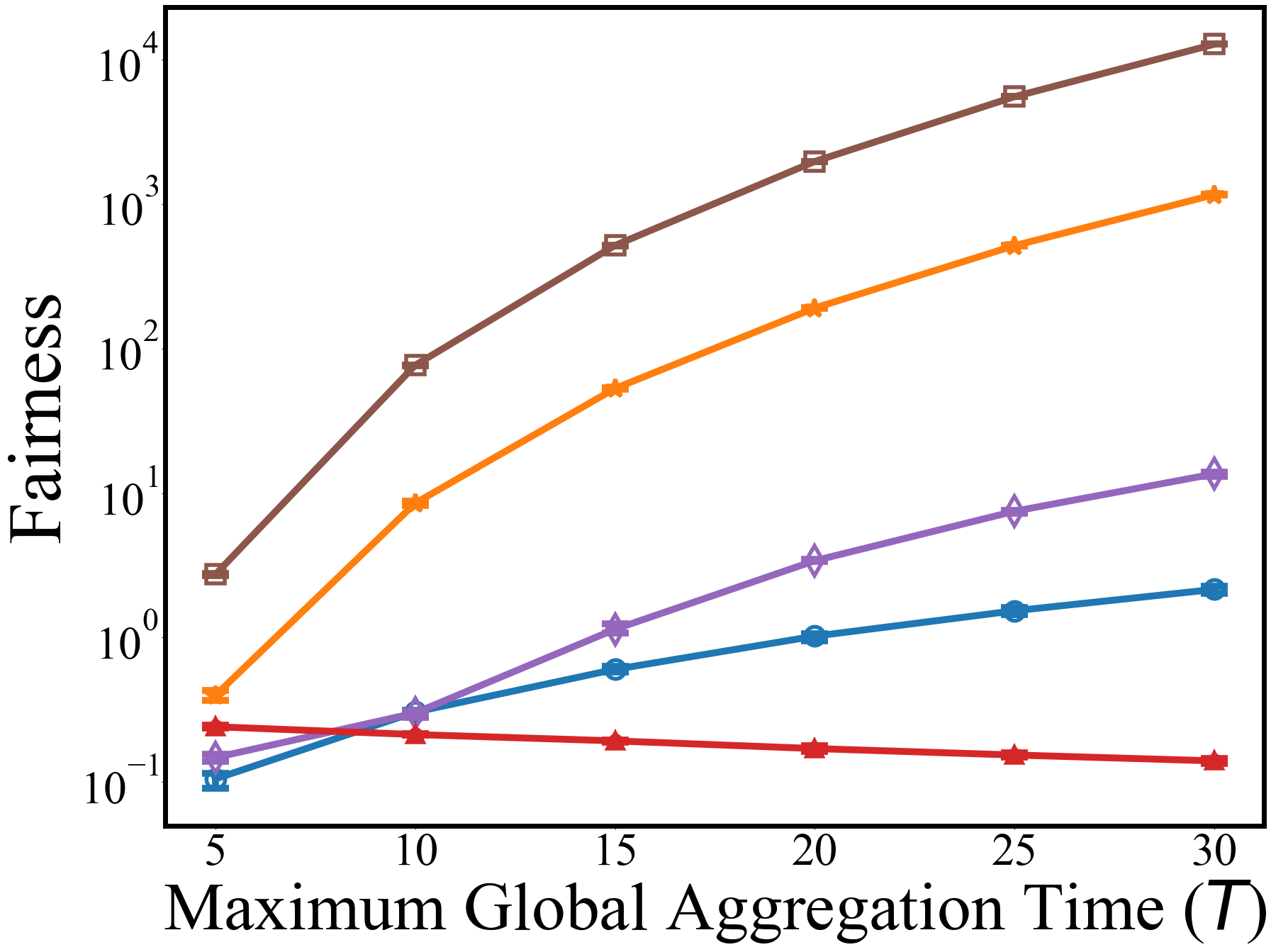}}
\subfigure[{Accuracy vs. $T$ (CNN, FMNIST)}]{
\label{benchmarks_accuracy_fmnist_cnn-minmax_1}
\includegraphics[width=0.23\textwidth]{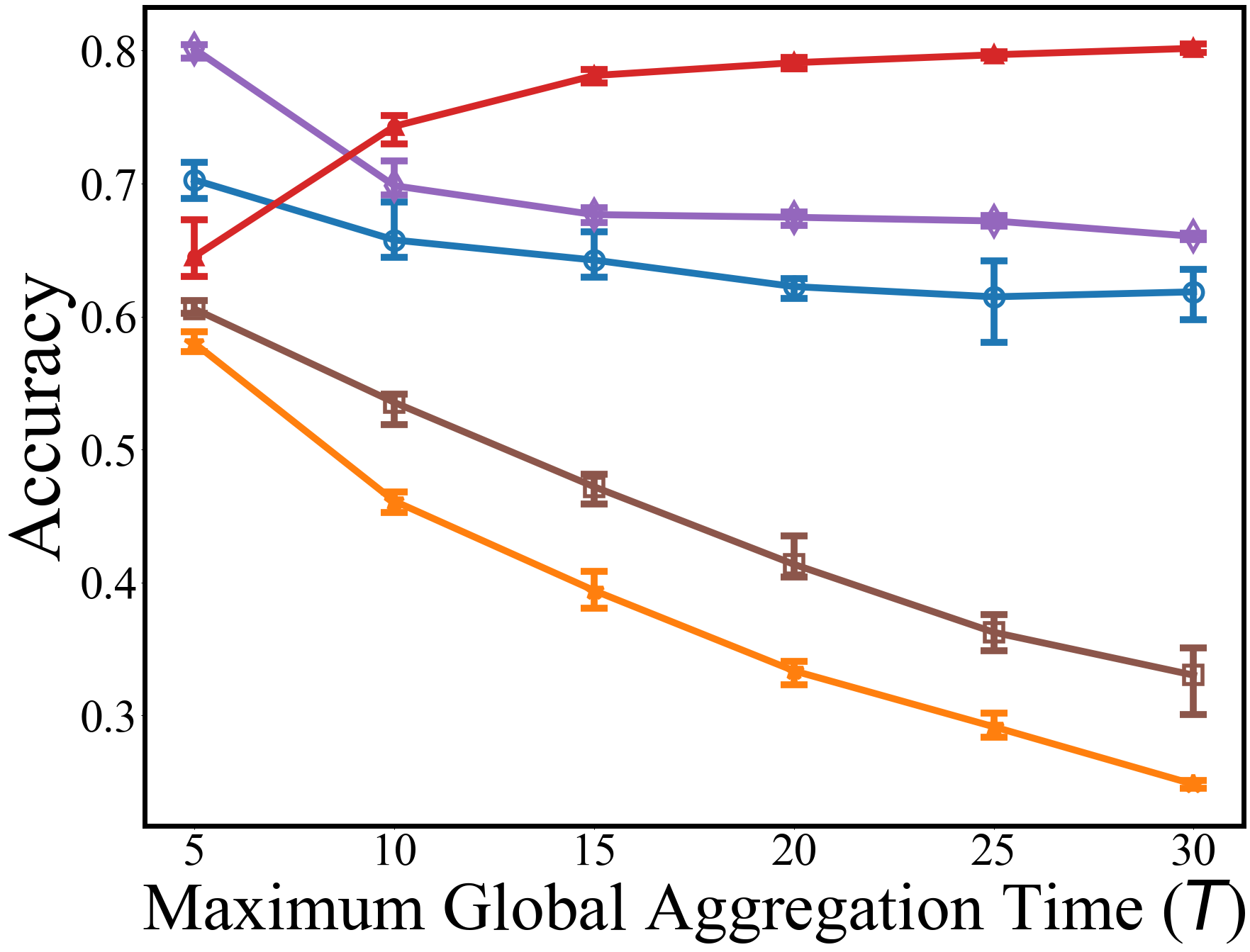}}
\hspace{-1mm}
\subfigure[{Fairness vs. $T$ (CNN, FMNIST)}]{
\label{benchmarks_fairness_fmnist_cnn-minmax_1}
\includegraphics[width=0.23\textwidth]{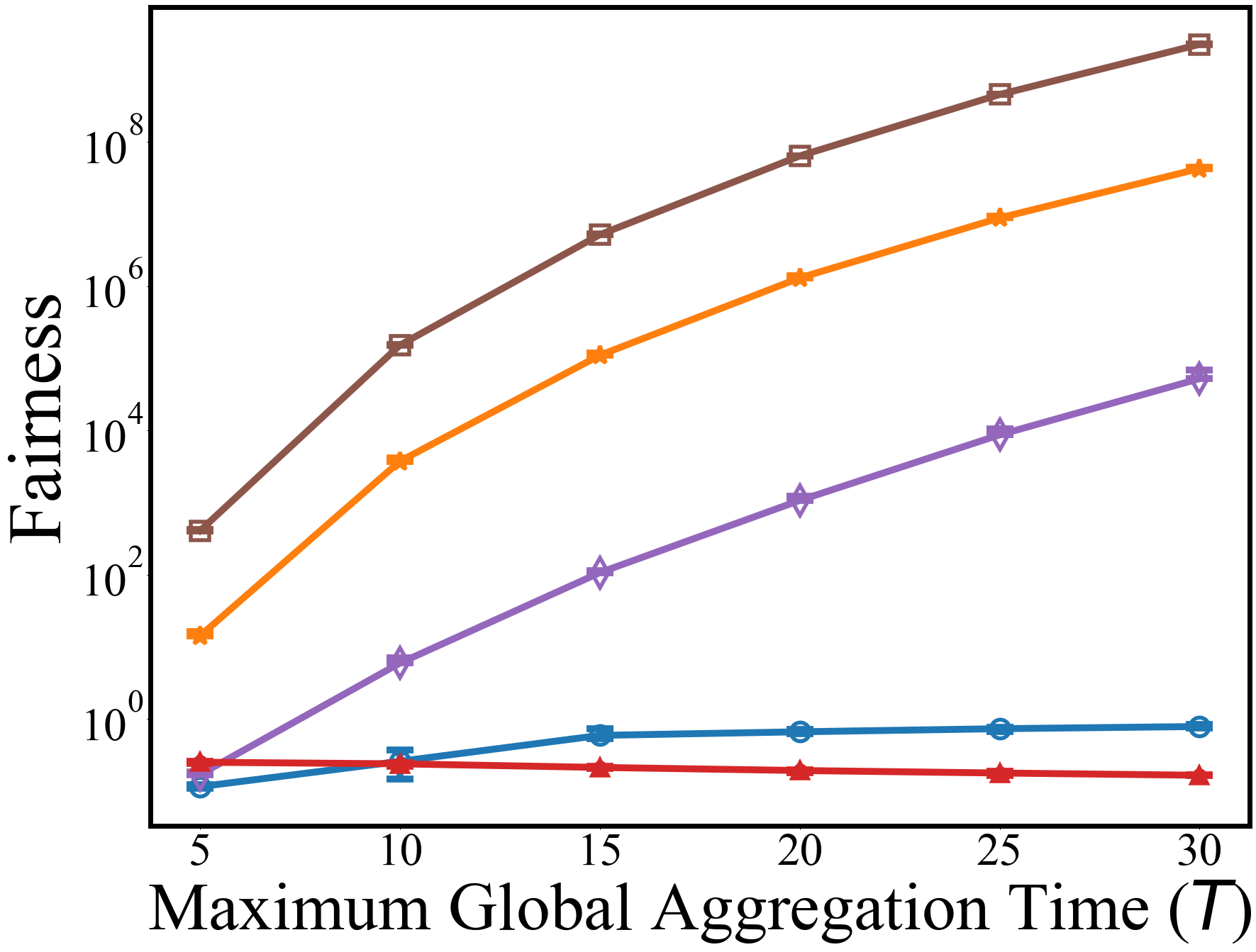}}
\\
\subfigure[{Accuracy vs. $T$ (CNN, CIFAR10)}]{
\label{benchmarks_accuracy_cifar10_cnn-minmax_1}
\includegraphics[width=0.23\textwidth]{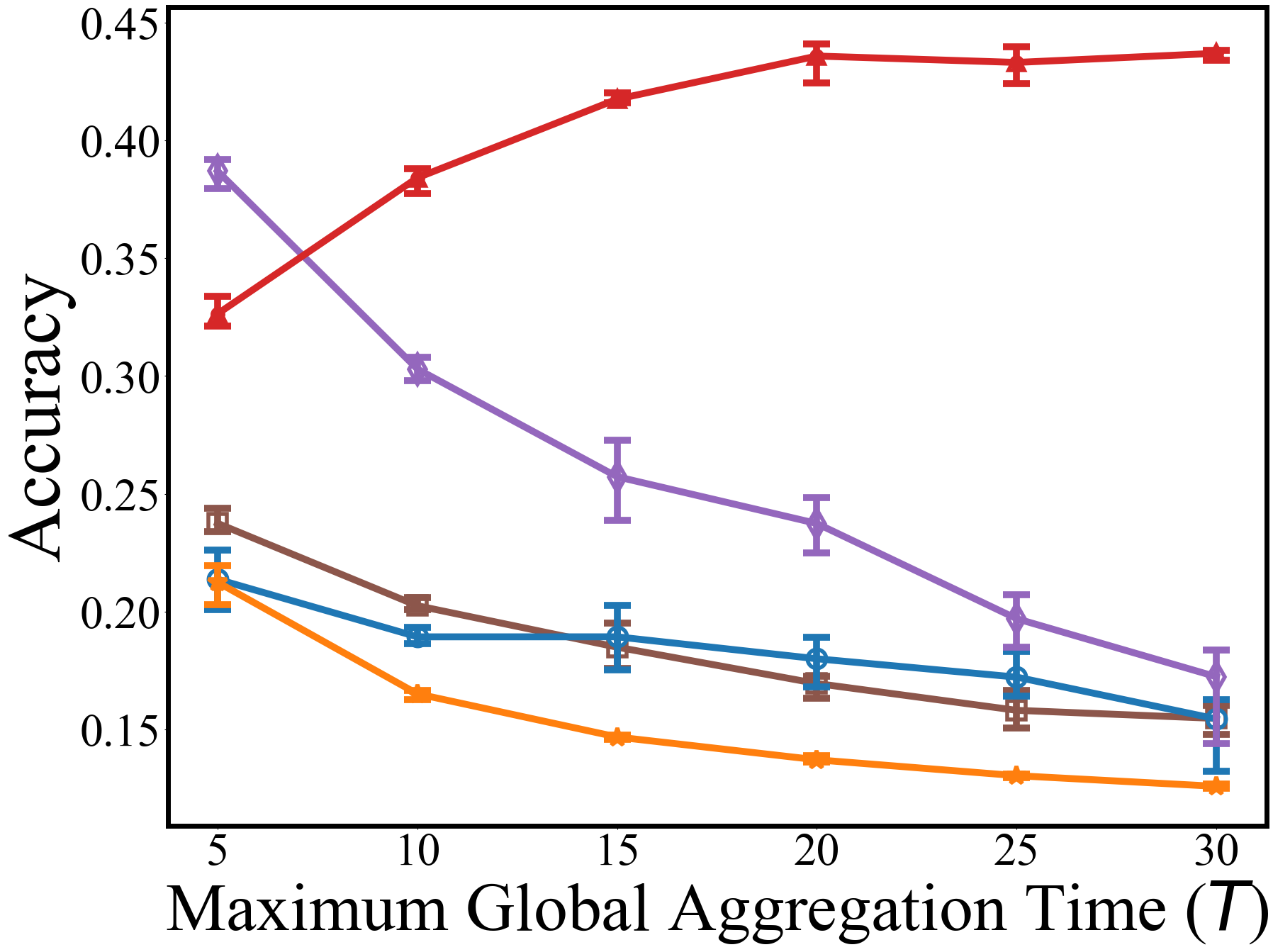}}
\hspace{-1mm}
\subfigure[{Fairness vs. $T$ (CNN, CIFAR10)}]{
\label{benchmarks_fairness_cifar10_cnn-minmax_1}
\includegraphics[width=0.23\textwidth]{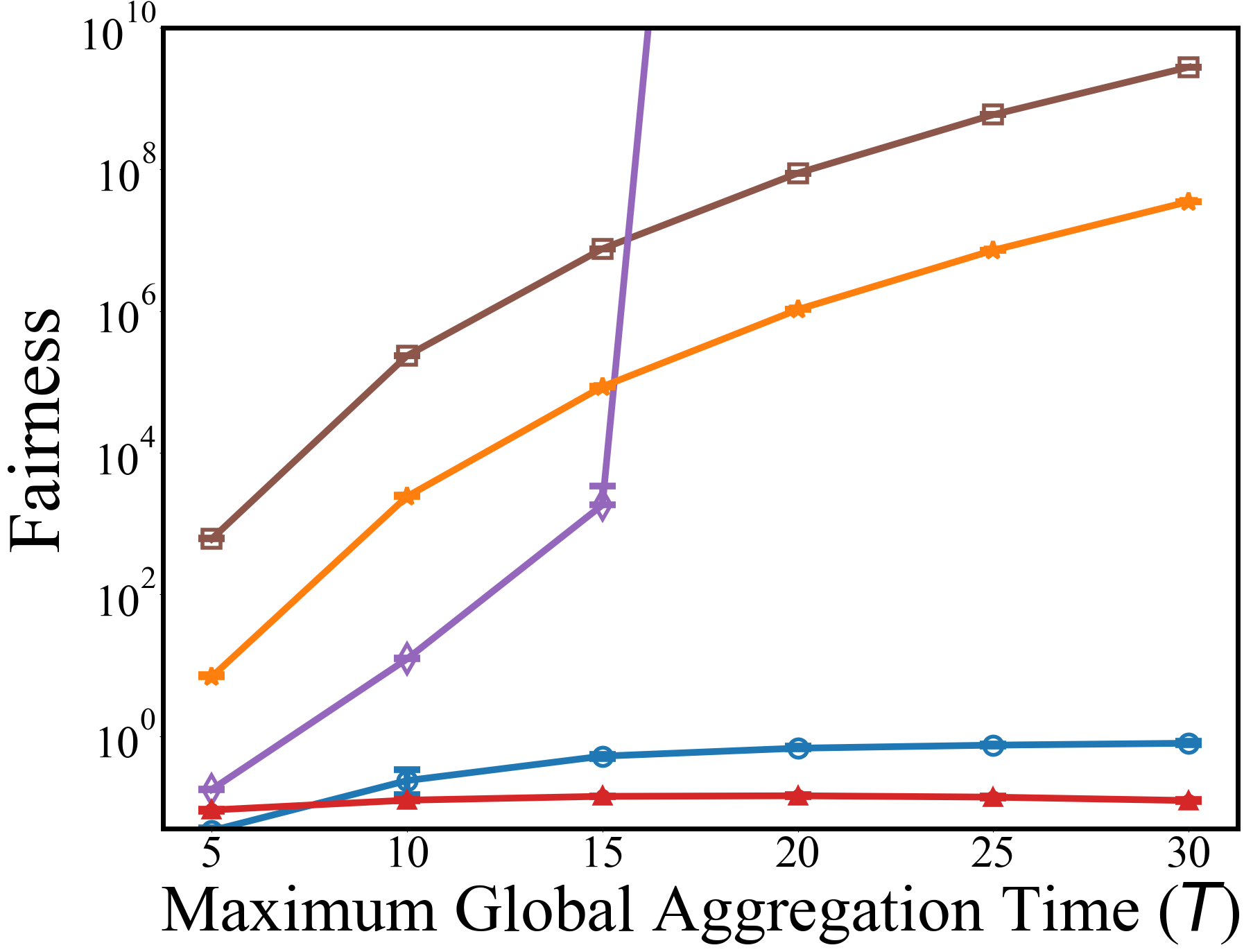}}
\caption{{Comparison of testing accuracy and fairness with the benchmarks with DP and DP-Ditto under the optimal $\lambda^{\ast}=0.01$, $\epsilon=100$, and $\delta=0.01$.}}
\label{benchmarks-dpe100}
\end{figure}

We compare the accuracy and fairness between the proposed DP-Ditto and the benchmarks, i.e., FedAMP~\cite{huang2021personalized}, pFedMe~\cite{t2020personalized}, APPLE~\cite{luo2022adapt}, and FedALA~\cite{zhang2023fedala}, where DP noises are added to perturb the local models in the benchmarks {under different $\epsilon$ values (i.e., $\epsilon=10$, $100$) and datasets (i.e., MNIST, FMNIST, and CIFAR10).} $\delta=0.01$. 
The DNN model is considered on the MNIST dataset. The CNN model is considered on the FMNIST and CIFAR10 datasets.

Figs.~\ref{benchmarks-dpe10} and \ref{benchmarks-dpe100} plot the testing accuracy and fairness of privacy-preserving PFL with the growth of $T$, where $\epsilon=10$ and $100$, respectively. 
{
DP-Ditto (with $\lambda^*=0.005$ under $\epsilon=10$, or $\lambda^*=0.01$ under $\epsilon=100$) provides the best accuracy and fairness compared to the privacy-enhanced benchmarks (i.e., FedAMP, pFedMe, APPLE, and FedALA), since $\lambda^{\ast}$ is adapted to the DP perturbation in DP-Ditto. We note that 
the PL models are obtained by aggregating the downloaded models in APPLE~\cite{luo2022adapt} and FedALA~\cite{zhang2023fedala}, or based on a weighted downloaded global model aggregated from the previous personalized models in pFedMe~\cite{t2020personalized} and FedAMP~\cite{huang2021personalized}, both without considering privacy.

For fair comparisons with DP-Ditto, the PL models of the benchmarks are updated based on the aggregated noisy models perturbed using DP to enhance the privacy aspect of the models.
Due to their limited flexibility in balancing personalization and generalization, the benchmarks are highly sensitive to DP noises, resulting in significant performance degradation.
By contrast, under DP-Ditto, the impact of DP noise can be adjusted by properly configuring the weighting coefficient $\lambda$ between personalization and generalization,  hence alleviating the adverse effect of the DP noise.}

By comparing Figs.~\ref{benchmarks-dpe10} and \ref{benchmarks-dpe100}, we see that the testing accuracy and fairness of DP-Ditto can be maintained even under high privacy requirements (e.g., $\epsilon=10$). 
{This is achieved 
by configuring a smaller $\lambda^*$ when $\epsilon$ is smaller (i.e., $\sigma_u^2$ is higher), which encourages the PL models to be closer to the local models and hence alleviates the adverse effect of DP noise.}
By contrast, the benchmarks degrade significantly when  $\epsilon$ is small (e.g., $\epsilon=10$) and/or $T$ is large because of their susceptibility to the DP noises.  
{As $T$ grows, the testing accuracy decreases under FedALA and FedAMP, or increases when $T\leq 5$ and then decreases under APPLE. Moreover, the testing accuracy of pFedMe 
decreases under the CNN model on the FMNIST and CIFAR10 datasets.}

\subsubsection{Impact of Privacy Budget}

\begin{figure}[t]
\centering  
\subfigure[Train. loss vs. $T$ (DNN, MNIST)]
{
\label{maxt_loss_mnist_MA_dnn_1}
\includegraphics[width=0.23\textwidth]{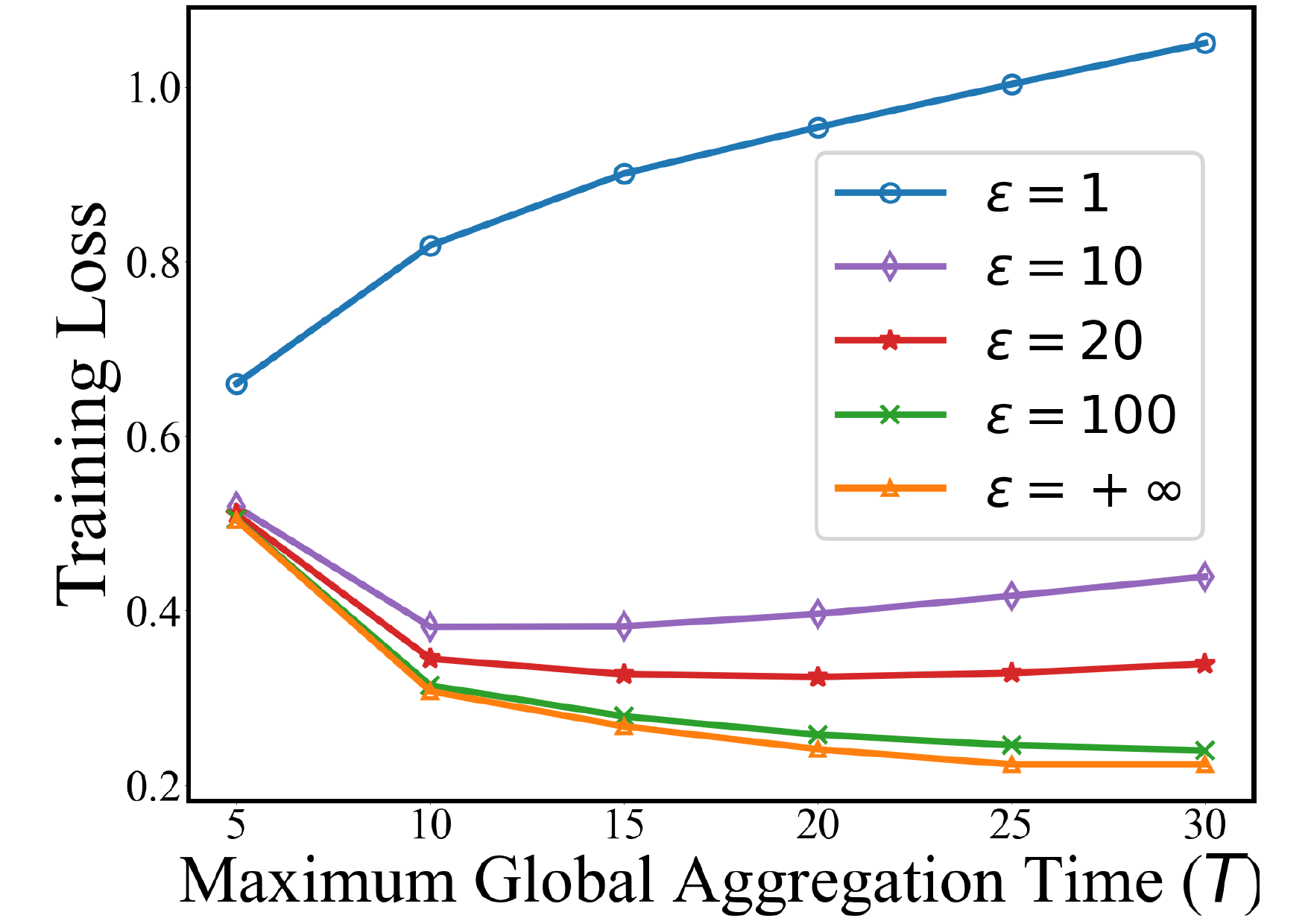}} 
\hspace{-1mm}
\subfigure[Train. loss vs. $T$ (MLR, MNIST)]{
\label{maxt_loss_mnist_MA_mlr_1}
\includegraphics[width=0.23\textwidth]{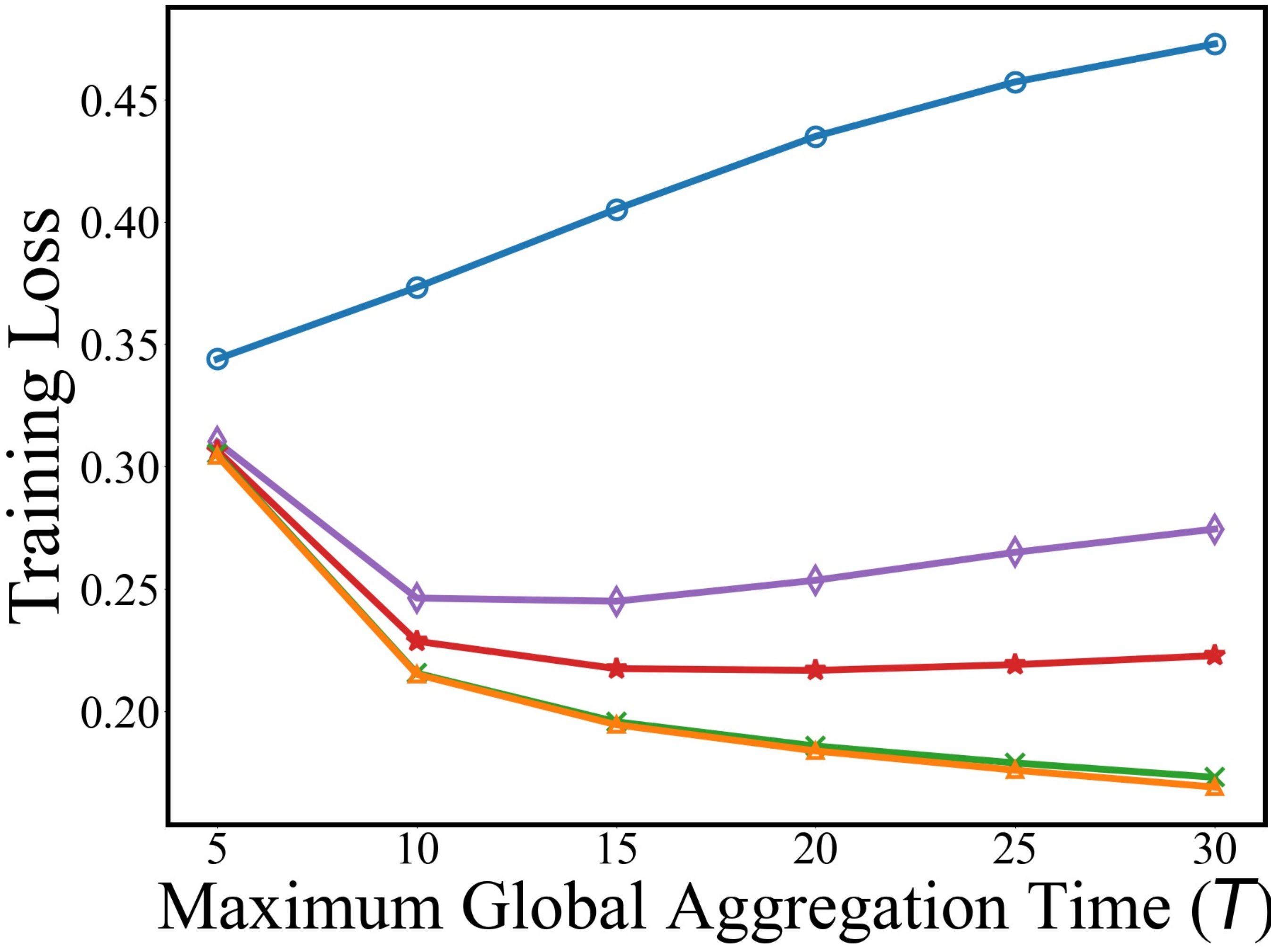}}
\\
\subfigure[Train. loss vs. $T$ (CNN,CIFAR10)]{
\label{maxt_loss_CIFAR10_MA_cnn_1}
\includegraphics[width=0.23\textwidth]{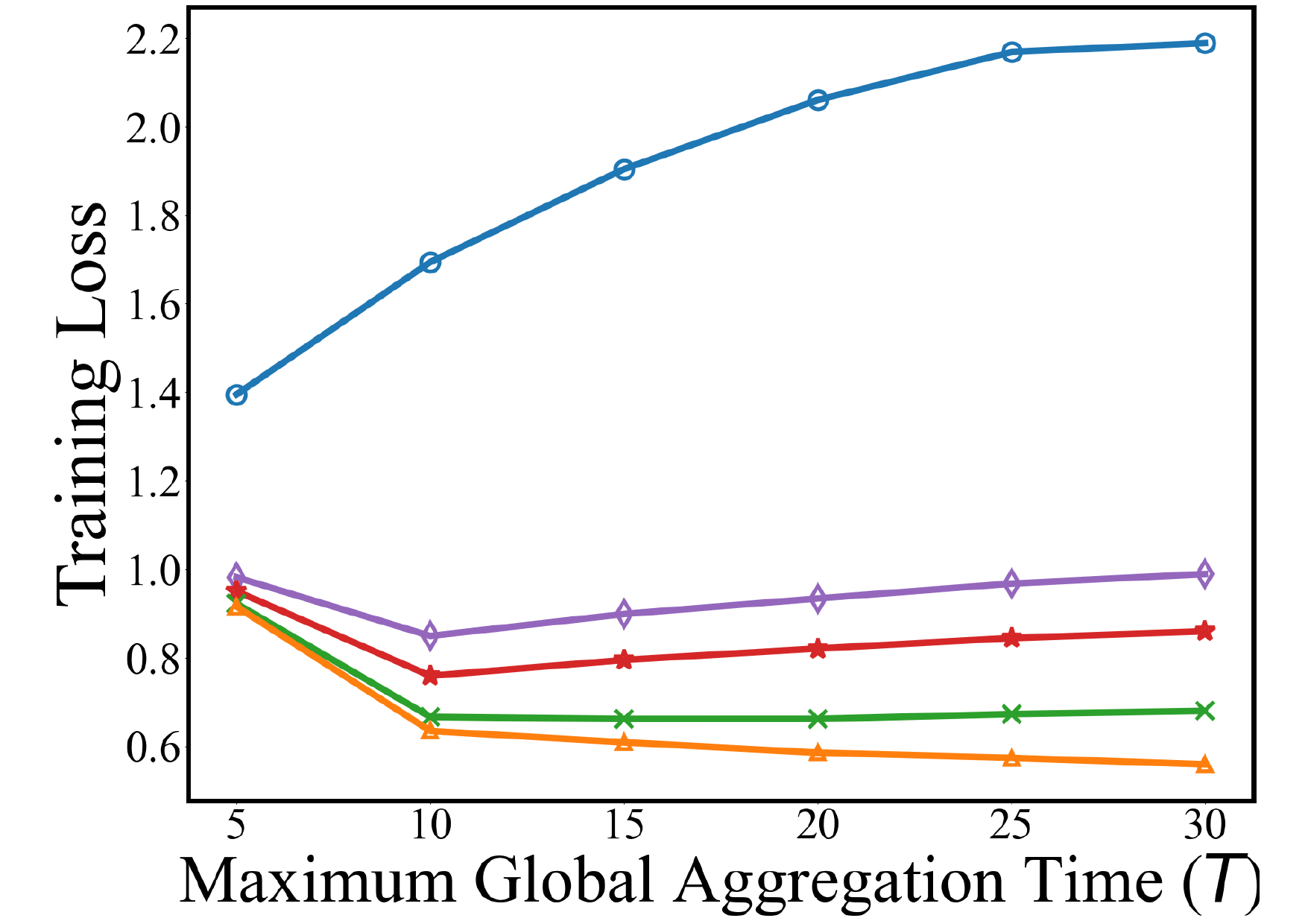}}
\hspace{-1mm}
\subfigure[Train. loss vs. $T$ (CNN, FMNIST)]{
\label{maxt_loss_FMNIST_MA_cnn_1}
\includegraphics[width=0.23\textwidth]{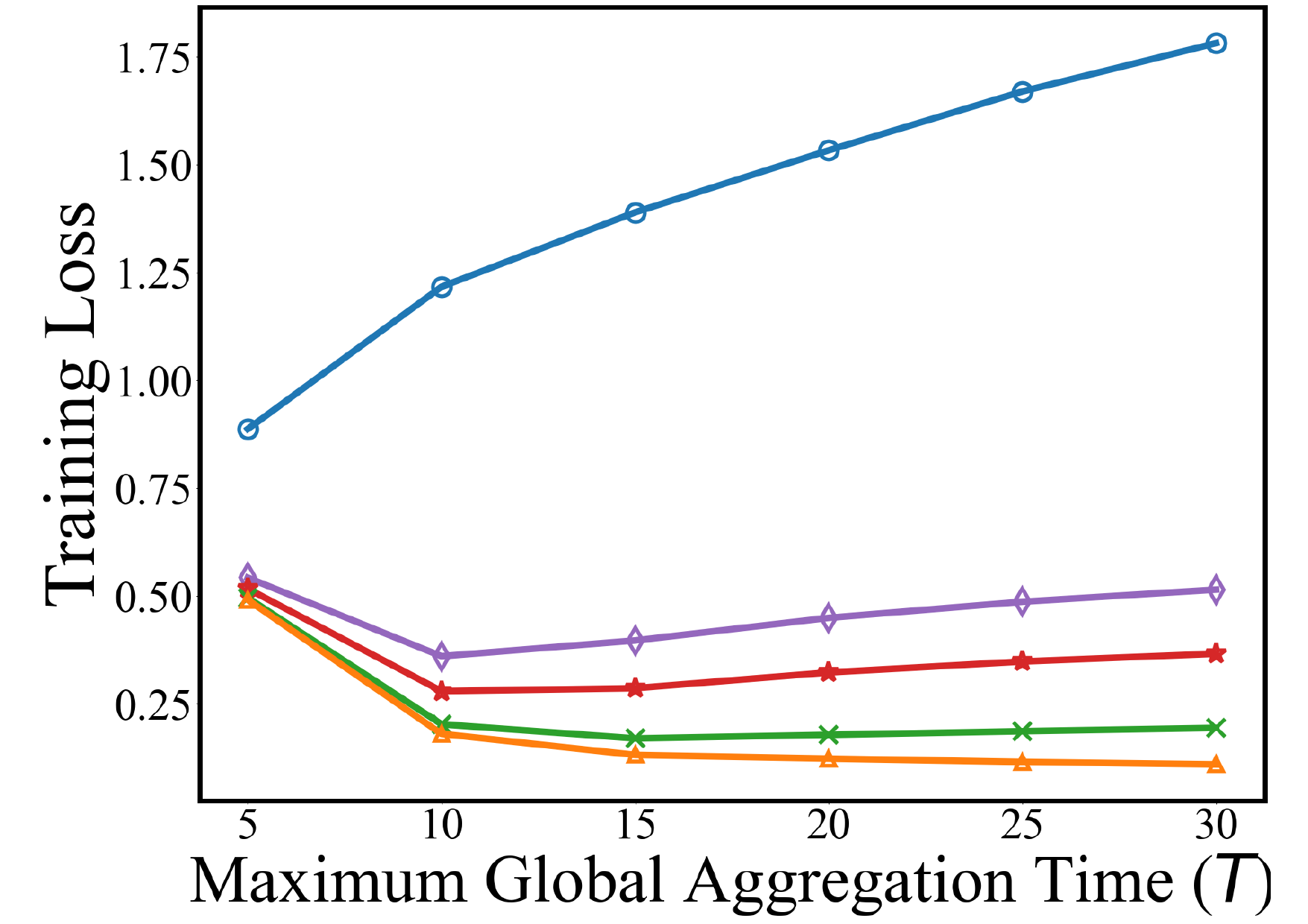}}
\caption{Training loss of the personalized model with respect to the maximum global aggregation number $T$ under different $\epsilon$ values. $\delta=0.01$.}
\label{loss-T}
\end{figure}

Fig.~\ref{loss-T} evaluates the impact of $\epsilon$ and $T$ on the convergence of DP-Ditto, where $\lambda=0.1$ for the DNN and MLR models on the MNIST dataset, the CNN model on the CIFAR10 dataset, and the CNN model on the FMNIST dataset. $\epsilon=1, 10, 20, 100$, or $\epsilon=+\infty$ (i.e., Ditto). Figs.~\ref{maxt_loss_mnist_MA_dnn_1}--\ref{maxt_loss_FMNIST_MA_cnn_1} show that the training loss decreases and eventually approaches the case with no DP (i.e., $\epsilon=+\infty$), as $\epsilon$ increases. 
{
This is because a larger $\epsilon$ leads to a smaller variance of the DP noise and, consequently, the server can obtain better-quality local models from the clients. 
On the other hand, 
a smaller $\epsilon$ leads to a larger  DP noise variance (i.e., $\sigma_u^2$) based on \eqref{noiseSigma1} and, hence, the accumulated effect of the DP noise would be significant. 
However, when both $\epsilon$ and $T$ are small, the training loss of PL decreases initially due to the benefits of generalization brought from FL. 
Once $T$ exceeds a certain critical value, the accumulated impact of the DP noises outweighs the benefits of generalization, leading to a degradation in PL performance.}
This observation validates the discussion in Section~IV.

\subsubsection{Impact of $T$ and $\lambda$}

\begin{figure}[t]
\centering  
\subfigure[Accuracy vs. $t$ (DNN, MNIST)]
{
\label{Accuracy_mnist_lambda_1_10}
\includegraphics[width=0.23\textwidth]{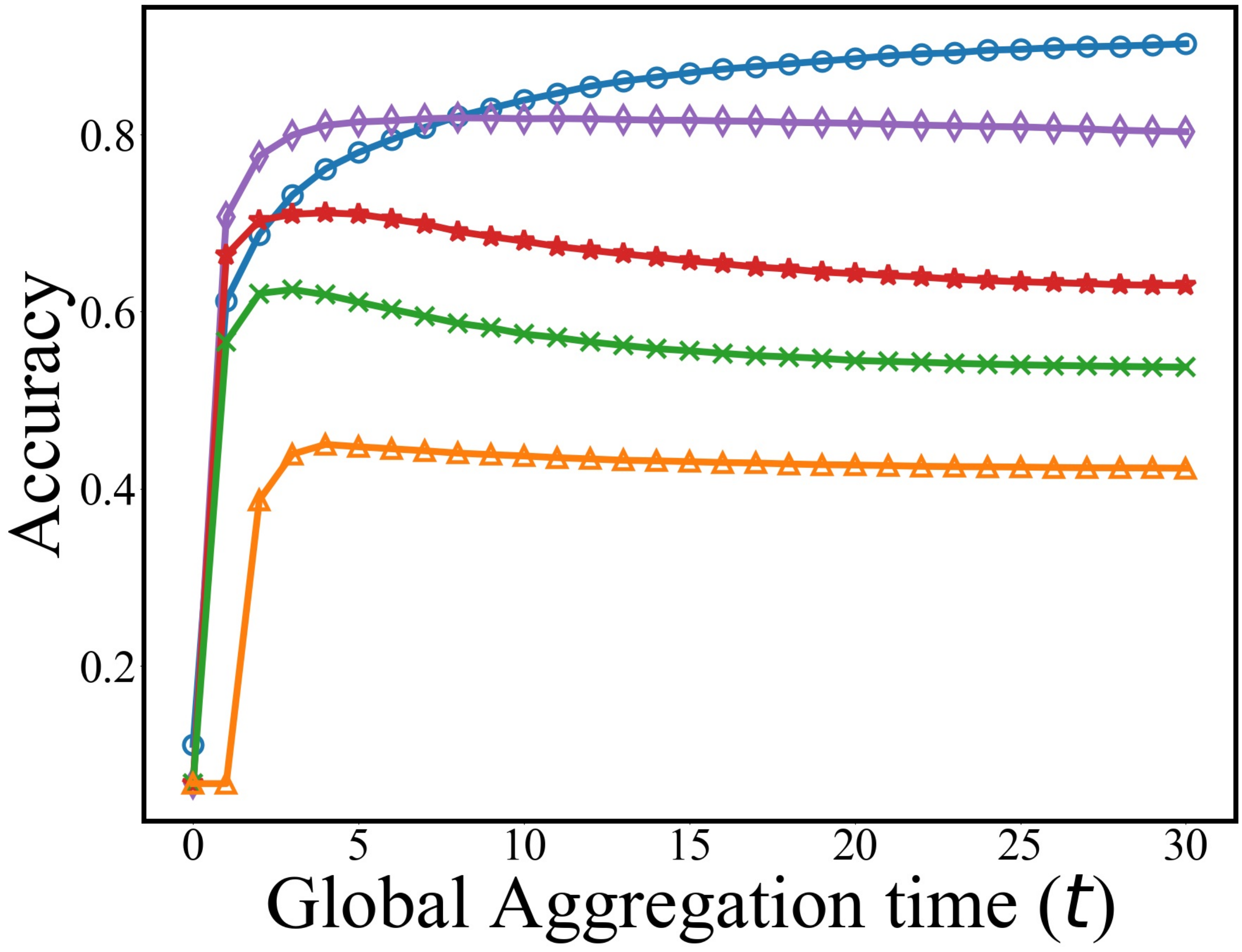}} 
\hspace{-1mm}
\subfigure[Fairness vs. $t$ (DNN, MNIST)]{
\label{Fairness_mnist_lambda_2_10}
\includegraphics[width=0.23\textwidth]{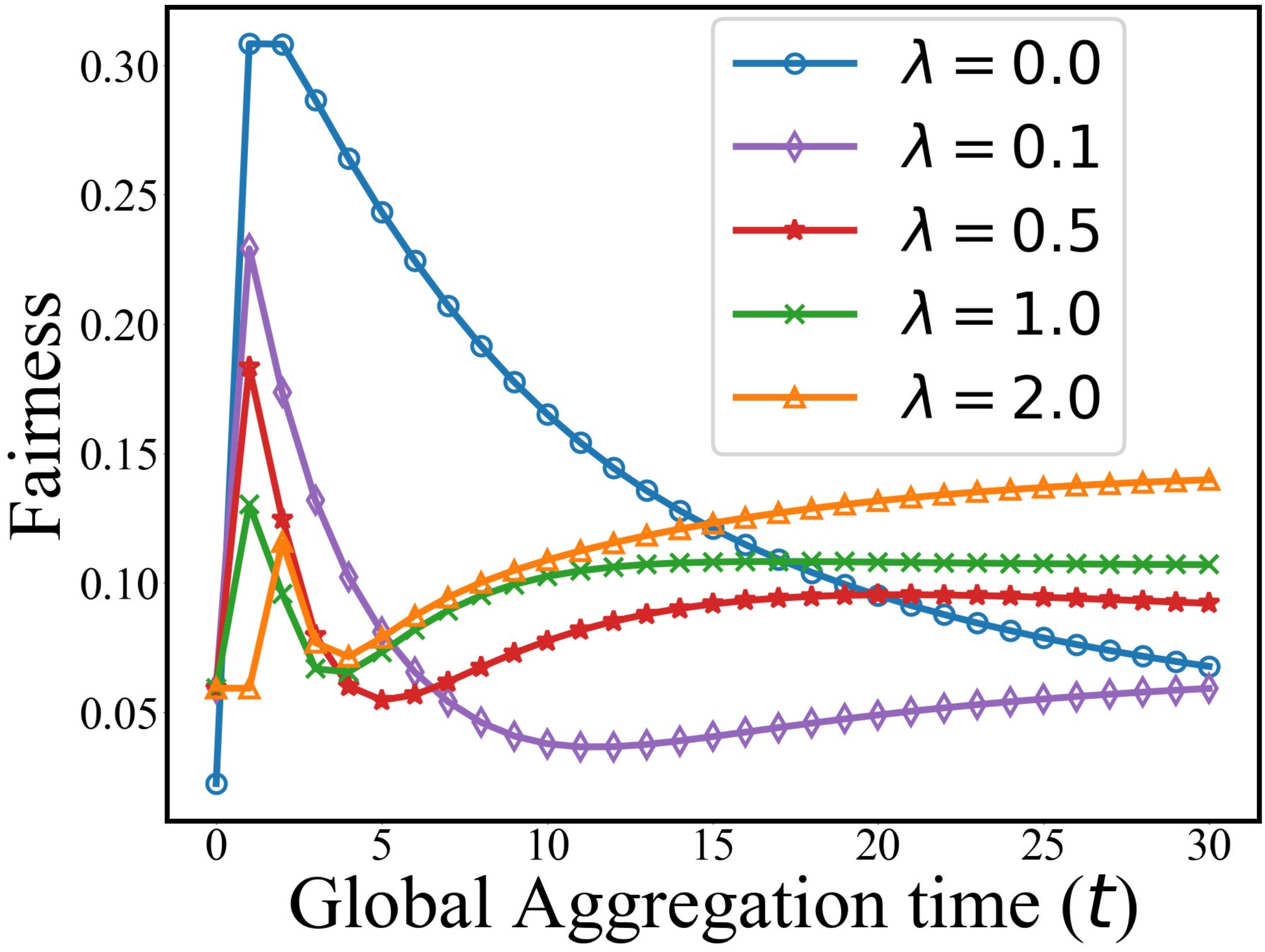}}
\\

\subfigure[{Accuracy vs. $t$ (CNN, FMNIST)}]
{
\label{Accuracy_fmnist_cnn_lambda_T30}
\includegraphics[width=0.23\textwidth]{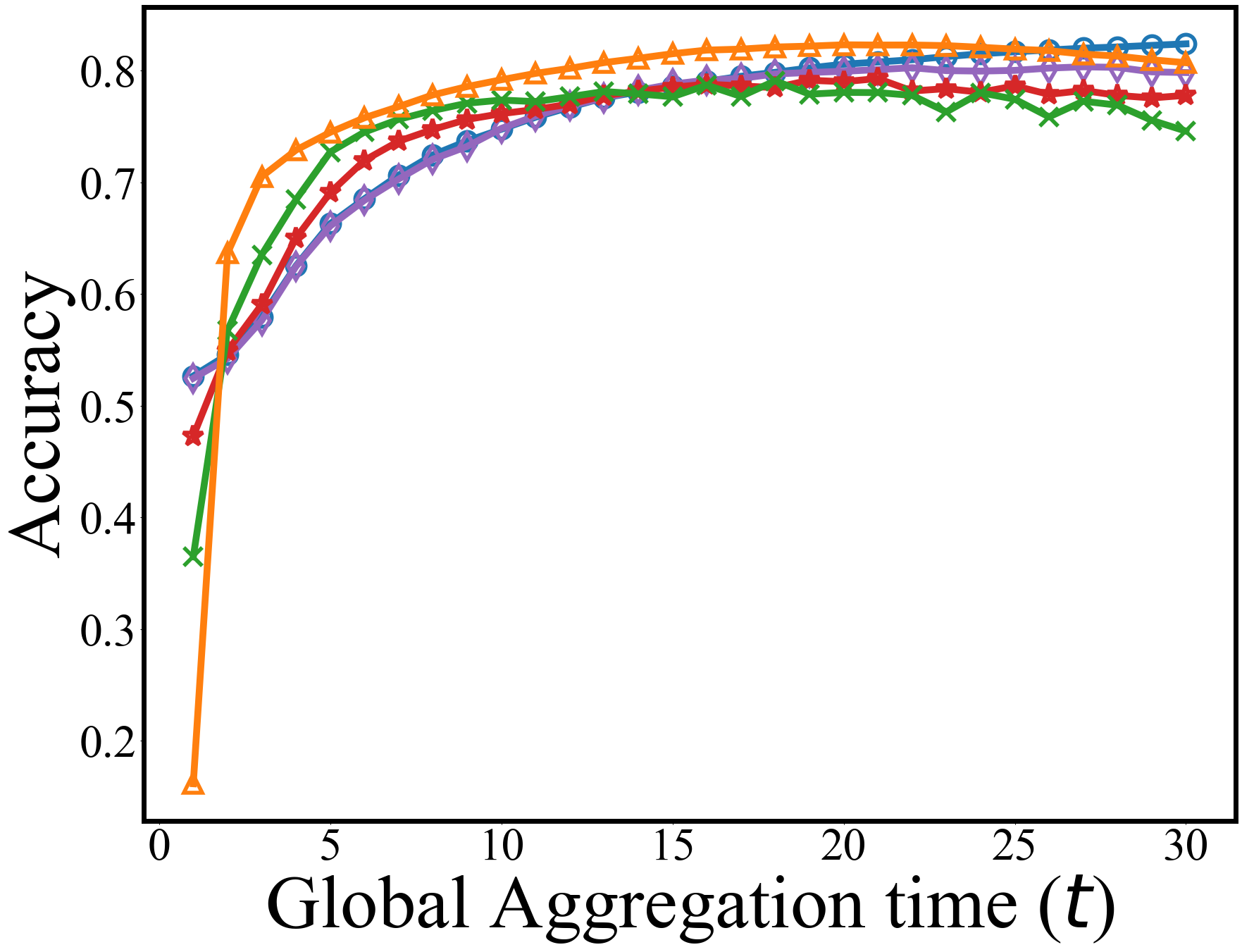}} 
\hspace{-1mm}
\subfigure[{Fairness vs. $t$ (CNN, FMNIST)}]{
\label{Fairness_fmnist_cnn_lambda_T30}
\includegraphics[width=0.23\textwidth]{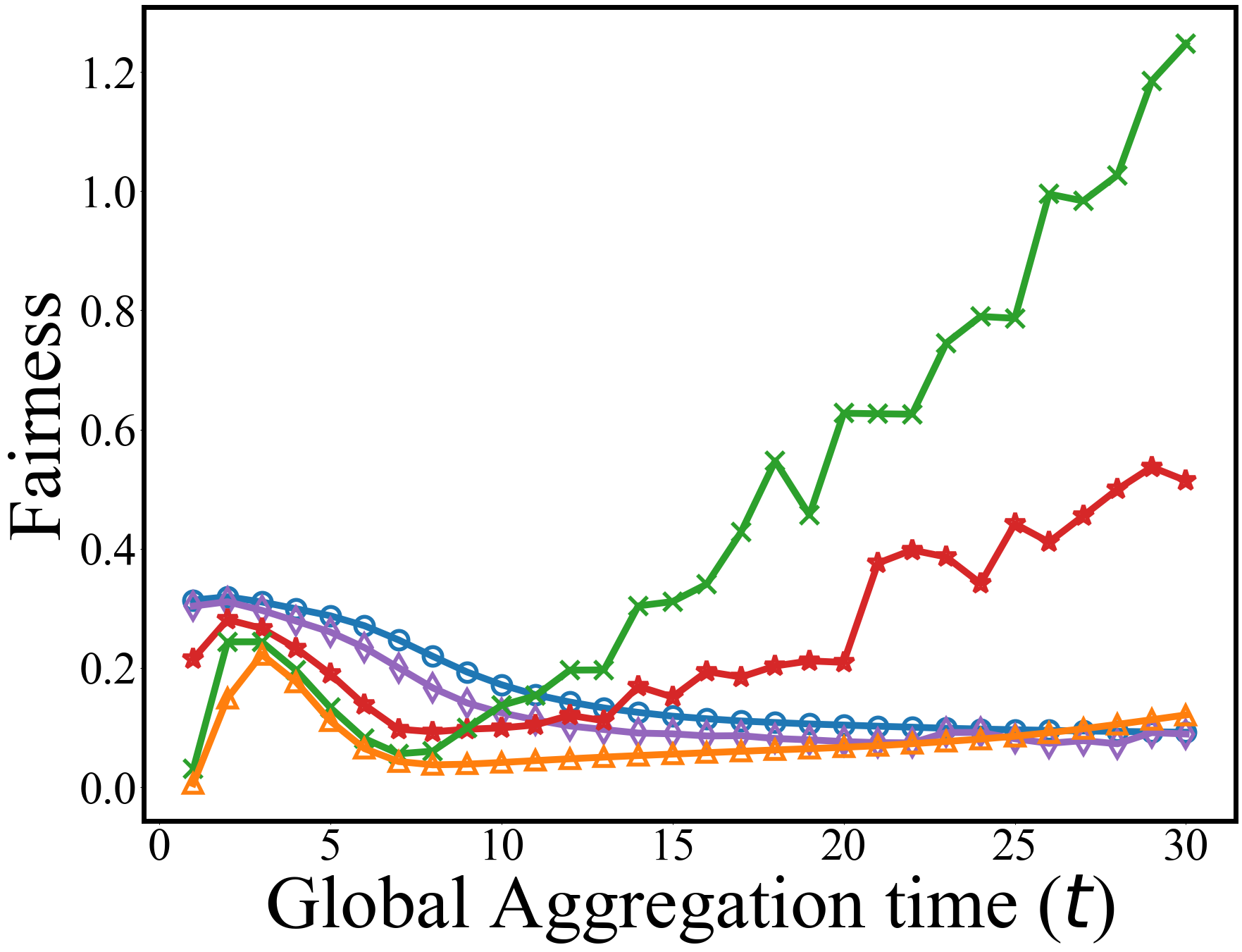}}
\\
\subfigure[{Accuracy vs. $t$ (CNN, CIFAR10)}]
{
\label{Accuracy_Cifar10_cnn_lambda_T30}
\includegraphics[width=0.23\textwidth]{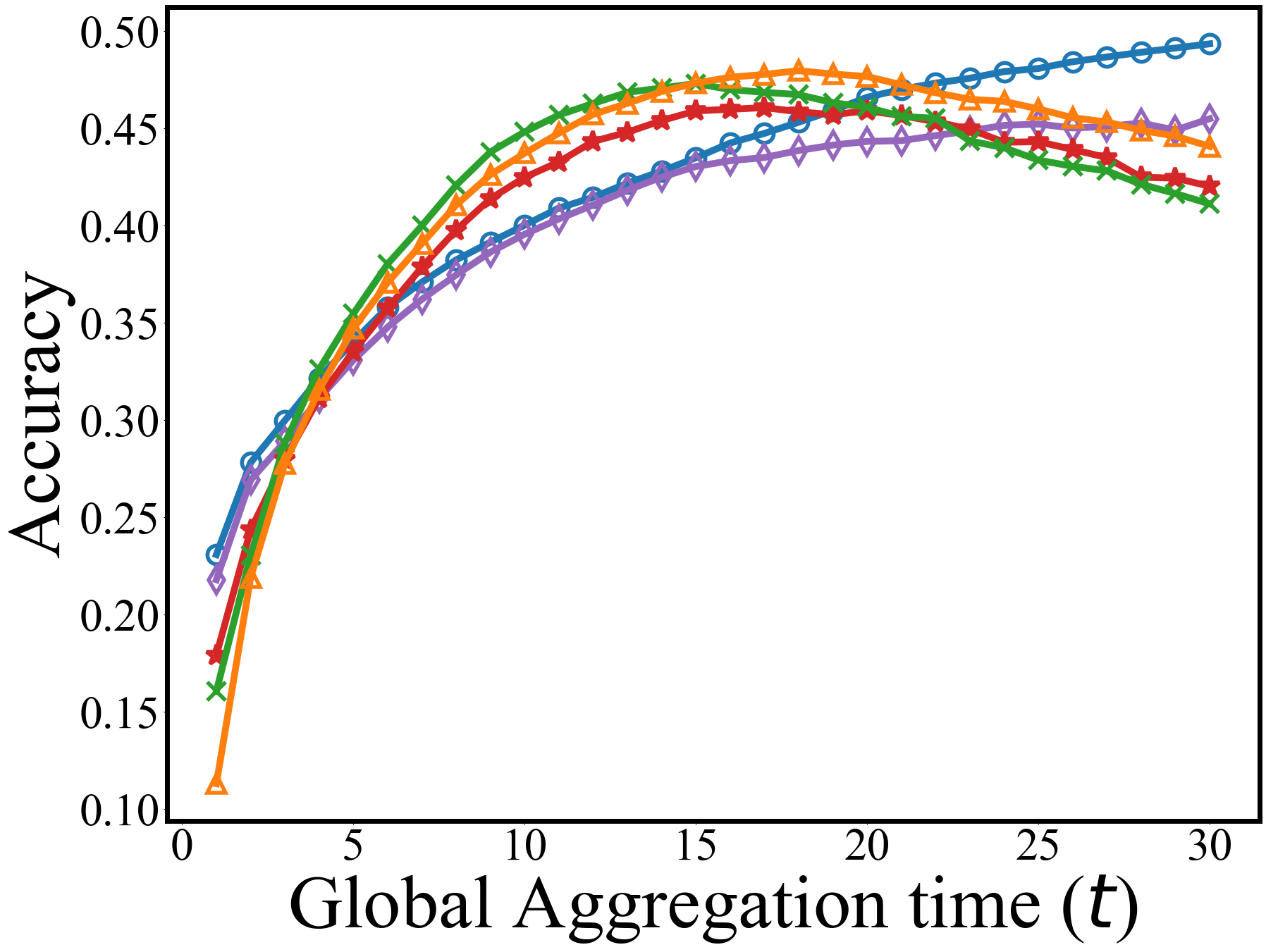}} 
\hspace{-1mm}
\subfigure[{Fairness vs. $t$ (CNN, CIFAR10)}]{
\label{Fairness_Cifar10_cnn_lambda_T30}
\includegraphics[width=0.23\textwidth]{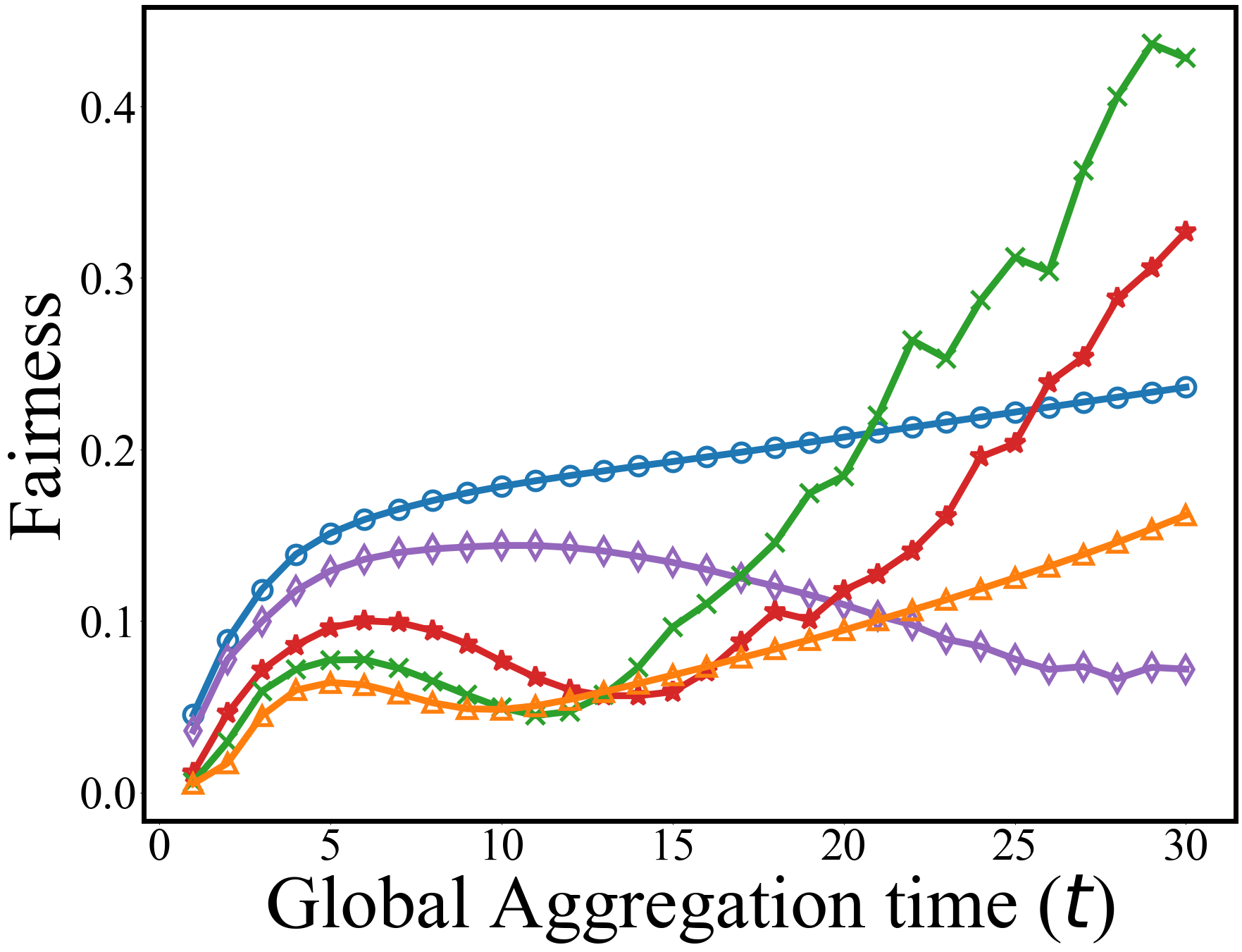}}
\caption{Comparison of accuracy and fairness vs. aggregation time $t$, under different $\lambda$ values with $T=30$, $\epsilon=10$, $\delta=0.01$.}
\label{acc_lambda_iid}
\end{figure}

Figs.~\ref{acc_lambda_iid} demonstrates the testing accuracy and fairness of DP-Ditto with the growing number $t$ of aggregations under different $\lambda$ values and datasets (i.e., MNIST, FMNIST, and CIFAR10).  
We set $\epsilon=10$, $\delta=0.01$, $ T=30$, 
$\lambda=0.1$, $0.5$, and $1.0$, as well as two special cases with $\lambda=0$ (i.e., local training only with no global aggregation) and $\lambda=2$ (i.e., FL with no personalization).
Fig.~\ref{Accuracy_mnist_lambda_1_10} shows the model accuracy increases with $t$ when $\lambda<0.1$, but increases first and then decreases when $\lambda>0.1$.
The reason is that the {FL} global model is most affected by the DP noise through the aggregations of noisy local model parameters. By contrast, the FL local training only depends on the local datasets and is unaffected by the DP noises. 
Moreover, $\lambda$ can be controlled to balance the accuracy between the {FL} global and local models. It adjusts the effect of the DP noises on the PL models.

Fig.~\ref{Fairness_mnist_lambda_2_10} gauges the fairness of DP-Ditto measured by the standard deviation of the  {training losses} concerning the PL models of all clients.  
The fairness of the PL models first degrades and then improves, {and then degrades again} as $t$ rises. The reason is that the accuracy of the clients' models is poor and, hence, fairness is high initially. As $t$ rises, the DP noises increasingly affect the PL models. 
The fairness is better than that of the global FL model {after a specific $t$} because the clients ignore the heterogeneity of their local data when all clients utilize the same global FL model for image classification. The PL model with a smaller $\lambda$ offers better fairness. {When $\lambda=0.1$, the PL model offers the best fairness.}


It is observed from Figs.~\ref{Accuracy_mnist_lambda_1_10}, \ref{Accuracy_fmnist_cnn_lambda_T30}, and \ref{Accuracy_Cifar10_cnn_lambda_T30} that the testing accuracy first increases and then decreases with the increase of $t$ under $\lambda<2$. This is due to the fact that the effect of the DP noise accumulates as $t$ grows, causing performance degradation when $t$ is excessively large. By contrast, the updates of the local models (i.e., the PL models at $\lambda=0$) are unaffected by the DP noises. 
{As shown in Fig.~\ref{Accuracy_fmnist_cnn_lambda_T30}--\ref{Fairness_Cifar10_cnn_lambda_T30}, under the CNN models on the FMNIST and CIFAR10 datasets, when $t$ is small, the PL models operate the best in accuracy and fairness at $\lambda=1$ and $2$; in other words, the PL models are closer to the FL global models. This is because the adverse effect caused by noise accumulation is insignificant when $t$ is small, and the PL models can benefit from the generalization offered by the FL global model.
On the other hand, when $t$ is large, the PL models perform the best at $\lambda=0$; i.e., the PL models are closer to the FL local models, due to 
DP noise accumulation.}

We further obtain the optimal ($T^{\ast}$, $\lambda^{\ast}$) for the DNN model on the MNIST dataset, where $\epsilon=100$, $\delta=0.01$, {and $C=10$}, as described in Section~V-C. Fig.~{\ref{T-lambda-fairness-1}} demonstrates the fairness of the PL models with $\lambda$ under different $T$ values. Fig.~\ref{T-lambda-fairness-1} shows that
the optimal $\lambda^{\ast}$ decreases as $T$ increases.
When $\lambda$ is small (i.e., $\lambda \rightarrow 0^+$), the fairness of the PL models improves with $T$, as DP-Ditto is dominated by PL with little assistance from FL. 
Consequently, the DP noise has little impact on the PL models.
When $\lambda$ is large {(i.e., $\lambda \rightarrow 2$)}, the fairness degrades as $T$ increases, as priority is given to generalization over personalization. The adverse effect of the DP noise becomes increasingly strong, compromising the fairness of the PL models.
Fig.~\ref{lambda_T_loss-1} demonstrates the training loss of the PL models against $T$, under different $\lambda$ values. Given $T$, the training loss decreases as $\lambda$ decreases. Moreover, the optimal $T^{\ast}$ increases as {$\lambda$} 
 decreases. By comparing Figs.~\ref{T-lambda-fairness-1} and \ref{lambda_T_loss-1}, the optimal configuration ($T^{\ast}=80$, $\lambda^{\ast}=0.1$) is obtained to minimize the training loss and guarantee the fairness of the PL models.

\section{Conclusion}
We have proposed DP-Ditto, and analyzed its convergence and fairness performance under a given privacy budget. In particular, we have derived the convergence upper bound of the PL models under DP-Ditto and the optimal number of global aggregations. 
We also analyzed the performance fairness of the PL models and illustrated the feasibility of jointly optimizing the DP-Ditto configuration for convergence and fairness.
Extensive experiments have corroborated our analysis and showed that DP-Ditto can outperform its benchmarks significantly by over $24.3\%$ in fairness and $48.16\%$ in accuracy. 
{This work contributes to a better understanding of the interplay among the privacy, convergence, and fairness of PFL.
In the future, we will analyze fairness for nonlinear PFL models in imperfect wireless environments and design methods to balance the trade-off among the convergence, fairness, and privacy of the models.}

\begin{figure}[t]
	\centering
    \subfigure[Fairness vs. $\lambda$]{
\label{T-lambda-fairness-1}
\includegraphics[width=0.22\textwidth]{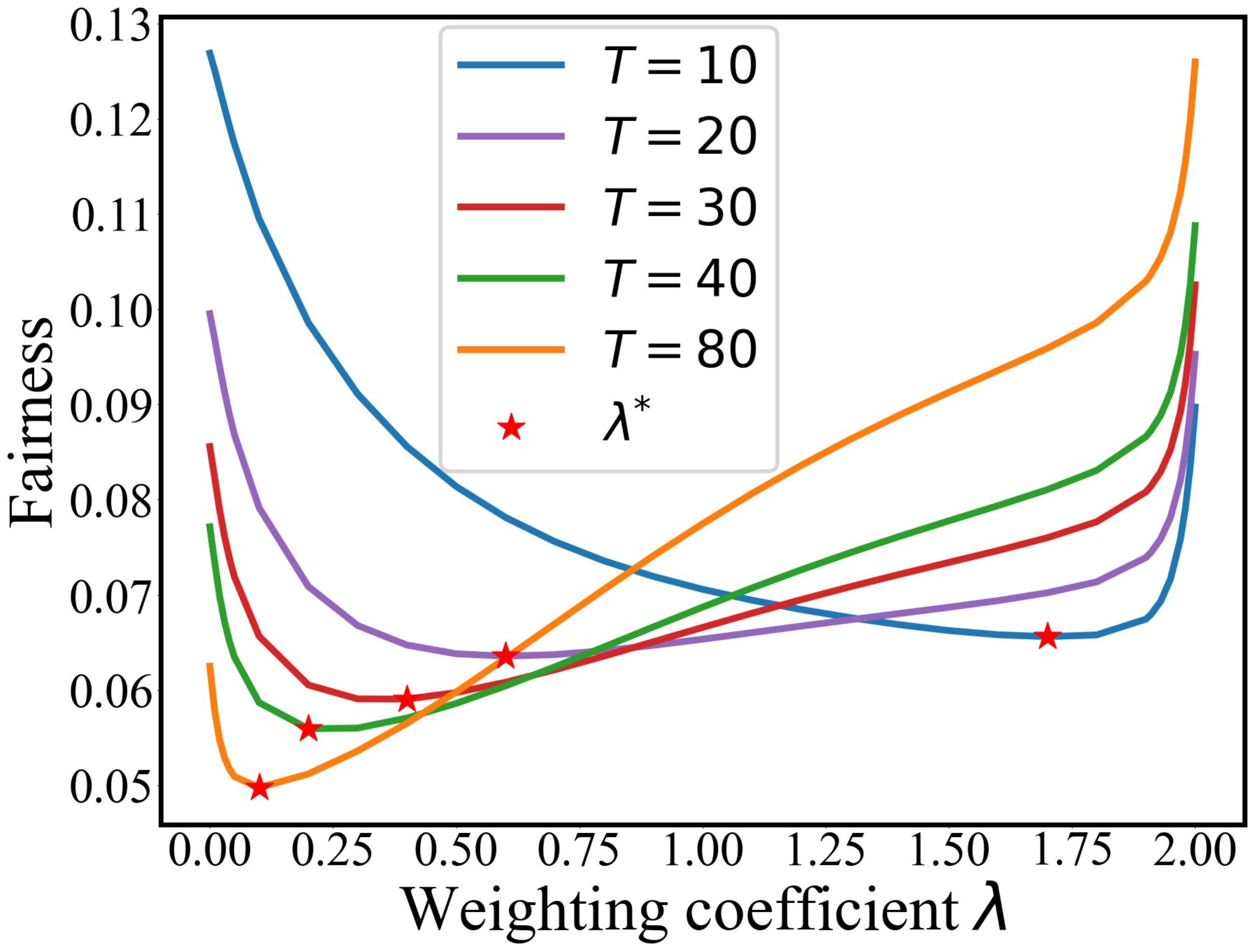}
\hspace{-1mm}
}
\subfigure[Training loss vs. $T$]{ 	\label{lambda_T_loss-1}
\includegraphics[width=0.23\textwidth]{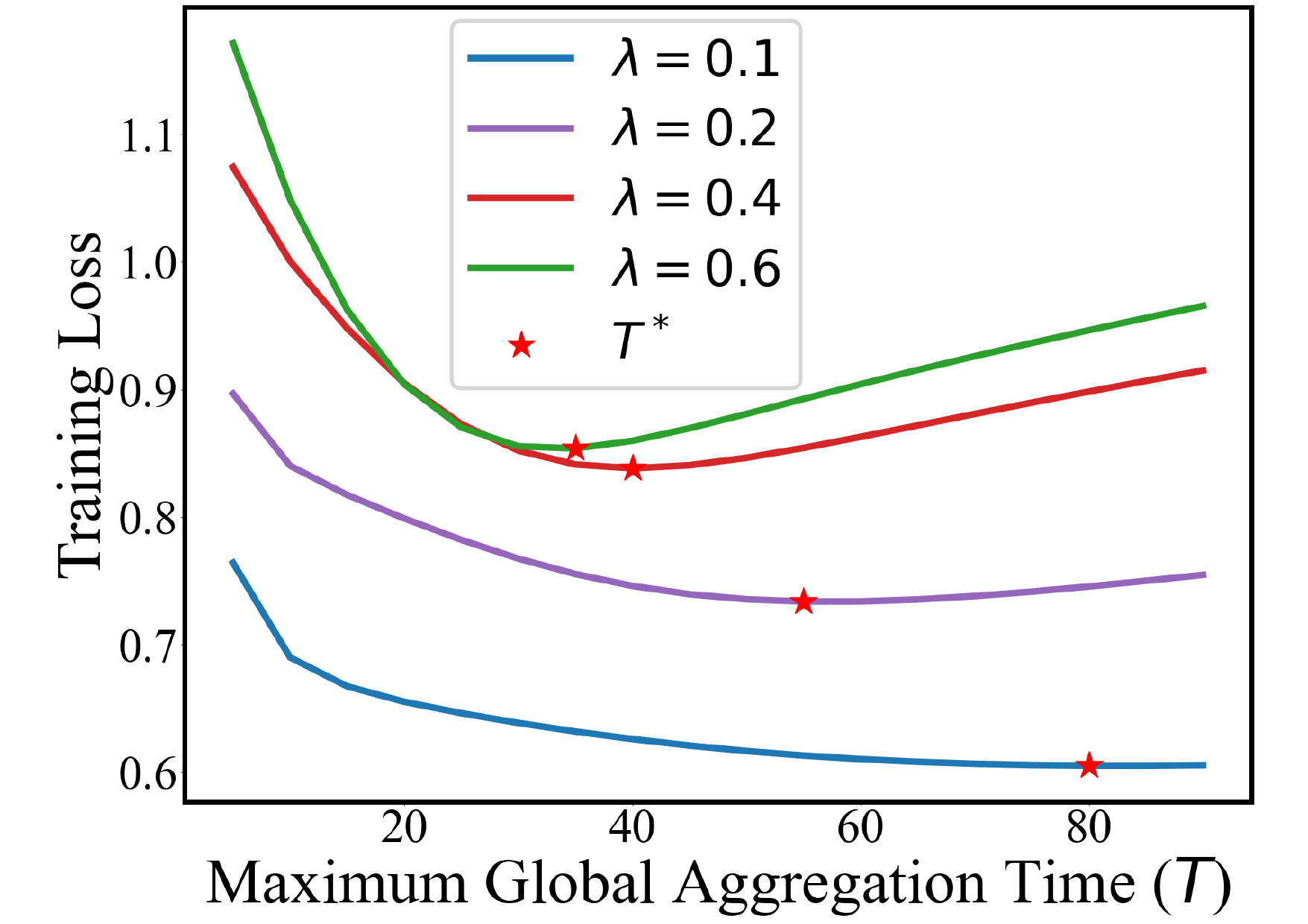}}
	\caption{Fairness and training loss of PL vs. $\lambda$ and $T$, where $\epsilon=100$, $\delta=0.01$, and $C=10$. MNIST is considered.}
    \label{Fairness_loss}
\end{figure}

\appendix
\subsection{Proof of \textbf{Lemma \ref{Lemma1}}}
\label{pLemma1}
Let $g_{n}(\boldsymbol{\varpi}_{n}^{t};\boldsymbol{\omega}^{t})$ be the stochastic gradient of $f_{n}(\boldsymbol{\varpi}_{n}^{t};\boldsymbol{\omega}^{t})$:
\begin{equation}
\label{gradient}
    g_{n}(\boldsymbol{\varpi}_{n}^{t};\boldsymbol{\omega}^{t})=\left(1-\frac{\lambda}{2}\right)\nabla F_{n}(\boldsymbol{\varpi}_{n}^{t})+\lambda(\boldsymbol{\varpi}_{n}^{t}-\boldsymbol{\omega}^{t}) \,.
\end{equation}
As per the PL model of client $n$ at the $(t+1)$-th model update, we have~\cite[Eq.~(96)]{li2021ditto}
	\begin{align}
        \label{PerOneStep00_2}
&\mathbb{E}\!\left[\parallel\!\!\overset{\sim}{\boldsymbol{\varpi}}_{n}^{t+1}\!\!\!\!-\!\boldsymbol{\varpi}_{n}^{\ast}\!\!\parallel^{2}\right]
=\mathbb{E}\left[\parallel\overset{\sim}{\boldsymbol{\varpi}}_{n}^{t}-\boldsymbol{\varpi}_{n}^{\ast}\parallel^{2}\right]+\eta_{\mathrm{L}}^{2}\times \nonumber\\
&\mathbb{E}\!\left[\parallel g_{n}(\overset{\sim}{\boldsymbol{\varpi}}_{n}^{t};\overset{\sim}{\boldsymbol{\omega}}^{t})\parallel^{2}\right]\!\!+\!2\eta_{\mathrm{L}}\mathbb{E}\!\left\langle g_{n}(\overset{\sim}{\boldsymbol{\varpi}}_{n}^{t};\overset{\sim}{\boldsymbol{\omega}}^{t}),\boldsymbol{\varpi}_{n}^{\ast}\!\!-\!\!\overset{\sim}{\boldsymbol{\varpi}}_{n}^{t}\right\rangle.
	\end{align}
The third term on the RHS of (\ref{PerOneStep00_2}) becomes 
{\small
        \begin{subequations} 
	\begin{align}
2\eta_{\mathrm{L}}\mathbb{E}&\left\langle g_{n}(\overset{\sim}{\boldsymbol{\varpi}}_{n}^{t};\overset{\sim}{\boldsymbol{\omega}}^{t}),\boldsymbol{\varpi}_{n}^{\ast}-\overset{\sim}{\boldsymbol{\varpi}}_{n}^{t}\right\rangle \nonumber
\\
\begin{split}
\leq& 2\eta_{\mathrm{L}}\left(1-\frac{\lambda}{2}\right)\mathbb{E}\left[F_{n}\left(\boldsymbol{\varpi}_{n}^{\ast};\overset{\sim}{\boldsymbol{\omega}}^{t}\right)-F_{n}\left(\overset{\sim}{\boldsymbol{\varpi}}_{n}^{t};\overset{\sim}{\boldsymbol{\omega}}^{t}\right)\right] \\
&-\eta_{\mathrm{L}}\left(1-\frac{\lambda}{2}\right)\mu \mathbb{E}\left[\parallel\boldsymbol{\varpi}_{n}^{\ast}-\overset{\sim}{\boldsymbol{\varpi}}_{n}^{t}\parallel^{2}\right]\\
&+2\eta_{\mathrm{L}}\mathbb{E}\left\langle \lambda(\overset{\sim}{\boldsymbol{\varpi}}_{n}^{t}-\overset{\sim}{\boldsymbol{\omega}}^{t}),\boldsymbol{\varpi}_{n}^{\ast}-\overset{\sim}{\boldsymbol{\varpi}}_{n}^{t}\right\rangle 
\end{split}
\label{divengence3}
\\
\begin{split}
=&2\eta_{\mathrm{L}}\mathbb{E}\left[f_{n}\left(\boldsymbol{\varpi}_{n}^{\ast};\overset{\sim}{\boldsymbol{\omega}}^{t}\right)-f_{n}\left(\overset{\sim}{\boldsymbol{\varpi}}_{n}^{t};\overset{\sim}{\boldsymbol{\omega}}^{t}\right)\right]\\
&-\eta_{\mathrm{L}}\left(\left(1-\frac{\lambda}{2}\right)\mu+\lambda\right)\mathbb{E}\left[\parallel\boldsymbol{\varpi}_{n}^{\ast}-\overset{\sim}{\boldsymbol{\varpi}}_{n}^{t}\parallel^{2}\right]         \,.
\end{split}
\label{divengence4}
	\end{align}
	\end{subequations} %
 }%
Here, 
(\ref{divengence3}) is obtained by first applying  (\ref{gradient}) and considering the $\mu$-strong convexity of $F_n(\cdot)$, followed by substituting (\ref{fn}) into (\ref{divengence3}) and yielding (\ref{divengence4}).

By substituting (\ref{divengence4}) into (\ref{PerOneStep00_2}), we obtain the upper bound of $\mathbb{E}\left[\parallel\overset{\sim}{\boldsymbol{\varpi}}_{n}^{t+1}-\boldsymbol{\varpi}_{n}^{\ast}\parallel^{2}\right]$ in \eqref{divengence4}, as given by
{\small
    \begin{subequations} 
	\begin{align}
        \label{PerOneStep1_1}
        \begin{split}
\!\!\!\mathbb{E}&\left[\parallel\overset{\sim}{\boldsymbol{\varpi}}_{n}^{t+1}\!-\!\boldsymbol{\varpi}_{n}^{\ast}\parallel^{2}\right]\leq(1-\eta_{\mathrm{L}}\left(\left(1-\frac{\lambda}{2}\right)\mu+\lambda\right))\times \\
&\,\,\,\,\,\,\,\,\,\,\,\,\,\,\,\,\,\,\,\,\,\,\mathbb{E}\left[\parallel\overset{\sim}{\boldsymbol{\varpi}}_{n}^{t}-\boldsymbol{\varpi}_{n}^{\ast}\parallel^{2}\right]+\eta_{\mathrm{L}}^{2}\mathbb{E}\left[\parallel g(\overset{\sim}{\boldsymbol{\varpi}}_{n}^{t};\overset{\sim}{\boldsymbol{\omega}}^{t})\parallel^{2}\right] \\
&\,\,\,\,\,\,\,\,\,\,\,\,\,\,\,\,\,\,\,\,+2\eta_{\mathrm{L}}\mathbb{E}\left[f_{n}\left(\boldsymbol{\varpi}_{n}^{\ast};\overset{\sim}{\boldsymbol{\omega}}^{t}\right)-f_{n}\left(\overset{\sim}{\boldsymbol{\varpi}}_{n}^{t};\overset{\sim}{\boldsymbol{\omega}}^{t}\right)\right] 
\end{split}
\\
\label{PerOneStep1_3}
\begin{split}
\leq&(1-\eta_{\mathrm{L}}\left(\left(1-\frac{\lambda}{2}\right)\mu+\lambda\right))\mathbb{E}\left[\parallel\overset{\sim}{\boldsymbol{\varpi}}_{n}^{t}-\boldsymbol{\varpi}_{n}^{\ast}\parallel^{2}\right]\\
&+\eta_{\mathrm{L}}^{2}\mathbb{E}\left[\parallel g(\overset{\sim}{\boldsymbol{\varpi}}_{n}^{t};\boldsymbol{\omega}^{\ast})\parallel^{2}\right]+\eta_{\mathrm{L}}^{2}\lambda^{2}\mathbb{E}\left[\parallel{\overset{\sim}{\boldsymbol{\omega}}}^{t}-\boldsymbol{\omega}^{\ast}\parallel^{2}\right] \\
&+2\eta_{\mathrm{L}}^{2}\lambda\sqrt{\mathbb{E}\left[\parallel g(\overset{\sim}{\boldsymbol{\varpi}}_{n}^{t};\boldsymbol{\omega}^{\ast})\parallel^{2}\right]}\sqrt{\mathbb{E}\left[\parallel\overset{\sim}{\boldsymbol{\omega}}^{t}-\boldsymbol{\omega}^{\ast}\parallel^{2}\right]} \\
&+2\eta_{\mathrm{L}}\lambda\sqrt{\mathbb{E}\left[\parallel\overset{\sim}{\boldsymbol{\varpi}}_{n}^{t}-\boldsymbol{\varpi}_{n}^{\ast}\parallel^{2}\right]\mathbb{E}\left[\parallel\overset{\sim}{\boldsymbol{\omega}}^{t}-\boldsymbol{\omega}^{\ast}\parallel^{2}\right]} 
\end{split} 
\\
\label{PerOneStep1_4}
\begin{split}
\leq&(1\!\!-\!\!\eta_{\mathrm{L}}\!\left(\!\left(1\!\!-\!\!\frac{\lambda}{2}\!\right)\!\mu\!\!+\!\!\lambda\!\right)\!\!+\!\!\eta_{\mathrm{L}}\!)\!\mathbb{E}\!\left[\!\parallel\!\overset{\sim}{\boldsymbol{\varpi}}_{n}^{t}\!\!-\!\!\boldsymbol{\varpi}_{n}^{\ast}\!\parallel^{2}\!\right]+\!\!\left(\!\eta_{\mathrm{L}}^{2}\!\!+\!\!\eta_{\mathrm{L}}^{2}\lambda^{2}\right)\!\!\bullet\\&\mathbb{E}\!\!\left[\!\parallel \!g(\overset{\sim}{\boldsymbol{\varpi}}_{n}^{t};\boldsymbol{\omega}^{\ast})\!\!\parallel^{2}\right]\!\!+\!\!\left(\!2\eta_{\mathrm{L}}^{2}\lambda^{2}\!\!+\!\!\eta_{\mathrm{L}}\lambda^{2}\!\right)\!\mathbb{E}\!\!\left[\!\parallel\!\!{\overset{\sim}{\boldsymbol{\omega}}}^{t}\!\!-\!\!\boldsymbol{\omega}^{\ast}\!\!\parallel^{2}\!\right]
\end{split}
\\
\label{PerOneStep1_5}
\begin{split}
\leq&(1-\eta_{\mathrm{L}}\left(\left(1-\frac{\lambda}{2}\right)\mu+\lambda\right)+\eta_{\mathrm{L}})\mathbb{E}\left[\parallel\overset{\sim}{\boldsymbol{\varpi}}_{n}^{t}-\boldsymbol{\varpi}_{n}^{\ast}\parallel^{2}\right]\\
&+\left(\eta_{\mathrm{L}}^{2}+\eta_{\mathrm{L}}^{2}\lambda^{2}\right)\mathbb{E}\left[\parallel g(\overset{\sim}{\boldsymbol{\varpi}}_{n}^{t};\boldsymbol{\omega}^{\ast})\parallel^{2}\right]\\
&+\frac{4\eta_{\mathrm{L}}^{2}\lambda^{2}+2\eta_{\mathrm{L}}\lambda^{2}}{\mu}\mathbb{E}\left[F\left(\overset{\sim}{\boldsymbol{\omega}}^{t}\right)-F\left(\boldsymbol{\omega}^{\ast}\right)\right],
        \end{split}
	\end{align}
	\end{subequations} %
 }%
 where (\ref{PerOneStep1_3}) and (\ref{PerOneStep1_4}) are based on Cauchy–Schwarz inequality, and
 (\ref{PerOneStep1_5}) is due to the $\mu$-strong convexity of $F(\cdot)$ and the inequality $\parallel\overset{\sim}{\boldsymbol{\omega}}^{t}-\boldsymbol{\omega}^{\ast}\parallel^{2}\leq\frac{2}{\mu}\left[F\left(\overset{\sim}{\boldsymbol{\omega}}^{t}\right)-F\left(\boldsymbol{\omega}^{\ast}\right)\right]$.

 Further, we establish the upper bounds for the squared distances between the PL model and the FL local model, and between the PL model and the optimal FL global model, and for the squared norm of the gradient of the PL model:
 {\small
       	\begin{equation} 
        \label{E1}
\mathbb{E}\left[\parallel\overset{\sim}{\boldsymbol{\varpi}}_{n}^{t}-\boldsymbol{u}_{n}^{\ast}\parallel^{2}\right]\leq\frac{1}{\mu^{2}}\mathbb{E}\left[\parallel\nabla F_{n}(\overset{\sim}{\boldsymbol{\varpi}}_{n}^{t})\parallel^{2}\right]\leq\frac{G_{0}^{2}}{\mu^{2}} ;
	\end{equation}
\begin{align}
        \label{E2}
\mathbb{E}\left[\parallel\overset{\sim}{\boldsymbol{\varpi}}_{n}^{t}-\boldsymbol{\omega}^{\ast}\parallel^{2}\right]
\leq&\mathbb{E}\left[\parallel\overset{\sim}{\boldsymbol{\varpi}}_{n}^{t}-\boldsymbol{u}_{n}^{\ast}\parallel^{2}\right]+\mathbb{E}\left[\parallel \boldsymbol{u}_{n}^{\ast}-\boldsymbol{\omega}^{\ast}\parallel^{2}\right]\nonumber\\
&+2\mathbb{E}\left[\parallel\overset{\sim}{\boldsymbol{\varpi}}_{n}^{t}-\boldsymbol{u}_{n}^{\ast}\parallel\times \parallel \boldsymbol{u}_{n}^{\ast}-\boldsymbol{\omega}^{\ast}\parallel\right]\nonumber\\
\leq&\frac{G_{0}^{2}}{\mu^{2}}+M^{2}+\frac{2MG_{0}}{\mu} ;
	\end{align} 
         \begin{align}
        \label{E3} 
        \mathbb{E}\!\left[\parallel \!\!g(\overset{\sim}{\boldsymbol{\varpi}}_{n}^{t};\boldsymbol{\omega}^{\ast})\!\!\parallel^{2}\right]
        \leq {G} .
	\end{align}  %
 }%
Here, (\ref{E1}) is due to the convexity of $F_n(\cdot)$ and the assumption that $\mathbb{E}\left[\parallel\nabla F_{n}(\overset{\sim}{\boldsymbol{\omega}}^{t})\parallel^{2}\right]\leq G_{0}^{2}$.
 (\ref{E2}) is obtained by first leveraging the Cauchy-Schwarz inequality and then substituting (\ref{E1}). Likewise, (\ref{E3}) is obtained by first using the Cauchy-Schwarz inequality and then plugging (\ref{E2}).

By plugging (\ref{E1})--(\ref{E3}) into (\ref{PerOneStep1_5}), it readily follows that 
{\small
	\begin{align}
        \label{PerOneStep2}
        \mathbb{E}\!\left[\parallel\!\!\overset{\sim}{\boldsymbol{\varpi}}_{n}^{t+1}\!\!\!\!-\!\boldsymbol{\varpi}_{n}^{\ast}\!\!\parallel^{2}\right]
        &\!\leq \varepsilon_{\mathrm{L}}\mathbb{E}\left[\parallel\!\!\overset{\sim}{\boldsymbol{\varpi}}_{n}^{t}-\boldsymbol{\varpi}_{n}^{\ast}\!\!\parallel^{2}\right]\!+\!\left(\eta_{\mathrm{L}}^{2}+\eta_{\mathrm{L}}^{2}\lambda^{2}\right)G\nonumber\\        &\!\!\!\!\!\!\!\!\!\!\!\!\!\!\!\!\!\!\!\!\!\!\!+\frac{4\eta_{\mathrm{L}}^{2}\lambda^{2}+2\eta_{\mathrm{L}}\lambda^{2}}{\mu}\mathbb{E}\left[F\left(\overset{\sim}{\boldsymbol{\omega}}^{t}\right)-F\left(\boldsymbol{\omega}^{\ast}\right)\right],
	\end{align}%
 }%
which concludes this proof. 

\subsection{Proof of \textbf{Lemma \ref{lemma2}}}
\label{plemma2}

    Considering the aggregation process with the DP noise added in the $(t+1)$-th aggregation, the expected difference between $\overset{\sim}{\boldsymbol{\omega}}^{t+1}$ and $\boldsymbol{\omega}^{\ast}$ is upper bounded by \cite[Eq.~(59)]{wei2020federated}
    {\small
   	\begin{equation} 
	\begin{split}
        \label{glotplus1}
&\mathbb{E}\left[F\left(\overset{\sim}{\boldsymbol{\omega}}^{t+1}\right)\right]-F\left(\boldsymbol{\omega}^{\ast}\right)\leq\mathbb{E}\left[F\left(\overset{\sim}{\boldsymbol{\omega}}^{t}\right)\right]-F\left(\boldsymbol{\omega}^{\ast}\right)\\
&+\eta_{\mathrm{G}}\left(\frac{\eta_{\mathrm{G}}L}{2}-1\right)\parallel\nabla F(\overset{\sim}{\boldsymbol{\omega}}^{t})\parallel^{2}+\frac{L}{2 N^2}\mathbb{E}\left[\parallel\mathbf{z}^{t}\parallel^{2}\right]
        \,.
	\end{split}
	\end{equation}%
 }%
 {By substituting $F\left(\boldsymbol{\omega}\right)-F\left(\boldsymbol{\omega}^{\ast}\right)\leq\frac{1}{2\mu}\parallel\nabla F\left(\boldsymbol{\omega}\right)\parallel^{2}$, $\mathbb{E}\left[\parallel\mathbf{z}^{t}\parallel^{2}\right]=d\sigma_z^2$, and \eqref{noiseSigma1} into \eqref{glotplus1}, \textbf{Lemma \ref{lemma2}} readily follows. }

\subsection{Proof of \textbf{Theorem \ref{con_UpperBound}}}
\label{pcon_UpperBound}

 When $\varepsilon_{\mathrm{L}} \neq \varepsilon_{\mathrm{G}}$, it follows from (\ref{PerOneStep2}) and (\ref{gloOneStep}) that
 {\small
        \begin{subequations} 
        \label{PerConvergence}
	\begin{align}
        \label{PerConvergence1}
        \begin{split}
\mathbb{E}&\!\left[\!\parallel\!\!\overset{\sim}{\boldsymbol{\varpi}}_{n}^{t+1}\!\!\!\!-\!\boldsymbol{\varpi}_{n}^{\ast}\!\!\parallel^{2}\!\right]\!\!
\leq \!\varepsilon_{\mathrm{L}}^{t+1}\mathbb{E}\!\left[\parallel\!\!\boldsymbol{\varpi}_{n}^{0}\!-\!\boldsymbol{\varpi}_{n}^{\ast}\!\!\parallel^{2}\right]\!\!+\!\!\left(\eta_{\mathrm{L}}^{2}\!\!+\!\!\eta_{\mathrm{L}}^{2}\lambda^{2}\right)\!G\!\!\sum_{x=0}^{t}\varepsilon_{\mathrm{L}}^{x}\\
&+\frac{4\eta_{\mathrm{L}}^{2}\lambda^{2}+2\eta_{\mathrm{L}}\lambda^{2}}{\mu}\left[\sum_{x=0}^{t}\varepsilon_{\mathrm{L}}^{x}\varepsilon_{\mathrm{G}}^{t-x}\mathbb{E}\left[F\left(\boldsymbol{\omega}^{0}\right)-F\left(\boldsymbol{\omega}^{\ast}\right)\right] \right. \\
&\left.+\sum_{x=0}^{t-1}\sum_{y=0}^{t-1-x}\varepsilon_{\mathrm{L}}^{x}\varepsilon_{\mathrm{G}}^{y}{\varphi_{\mathrm{L}}}T\right]
\end{split}
\\
\label{PerConvergence2}
\begin{split}
= &\varepsilon_{\mathrm{L}}^{t+1}\mathbb{E}\left[\parallel\boldsymbol{\varpi}_{n}^{0}-\boldsymbol{\varpi}_{n}^{\ast}\parallel^{2}\right]+\left(1+\lambda^{2}\right)\eta_{\mathrm{L}}^{2}G\frac{\varepsilon_{\mathrm{L}}^{t+1}-1}{\varepsilon_{\mathrm{L}}-1}\\
&+\frac{(4\eta_{\mathrm{L}}^{2}+2\eta_{\mathrm{L}})\lambda^{2}}{\mu}\left[\frac{\varepsilon_{\mathrm{L}}^{t+1}-\varepsilon_{\mathrm{G}}^{t+1}}{\varepsilon_{\mathrm{L}}-\varepsilon_{\mathrm{G}}}\mathbb{E}\left[F\left(\boldsymbol{\omega}^{0}\right)-F\left(\boldsymbol{\omega}^{\ast}\right)\right] \right.\\
&\left.+\left(\frac{\varepsilon_{\mathrm{L}}^{t}-\varepsilon_{\mathrm{G}}^{t}}{\frac{\varepsilon_{\mathrm{L}}}{\varepsilon_{\mathrm{G}}}-1}-\frac{\varepsilon_{\mathrm{L}}^{t}-1}{\varepsilon_{\mathrm{L}}-1}\right)\frac{{\varphi_{\mathrm{L}}}T}{\varepsilon_{\mathrm{G}}-1}\right].        
        \end{split}
	\end{align}
	\end{subequations}%
 }%
Here, (\ref{PerConvergence2}) is based on geometric series $\sum_{x=0}^{t}a^x=\frac{a^{t+1}-1}{a-1}$, where $a = \frac{\varepsilon_{\mathrm{L}}}{\varepsilon_{\mathrm{G}}} \neq 1$. 

When $\varepsilon_{\mathrm{L}} = \varepsilon_{\mathrm{G}}$, 
(\ref{PerConvergence1}) is rewritten as 
{\small
\begin{subequations} 
        \label{PerConvergence_LG}
	\begin{align}
        \label{PerConvergence1_LG}
        \begin{split}
\mathbb{E}&\!\!\left[\!\parallel\!\!\overset{\sim}{\boldsymbol{\varpi}}_{n}^{t+1}\!\!\!\!-\!\boldsymbol{\varpi}_{n}^{\ast}\!\!\parallel^{2}\!\right]
\!\!\leq \!\varepsilon_{\mathrm{L}}^{t+1}\mathbb{E}\!\left[\parallel\!\!\boldsymbol{\varpi}_{n}^{0}\!\!-\!\boldsymbol{\varpi}_{n}^{\ast}\!\!\parallel^{2}\right]\!\!+\!\!\left(\eta_{\mathrm{L}}^{2}\!\!+\!\!\eta_{\mathrm{L}}^{2}\lambda^{2}\right)\!G\!\!\sum_{x=0}^{t}\!\varepsilon_{\mathrm{L}}^{x}\\
&+\frac{4\eta_{\mathrm{L}}^{2}\lambda^{2}+2\eta_{\mathrm{L}}\lambda^{2}}{\mu}\left[\sum_{x=0}^{t}\varepsilon_{\mathrm{L}}^{t}\mathbb{E}\left[F\left(\boldsymbol{\omega}^{0}\right)-F\left(\boldsymbol{\omega}^{\ast}\right)\right] \right. \\
&\left.+\sum_{x=0}^{t-1}\varepsilon_{\mathrm{L}}^{x}\sum_{y=0}^{t-1-x}\varepsilon_{\mathrm{L}}^{y}{\varphi_{\mathrm{L}}}T\right]
\end{split}
\\
\label{PerConvergence2_LG}
\begin{split}
= &\varepsilon_{\mathrm{L}}^{t+1}\mathbb{E}\left[\parallel\boldsymbol{\varpi}_{n}^{0}-\boldsymbol{\varpi}_{n}^{\ast}\parallel^{2}\right]+\left(1+\lambda^{2}\right)\eta_{\mathrm{L}}^{2}G\frac{\varepsilon_{\mathrm{L}}^{t+1}-1}{\varepsilon_{\mathrm{L}}-1}\\
&+\frac{(4\eta_{\mathrm{L}}^{2}+2\eta_{\mathrm{L}})\lambda^{2}}{\mu}\left[(t+1)\varepsilon_{\mathrm{L}}^{t}\mathbb{E}\left[F\left(\boldsymbol{\omega}^{0}\right)-F\left(\boldsymbol{\omega}^{\ast}\right)\right] \right.\\
&\left.+\left(t\varepsilon_{\mathrm{L}}^{t}-\frac{\varepsilon_{\mathrm{L}}^{t}-1}{\varepsilon_{\mathrm{L}}-1}\right)\frac{{\varphi_{\mathrm{L}}}T}{\varepsilon_{\mathrm{L}}-1}\right],        
        \end{split}
	\end{align}
	\end{subequations}%
 }%
With $\mu >\frac{2-2\lambda}{2-\lambda}$ under Assumption 1, $\varepsilon_{\mathrm{L}}<1$ and therefore DP-Ditto converges as $t$ increases. After $T$ aggregations, the convergence upper bound of DP-Ditto is (\ref{theoConverT})  {or (\ref{theoConverT_LG})}. 

\subsection{Proof of \textbf{Lemma \ref{noisy_pL}}}
\label{BLR_models}
By finding the solution to the first derivative of $f_{n}(\boldsymbol{\varpi}_{n};{\boldsymbol{\omega}}^{\ast})$ in (\ref{fn}) and then replacing $\boldsymbol{\omega}^{\ast}$ with $\overset{\sim}{\boldsymbol{\omega}}^{\ast}$ in the solution, the optimal PL model is given by
{\small
 	\begin{subequations} 
      \label{OpLiPersonal0}
	\begin{align}
     \overset{\sim}{\boldsymbol{\varpi}}_{n}^{\ast}&\!\left(\lambda\right)\!\!= \!\!\left(\!\frac{2\!-\!\lambda}{b}\mathbf{X}_{n}^{\intercal}\mathbf{X}_{n}\!+\!\lambda I\!\right)^{-1}\!\!\left(\!\frac{2-\lambda}{b}\mathbf{X}_{n}^{\intercal}\mathbf{Y}_{n}\!+\!\lambda\overset{\sim}{\boldsymbol{\omega}^{\ast}}\!\right)
     \label{OpLiPersonal1}
 \\
    \begin{split}
     =&\left(\frac{2-\lambda}{b}\mathbf{X}_{n}^{\intercal}\mathbf{X}_{n}+\lambda I\right)^{-1}\left(\left(\frac{2-\lambda}{b}\mathbf{X}_{n}^{\intercal}\mathbf{X}_{n}\right)\hat{\boldsymbol{u}}_{n} \right. \\
     &+\left. {\lambda}\sum_{n=1}^{N}  \left(\mathbf{X}^{\intercal}\mathbf{X}\right)^{-1}\mathbf{X}_{n}^{\intercal}\mathbf{X}_{n}\left(\hat{\boldsymbol{u}}_{n}+{\mathbf{z}_{n}}\right)\right) ,
    \label{OpLiPersonal3}
    \end{split} 
	\end{align}
	\end{subequations} %
 }%
which is obtained by plugging (\ref{omega*2}) 
 and then~(\ref{noisyestimation1}) in~(\ref{OpLiPersonal1}).
 
Without DP, given $\hat{\boldsymbol{u}}_n,n\in \mathbb{N}$ in~(\ref{estimatedmodel}) and the distributions of $\boldsymbol{\tau}_n$ and $\boldsymbol{\nu}_n$, $\boldsymbol{u}_n^{\ast}$ is given in \cite[Lemma 6]{li2021ditto}:
\begin{equation}
\begin{split}
    \label{un*}
    \boldsymbol{u}_n^{\ast}=&\Phi(\boldsymbol{u}_n^{\ast})(\Phi_n-\zeta^{2}\mathbf{I}_{d})^{-1}\hat{\boldsymbol{u}}_n \\
    &+\Phi(\boldsymbol{u}_n^{\ast})(\Phi_{\setminus n}(\boldsymbol{u}_n^{\ast}))^{-1} \boldsymbol{u}_{\setminus n}(\boldsymbol{\omega}^{\ast})+\boldsymbol{\vartheta}_n
    \,,
    \end{split}
\end{equation}
where $\boldsymbol{\vartheta}_n \sim \mathcal{N}(0,\Phi(\boldsymbol{u}_n^{\ast}))$, and
\begin{subequations}
\begin{align}
    \label{sig1}
    &\Phi_n \triangleq \sigma^{2}\left(\mathbf{X}_{n}^{\intercal}\mathbf{X}_{n}\right)^{-1}+\zeta^{2}\mathbf{I}_{d} ;\\
    \label{sig2}
    &\Phi_{\setminus n}(\boldsymbol{\omega}^{\ast}) \triangleq (\underset{m\in\mathbb{N},m\neq n}{\sum}\Phi_m^{-1})^{-1} ;\\
    \label{sig4}
    &\Phi_{\setminus n}(\boldsymbol{u}_n^{\ast})\triangleq \Phi_{\setminus n}(\boldsymbol{\omega}^{\ast})+\zeta^{2}\mathbf{I}_{d}; \\
    \label{sig5}
    & \Phi(\boldsymbol{u}_n^{\ast}) \triangleq ((\Phi_{\setminus n}(\boldsymbol{u}_n^{\ast}))^{-1}+(\Phi_n-\zeta^{2}\mathbf{I}_{d})^{-1})^{-1}; \\
        \label{sig3}
    &\boldsymbol{u}_{\setminus n}(\boldsymbol{\omega}^{\ast}) \triangleq \Phi_{\setminus n}(\boldsymbol{\omega}^{\ast}) \underset{m\in\mathbb{N},m\neq n}{\sum} \Phi_m^{-1} \hat{\boldsymbol{u}}_m .
\end{align}
\end{subequations}
%
%
%
%
%
After substituting $\mathbf{X}_{n}^{\intercal}\mathbf{X}_{n}=\rho\mathbf{I}_{d}$ first into (\ref{OpLiPersonal3}) and (\ref{sig1}), we then sequentially plug the results of (\ref{sig1}) -- (\ref{sig5}) into the one after. 
Then, \textbf{Lemma \ref{noisy_pL}} follows.

\subsection{Proof of \textbf{Theorem \ref{FAIRNESS}}}
\label{pFAIRNESS}

According to \textbf{Definition \ref{DefFairness}} and (\ref{loss}), the optimal $\lambda^{\ast}$ is
{\small
\begin{equation} 
	\begin{split}
        \label{opLambda1}
        \lambda^{\ast}
        =&\arg\underset{\lambda}{\min}\,\mathbb{E}\left\{\frac{1}{N}\sum_{n=1}^{N}\left[\left(\parallel \boldsymbol{u}_{n}^{\ast}-\overset{\sim}{\boldsymbol{\varpi}_{n}^{\ast}}\left(\lambda\right)\parallel^{2}\right)^{2}\right] \right.\\
        &-\left.\left(\frac{1}{N}\sum_{n=1}^{N}\left[\parallel \boldsymbol{u}_{n}^{\ast}-\overset{\sim}{\boldsymbol{\varpi}_{n}^{\ast}}\left(\lambda\right)\parallel^{2}\right]\right)^{2} \right\} 
        \triangleq \arg\underset{\lambda}{\min}\, R(\lambda)
        \,.
	\end{split}
	\end{equation} 
 }

From (\ref{Optimal_w_n}) and (\ref{Optimal_u_n}), it follows that 
 {\small
	\begin{align}
        \label{subtract}
        &\overset{\sim}{\boldsymbol{\varpi}_{n}^{\ast}}\left(\lambda\right)-\boldsymbol{u}_{n}^{\ast}=\left[\frac{\left(2-\lambda\right)\rho N+b\lambda}{\left(\left(2-\lambda\right)\rho+b\lambda\right)N}-\frac{\sigma^{2}+\rho N\zeta^{2}}{N\sigma^{2}+\rho N\zeta^{2}}\right]\hat{\boldsymbol{u}}_{n} \nonumber \\
        &+\left[\frac{b\lambda}{\left(\left(2-\lambda\right)\rho+b\lambda\right)N}-\frac{\sigma^{2}}{N\sigma^{2}+\rho N\zeta^{2}}\right] \underset{m\in\mathbb{N},m\neq n}{\sum}\hat{\boldsymbol{u}}_{m} \nonumber \\
        &+ \left[\frac{b\lambda}{\left(\left(2-\lambda\right)\rho+b\lambda\right)N}\sum_{n=1}^{N}\mathbf{z}_{n}-\boldsymbol{\vartheta}_{n}\right]
        \triangleq \mathbf{A}_n+\mathbf{B}_n,
	\end{align}
    \begin{equation} 
	\begin{split}
        \label{A}
        \text{where } &\mathbf{A}_n=\frac{\left(N-1\right)\rho\left(\left(2-\lambda\right)\sigma^{2}-b\lambda\zeta^{2}\right)}{N\left(\left(2-\lambda\right)\rho+b\lambda\right)\left(\sigma^{2}+\rho\zeta^{2}\right)}\hat{\boldsymbol{u}}_{n} +\\
        &\frac{\rho\left(b\lambda\zeta^{2}-\left(2-\lambda\right)\sigma^{2}\right)}{N\left(\left(2-\lambda\right)\rho+b\lambda\right)\left(\sigma^{2}+\rho\zeta^{2}\right)}\underset{m\in\mathbb{N},m\neq n}{\sum}\hat{\boldsymbol{u}}_{m}
        \,;
	\end{split}
	\end{equation}
  	\begin{equation} 
        \label{B}
        \mathbf{B}_n=\frac{b\lambda}{\left(\left(2-\lambda\right)\rho+b\lambda\right)N}\sum_{n=1}^{N}{\mathbf{z}_{n}}-\boldsymbol{\vartheta}_{n}
        \,.
	\end{equation} 
T}he $l$-th ($l=1,\cdots,d$) elements of $\mathbf{A}_n$ and $\mathbf{B}_n$ are given~by
{\small
	\begin{equation} 
	\begin{split}
        \label{Anl1}
        A_{nl}
 =\alpha_{1}\left(\lambda\right)\alpha_{nl} \,;
	\end{split}
	\end{equation}
  	\begin{equation} 
        \label{Bnl}
        {B_{nl}}=\frac{b\lambda}{\left(\left(2-\lambda\right)\rho+b\lambda\right)N}\sum_{n=1}^{N}{{z}_{nl}}-{\vartheta}_{nl}
        \,,
	\end{equation}%
 }%
 where $\hat{{u}}_{nl}$, ${z}_{nl}$, and ${\vartheta}_{nl}$ are the $l$-th elements of $\hat{\boldsymbol{u}}_{n}$, $\mathbf{z}_{n}$, and $\boldsymbol{\vartheta}_n$, respectively; ${\vartheta}_{nl} \sim \mathcal{N}(0,\sigma_{w}^{2})$, c.f. (\ref{Optimal_u_n});
 and
  	\begin{equation} 
	\begin{split}
    & \alpha_{1}\left(\lambda\right)=\frac{\rho\left(\left(2-\lambda\right)\sigma^{2}-b\lambda\zeta^{2}\right)}{N\left(\left(2-\lambda\right)\rho+b\lambda\right)\left(\sigma^{2}+\rho\zeta^{2}\right)}  \,;\\
    & \alpha_{nl}=\left(N-1\right)\hat{{u}}_{nl}-\underset{m\in\mathbb{N},m\neq n}{\sum}\hat{{u}}_{ml} \,. \label{alpha} 
	\end{split}
	\end{equation}
 For conciseness, let $\alpha_{0}\left(\lambda\right)=\frac{b\lambda}{\left(2-\lambda\right)\rho+b\lambda}$, then we have
 	\begin{equation} 
	\begin{split}
        \label{alpha1}
        \alpha_{1}\!\left(\lambda\right)\!=\!&\frac{\sigma^{2}}{N\left(\sigma^{2}+\rho\zeta^{2}\right)}-\frac{1}{N}\alpha_{0}\left(\lambda\right) 
        \!=\!S_{1}\!-\!S_{2}\alpha_{0}\left(\lambda\right)
        \,,
	\end{split}
	\end{equation}
where $S_{1}=\frac{\sigma^{2}}{N\left(\sigma^{2}+\rho\zeta^{2}\right)}$, and $S_{2}=\frac{1}{N}$. 
According to (\ref{Bnl}), ${B_{nl}}\sim \mathcal{N}(0,{\sigma_{B}^{2}})$ and ${\sigma_{B}^{2}}=\sigma_{w}^{2}+\left[\alpha_{0}\left(\lambda\right)\right]^{2}\frac{\sigma_{z}^{2}}{N^2}$.
Consider that $\parallel \boldsymbol{u}_{n}^{\ast}-\overset{\sim}{\boldsymbol{\varpi}_{n}^{\ast}}\left(\lambda\right)\parallel^{2}=\sum_{l=1}^{d}\left({u}_{nl}^{\ast}-\overset{\sim}{{\varpi}_{nl}^{\ast}}\left(\lambda\right)\right)^{2}$. 
With (\ref{subtract}), $R(\lambda)$ can be written as
{\small
	   \begin{align}
        R(\lambda)
        =&\mathbb{E}\left\{\frac{1}{N}\sum_{n=1}^{N}\left[\left(\sum_{l=1}^{d}\left(A_{nl}+{{B_{nl}}}\right)^{2}\right)^{2}\right] \right. \nonumber\\
        &-\left.\left(\frac{1}{N}\sum_{n=1}^{N}\left[\sum_{l=1}^{d}\left(A_{nl}+{{B_{nl}}}\right)^{2}\right]\right)^{2} \right\}.
         \label{opLambda2_2}
	\end{align}%
 }%
{Given $\left\{\hat{\boldsymbol{u}}_n,n=1,\cdots,N\right\}$ estimated from samples,} (\ref{opLambda2_2}) can be rewritten as
 {\small
  \begin{subequations} 
	\begin{align}
        \begin{split}
            &R(\lambda)=\mathbb{E}\left\{\sum_{l=1}^d \frac{1}{N}\sum_{n=1}^{N}\left[\left(A_{nl}+{B_{nl}}\right)^4\right]\right. \\
            &+2\sum_{l,l' \in \left[d\right],l \neq l'}\frac{1}{N}\sum_{n=1}^{N}\left[\left(A_{nl}+{B_{nl}}\right)^2\left(A_{nl'}+B_{nl'}\right)^2\right] \\
            &-\sum_{l=1}^d\left(\frac{1}{N}\sum_{n=1}^{N}\left[\left(A_{nl}+{B_{nl}}\right)^2\right]\right)^2- 2\sum_{l,l' \in \left[d\right],l \neq l'}\frac{1}{N} \\
            &\left.\times \sum_{n=1}^{N}\left[\left(A_{nl}+{B_{nl}}\right)^2\right]\frac{1}{N}\sum_{n=1}^{N}\left[\left(A_{nl'}+B_{nl'}\right)^2\right]\right\}
            \label{var0}
        \end{split}
        \\
        \begin{split}
        &=2d{\sigma_{B}^{2}}+4{\sigma_{B}^{2}}\alpha_{1}^{2}\left(\lambda\right)\sum_{l=1}^{d}\frac{1}{N}\sum_{n=1}^{N}\left[\alpha_{nl}^{2}\right]+\alpha_{1}^{4}\left(\lambda\right) \\
        &\frac{1}{N}\!\sum_{n=1}^{N}\left[\left(\sum_{l=1}^{d}\alpha_{nl}^{2}\right)^{2}\right]\!\!-\!\alpha_{1}^{4}\!\left(\!\lambda\!\right)\!\left(\frac{1}{N}\!\sum_{n=1}^{N}\left[\sum_{l=1}^{d}\alpha_{nl}^{2}\right]\right)\!^{2}
        \end{split} 
        \label{var2}
        \\
        &\triangleq 2d{\sigma_{B}^{2}}+4{\sigma_{B}^{2}}\alpha_{1}^{2}\left(\lambda\right)G_{1}+\alpha_{1}^{4}\left(\lambda\right)G_{2}-\alpha_{1}^{4}\left(\lambda\right)G_{1}^{2}
        \,,
        \label{var3}
	\end{align}
	\end{subequations}%
 }%
 where (\ref{var0}) is due to the fact that 
 $\frac{1}{N}\sum_{n=1}^{N}\left[\left(\sum_{l=1}^{d} a_{nl}\right)^{2}\right]=\frac{1}{N}\sum_{n=1}^{N}\left[\sum_{l=1}^{d} a_{nl}^{2}+2\sum_{l,l' \in \left[d\right],l \neq l'} a_{nl} a_{nl'}\right]$, 
 with $a_{nl}=(A_{nl}+B_{nl})^2$; 
  (\ref{var2}) is obtained by taking the expectation of (\ref{var0}) with $B_{nl}\sim \mathcal{N}(0,\sigma_B^2)$~\cite[Eq.~(56)]{li2021ditto} and then plugging (\ref{Anl1}).
 Here, $G_{1}= \sum_{l=1}^{d} \frac{1}{N}\sum_{n=1}^{N}\left[\alpha_{nl}^{2}\right]$; $G_{2}=\frac{1}{N}\sum_{n=1}^{N}\left[\left( \sum_{l=1}^{d} \alpha_{nl}^{2}\right)^{2}\right]$.
By plugging (\ref{alpha1}) and 
{$\sigma_B^2$}
into (\ref{var3}), 
\textbf{Theorem \ref{FAIRNESS}} follows.



\subsection{Proof of \textbf{Theorem \ref{Op_lambda}}}
\label{pOp_lambda}
Based on \textbf{Theorem {\ref{FAIRNESS}}}, the first and second derivatives of $R(\lambda)$ with respect to $\alpha_0$ are given by
{\small
  	\begin{equation} 
	\begin{split}
        \label{1derivative}
        \frac{\partial R}{\partial\alpha_{0}}=&4d\frac{\sigma_{z}^{2}}{N^{2}}\alpha_{0}+8G_{1}\frac{\sigma_{z}^{2}}{N^{2}}\left(S_{1}-S_{2}\alpha_{0}\right)^{2}\alpha_{0} \\
        &-8S_{2}G_{1}\left(\sigma_{w}^{2}+\left[\alpha_{0}\right]^{2}\frac{\sigma_{z}^{2}}{N^{2}}\right)\left(S_{1}-S_{2}\alpha_{0}\right)\\
        &-4S_{2}\left(G_{2}-G_{1}^{2}\right)\left(S_{1}-S_{2}\alpha_{0}\right)^{3};
	\end{split}
	\end{equation}
\begin{equation} 
        \label{2derivative}
        \frac{\partial^{2}R}{\partial\alpha_{0}^{2}}= 
    4D\frac{\sigma_z^2}{N^2}\!+\!8G_{1}\sigma_{w}^{2}S_{2}^{2}\!+\!12S_{2}^{2}\left(G_{2}\!-\!G_{1}^{2}\right)\left(S_{1}\!\!-\!\!S_{2}\alpha_{0}\right)^{2},
	\end{equation} %
 }%
 where $D=d+2G_1\left(6S_{2}^{2}\alpha_{0}^{2}-6S_{1}S_{2}\alpha_{0}+S_{1}^{2}\right)$ for brevity. After reorganization, $D$ can be rewritten as
 {\small
 \begin{subequations}
    \begin{align}
    D
    \label{D_5}
    =&(d-\frac{S_{1}^{2}}{N}\sum_{n=1}^{N}\parallel \alpha_n \parallel^2)+\frac{12G_1(\alpha_0-\frac{1}{2}NS_1)^2}{N^2} \\
        \label{D_6}
    =&(\!d\!-\!\frac{S_{1}^{2}}{N}\!\!\sum_{n=1}^{N}\!\!\parallel\!\!  N\hat{\boldsymbol{u}}_{n}\!\!-\!\!\sum_{m=1}^N \!\!\hat{\boldsymbol{u}}_{m}\!\!\parallel^2)\! +\!\frac{12G_1\!(\alpha_0\!-\!\frac{NS_1}{2})^2}{N^2}
    \\
        \label{D_8}
    >&(\!d\!-\!\!\frac{S_{1}^{2}}{N}\!\!\sum_{n=1}^{N}\!(\parallel\!\!  N\hat{\boldsymbol{u}}_{n}\!\!\parallel\!\!+\!\!\sum_{m=1}^N\!\parallel \!\!\hat{\boldsymbol{u}}_{m}\!\!\parallel)^2)\!\!+\!\!\frac{12G_1\!(\alpha_0\!\!-\!\!\frac{NS_1}{2}\!)\!^2}{N^2}
    \\
        \label{D_9}
    >&(d\!-\!\frac{S_{1}^{2}}{N}\!\sum_{n=1}^{N}\!(NC\!+\!NC)^2) \!+\!\frac{12G_1(\alpha_0\!-\!\frac{N}{2}S_1)^2}{N^2}
    \\
    \label{D_i}
 =&(d-4C^2N^2S_{1}^{2})+\frac{12G_1(\alpha_0-\frac{N}{2}S_1)^2}{N^2} \,,
    \end{align}
\end{subequations}%
}%
 where (\ref{D_6}) is obtained by substituting (\ref{alpha}) into (\ref{D_5}); 
 (\ref{D_8}) is based on the Cauchy–Schwarz inequality.

When $C<\frac{\sqrt{d}}{2NS_1}$ in~(\ref{D_i}), $D>0$ in (\ref{2derivative}) and, in turn, $\frac{\partial^{2}R}{\partial\alpha_{0}^{2}}>0$ in (\ref{2derivative}). 
In other words, $\frac{\partial R}{\partial\alpha_{0}}$ increases monotonically in $\alpha_{0} \in [0,1]$. 
Clearly, $\frac{\partial R}{\partial\alpha_{0} }<0$ when $\alpha_{0}=0$; $\frac{\partial R}{\partial\alpha_{0} }>0$ when $\alpha_{0}=1$. Therefore, $R(\lambda)$ first increases and then decreases in $\alpha_0 \in [0,1]$.
There must be a unique $\alpha_{0}^{\ast} \in [0,1]$ satisfying~\eqref{opAlpha0}.
Since $\alpha_{0}^{\ast}=\frac{b\lambda^{\ast}}{\left(2-\lambda^{\ast}\right)\rho+b\lambda^{\ast}}$ increases monotonically in $\lambda \in [0,2]$, 
with the uniqueness of $\alpha_{0}^{\ast}$, the existence and uniqueness of the optimal $\lambda^{\ast}=\frac{2\rho\alpha_{0}^{\ast}}{\left(1-\alpha_{0}^{\ast}\right)b+\rho\alpha_{0}^{\ast}}$ are confirmed.

\subsection{Proof of \textbf{Corollary \ref{o_fairness}}}
\label{po_fairness}

According to (\ref{2derivative}), when $C<\frac{\sqrt{d}}{2S_0}$, $\frac{\partial^{2}R}{\partial\alpha_{0}^{2}}$ monotonically increases with $\sigma_z^2$; in other words, $\frac{\partial R}{\partial\alpha_{0}}$ grows fast with $\sigma_z^2$, when $C<\frac{\sqrt{d}}{2S_0}$.
On the other hand, when $\alpha_0=0$, $\frac{\partial R}{\partial\alpha_{0}}$ is independent of $\sigma_z^2$, which can be readily concluded by substituting $\alpha_0=0$ into (\ref{2derivative}):
  	\begin{equation} 
	\begin{split}
        \label{1derivative_0}
        \frac{\partial R}{\partial\alpha_{0}}\Big|_{\alpha_0=0}= -8S_1S_{2}G_{1}\sigma_{w}^{2}-4S_1^6S_{2}\left(G_{2}-G_{1}^{2}\right) \,.
	\end{split}
	\end{equation} 
Note that $\lambda^*$ corresponds to $\alpha_{0}^*$, i.e., the solution to $\frac{\partial R}{\partial\alpha_{0}}=0$; in other words, $\alpha_{0}^*$ is the intersection of the curve $\mathcal{F}(\alpha_{0}) = \frac{\partial R}{\partial\alpha_{0}}$ with the $x$-axis. 
Given all $\mathcal{F}(\alpha_{0}) = \frac{\partial R}{\partial\alpha_{0}}$ curves pass $(0,\frac{\partial R}{\partial\alpha_{0}}\big|_{\alpha_0=0})$ under any $\sigma_z^2$ and the slopes of the curves increase with $\sigma_z^2$, $\alpha_{0}^*$ decreases as $\sigma_z^2$ increases. Since $\lambda^{\ast}=\frac{2\rho\alpha_{0}^{\ast}}{\left(1-\alpha_{0}^{\ast}\right)b+\rho\alpha_{0}^{\ast}}$ increases monotonically with $\alpha_{0}^*$, it is concluded that $\lambda^*$ decreases as $\sigma_z^2$ increases.

\bibliography{DittoDP}

\end{document}